\PassOptionsToPackage{table,dvipsnames}{xcolor}
\PassOptionsToPackage{nameinlink,capitalise}{cleveref}

\documentclass[runningheads]{llncs}

\usepackage{eccv}
\usepackage{amsmath,amsfonts,bm}

\def\eqref#1{equation~\ref{#1}}
\def\1{\bm{1}}

\def\vtheta{{\bm{\theta}}}

\def\ve{{\bm{e}}}

\def\vx{{\bm{x}}}

\DeclareMathAlphabet{\mathsfit}{\encodingdefault}{\sfdefault}{m}{sl}
\SetMathAlphabet{\mathsfit}{bold}{\encodingdefault}{\sfdefault}{bx}{n}

\def\gX{{\mathcal{X}}}
\def\gY{{\mathcal{Y}}}

\DeclareMathOperator*{\argmin}{arg\,min}

\usepackage{eccvabbrv}

\usepackage{amsmath,amsfonts,amssymb}
\usepackage{bm}
\usepackage{url}
\usepackage{booktabs}
\usepackage{multirow}
\usepackage{graphicx}
\usepackage{caption}
\usepackage{subcaption}
\usepackage{algorithm}
\usepackage{algpseudocode}
\usepackage{tikz}
\usepackage{pifont}
\usepackage{wrapfig}
\usepackage{placeins}

\newcommand{\method}{\textsc{ReTeX}}

\usepackage[accsupp]{axessibility}

\definecolor{linkred}{RGB}{190,40,40}
\definecolor{citegreen}{RGB}{0,140,90}
\definecolor{urlmagenta}{RGB}{200,0,130}

\usepackage[
    colorlinks=true,
    linkcolor=linkred,
    citecolor=citegreen,
    urlcolor=urlmagenta
]{hyperref}

\usepackage{orcidlink}

\titlerunning{Learning to Recover Task Experts}
\authorrunning{J. Jung et al.}

\begin{document}

% ===============================================================
% Main Paper
% ===============================================================

\title{Learning to Recover Task Experts from a Multi-Task Merged Model}

\author{
Jinwook Jung\inst{1}
\and Taegyu Kim\inst{1}
\and Kumju Jo\inst{1}
\and Sungyong Baik\inst{1,2}\thanks{Corresponding author.}
}

\institute{
$^1$Department of Artificial Intelligence,
$^2$Department of Data Science\\
Hanyang University, Republic of Korea\\
\email{
jjw970517@hanyang.ac.kr,
sa090180@gmail.com,
juice0630@hanyang.ac.kr,
dsybaik@hanyang.ac.kr
}
}

\maketitle

\begin{abstract}

Multi-task model merging aims to consolidate several task-specific experts into a unified model, yet static merging consistently suffers from parameter interference. 
% From the perspective of experts, this interference introduces parameter offsets that leave a persistent performance gap compared to original experts. 
% While dynamic merging can bridge this gap, it relies on the costly storage and loading of redundant expert components at inference. 
While dynamic merging models aim to bridge this gap, many works rely on the costly storage and loading of redundant expert components at inference. 
% In this work, we propose \textbf{Re}cover \textbf{T}ask \textbf{eX}pert (\textbf{\method}), a framework that shifts the paradigm from expert composition to expert recovery. 
% In this work, we propose \textbf{Re}cover \textbf{T}ask \textbf{eX}pert (\textbf{\method}), a framework that shifts the attention to expert recovery. 
In this work, from the perspective of task expert, we view parameter interference as parameter perturbation introduced to each expert during merging process.
We show that such parameter perturbations can be modeled as affine transformation, which can be approximated as additive offsets.
Motivated by these, we propose \textbf{Re}cover \textbf{T}ask \textbf{eX}pert (\textbf{\method{}}), a framework that predicts those offsets, in order to \textit{undo} parameter interference and recover task-expert performance from a single merged checkpoint.
%shifts the attention to expert recovery. 
% We mathematically derive that the merged-to-expert mapping is an affine transformation where the scaling factor typically collapses to identity, motivating a lightweight, offset-only recovery rule. 
% By predicting task-conditioned low-rank parameter offsets, \method ``\textit{undoes}'' the interference inherent in the merged model without storing original expert weights.
% ReTeX trains a compact recovery module using parameter supervision only: given any task ID, it predicts a task-conditioned parameter offset that removes interference from the merged model, without requiring task datasets or input–output examples during ReTeX training.
% By predicting task-conditioned, low-rank parameter offsets, ReTeX undoes the interference inherent in the merged model without storing original expert weights. 
% Furthermore, to enable ReTeX to identify a task and thus recover corresponding expert for a given input, we introduce a router-free identifier based on SVD subspace projection residuals. 
% Furthermore, to enable ReTeX to identify a task and thus recover corresponding expert for a given input, we introduce a router-free identifier based on SVD subspace signatures computed before model merging process.
% At inference, we select the task whose subspace yields the smallest projection residual for an input-dependent signal, avoiding router training and mitigating data storage and privacy concerns.
To recover the appropriate expert when task identity is unknown, we introduce a router-free task identifier based on SVD subspace signatures computed offline before inference.
%from existing expert parameter updates (e.g., $\Delta\theta_t$), avoiding labeled router training and retaining no task data.
At inference, the identifier selects the task whose subspace yields the smallest projection residual for a given input.
%n input-dependent signal.
%derived from~[activations / features / probes].
%
%
%projection residuals. 
% Crucially, we show that ReTeX enables emergent expert generation for out-of-distribution (OOD) tasks. 
% By relaxing task-identity constraints to soft probabilistic distributions, ReTeX adaptively interpolates seen expert knowledge to handle unseen tasks—even when their corresponding experts were unavailable during the merging process.
As a result, \method{} recovers over 95\% of individual-expert performance in both vision and NLP domains, while significantly improving generalization to unseen tasks.
% ReTeX recovers over 99\% of individual-expert performance across 30 tasks in both vision and NLP domains, while significantly improving generalization to unseen tasks.
Crucially, we also show that the parameter offset prediction leads to emergent adaptive interpolation of expert knowledge for out-of-distribution (OOD) tasks. 
\method{} adaptively interpolates seen expert knowledge to handle unseen tasks.
Our code is available at \url{https://github.com/BAIKLAB/ReTeX}

\keywords{Multi-task model merging \and Task expert recovery}

\end{abstract}

\section{Introduction}
Since the advent of foundation models~\cite{achiam2023gpt, saab2024capabilities, ding2023parameter} pre-trained on large-scale data, deep learning has achieved remarkable success across diverse domains by fine-tuning such large pre-trained models for individual downstream tasks.
This paradigm naturally yields a growing collection of task-specific experts.
However, storing an expert for each task separately is costly to deploy at scale due to storage, memory traffic, and management overhead.

Multi-task model merging~\cite{ilharco2023editing, yadav2023ties, yang2024adamerging} has emerged as a promising solution to consolidate multiple experts into a single model by directly manipulating their parameters.
Most early approaches construct a single merged model by weighted combinations of parameters or updates, where task coefficients are often selected by heuristics or hyperparameter tuning~\cite{ilharco2023editing, jin2023dataless, matena2022fisher, yadav2023ties, yang2024adamerging, tang2023concrete, yu2024language}.
However, static merging methods frequently underperform task experts due to parameter interference: task-specific updates collide in weight space, producing model parameters that are suboptimal for multiple tasks.
%deviating from the parameters of task experts.
%producing a single checkpoint that underperforms on many tasks.
% However, such static merging methods struggle to match the performance of the original experts due to parameter interference: task-specific updates collide in weight space, producing a single checkpoint that underperforms on many tasks.
% Several works have circumvented the challenge by finetuning coefficients and thus producing a separate merged model for each task, which however leads to requirement for task identity of each input and storage overhead.

Recently, dynamic merging methods~\cite{oh2025dawin, tang2024merging, lu2024twin, muqeeth2023soft} have been proposed to improve accuracy by selecting or composing task-specific components conditioned on each input, but typically require storing and reading additional task-specific parameters at inference.
%and may incur extra computation for input-dependent routing or coefficients. 
Moreover, several existing methods assume that the task identity is known at inference~\cite{ilharco2023editing, yadav2023ties, huang2024emr},  or train a separate router using substantial labeled task data~\cite{lu2024twin, tang2024merging, ye2025dynamic}, which can introduce additional storage overhead and privacy issues.
\iffalse
To narrow this gap, recent works have shifted attention to dynamic model merging~\cite{oh2025dawin, tang2024merging, lu2024twin, muqeeth2023soft}.
These methods dynamically compose task-specific components (e.g., experts, masks, or task-conditioned branches) based on each input, substantially improving accuracy.
However, this comes with practical costs: the system must store and read multiple task-specific components at inference, and often incurs extra computation to obtain input-dependent coefficients.
Moreover, even with dynamic merging, a non-trivial gap to the original experts can remain.

\textcolor{red}{
Furthermore, several multi-task model merging methods either make task-identity-known assumption or train a router with a large amount of data to provide such task identity.
Leveraging given/found task identity of each input, previous models select the appropriate output head, choose task coefficients for each task, or activate task-specific modules.
However, task identity may be often unknown in practice, while the training of router with a lot of data raises concerns regarding data storage memory overhead and privacy issues.
%and , leading to data o memory overhead for storing data during model merging process, causing .
}
\fi

\begin{figure*}[t!]
    \centering
    % Update the file name to the actual figure path in the project.
    \includegraphics[width=\linewidth]{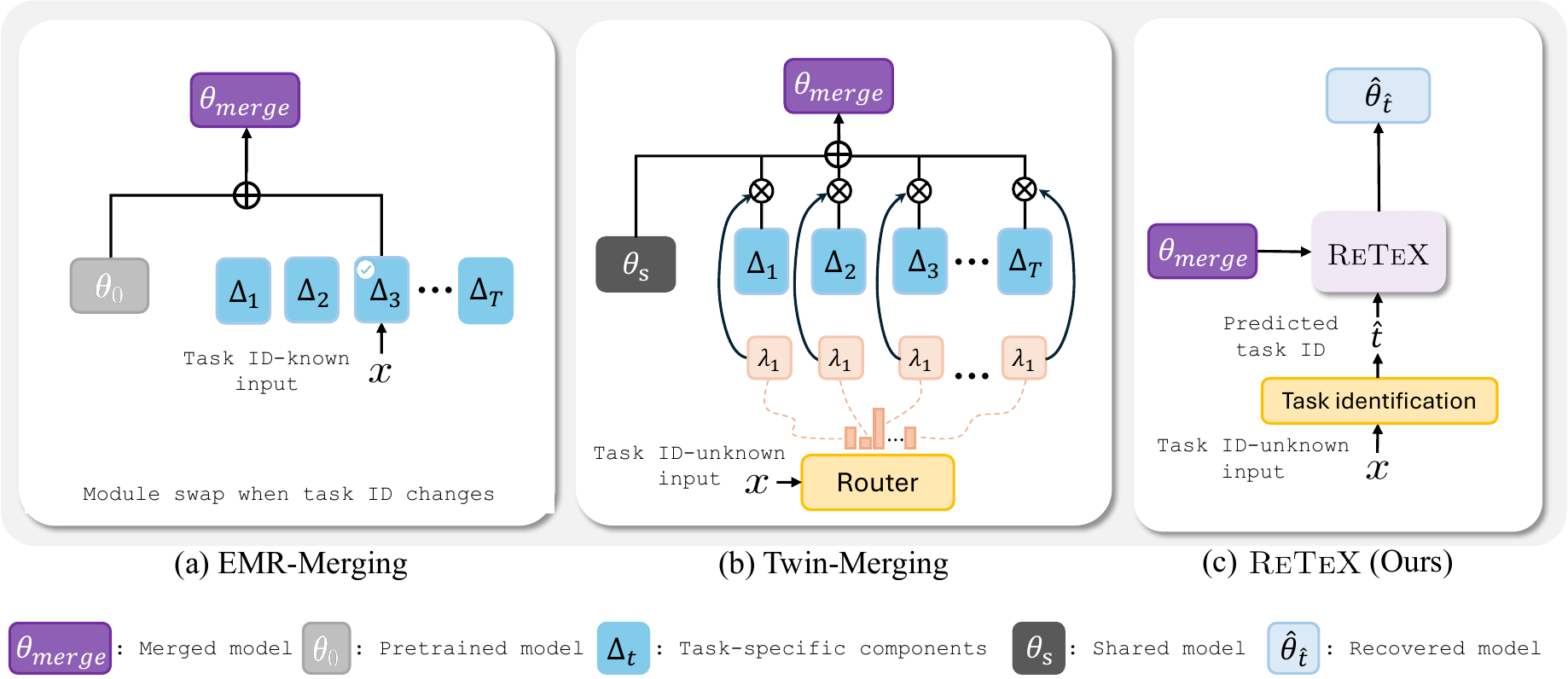}
    \caption{\textbf{Component requirements for multi-task deployment.}
    (a) \emph{EMR-Merging}~\cite{huang2024emr} assumes that the task ID is known and activates task-specific components on top of a shared backbone $\vtheta_0$.
    (b) \emph{Twin-Merging}~\cite{lu2024twin} handles task ID-unknown inputs by using a router to predict input-dependent mixing coefficients over task components, combined with a shared backbone $\vtheta_s$.
    (c) \textcolor{black}{
    \method{} handles task ID-unknown inputs by first predicting the task ID $\hat{t}$ and then recovering $\hat{\vtheta}_{\hat{t}}$ from a single merged checkpoint $\vtheta_{\text{merge}}$ using the predicted offset $\hat{\bm{\beta}}_{\hat{t}}$, without storing expert checkpoints at inference.}
    }
    \label{fig:deployment_comparison}
\end{figure*}

We take an alternative view: we interpret parameter interference as parameter perturbation introduced to each task expert during merging process.
Upon this perspective, we aim to recover task expert performance from a single merged checkpoint. 
We empirically find that the dominant effect of interference can be modeled as an additive offset. 
Thus, we aim to ``undo'' parameter interference and recover task-expert performance by predicting and removing additive offsets.
% Our starting point is the mathematical derivation, which shows that the parameters of each expert parameters can be modeled as an affine function of the merged parameters. 
% We further empirically find that the multiplicative term is often close to identity, suggesting that the dominant effect of interference can be modeled as an additive offset. 
% Intuitively, from the perspective of a task expert, merging process injects task-irrelevant perturbations originating from other experts; recovering the expert corresponds to removing these perturbations.

Based on this insight, we propose \method{}, a plug-and-play expert recovery framework. \method{} augments a merged model with a lightweight recovery module that predicts a task-conditioned parameter offset and applies it to the merged weights to reconstruct task-specialized behavior. 
Importantly, \method{} is trained using parameter supervision only: the supervision signal is provided by existing expert checkpoints (targets are expert parameters), and recovery training does not require access to task datasets or input–output examples.
%, in contrast to the training of standard routers. 
This design keeps recovery training simple and avoids additional data collection or storage.

A key challenge is determining which expert to recover when task identity is unknown.
To this end, we introduce a router-free task identifier based on SVD subspace signatures (singular values and vectors).
Notably, SVD subspace signatures are computed offline before inference.
At inference, we select the task whose subspace yields the smallest projection residual for each input, removing the need for training a router or storing task experts.
% During inference, we identify the task by selecting the subspace that yields the smallest projection residual, eliminating the need for training a router or storing task experts.
Surprisingly, adaptive interpolation of expert knowledge emerges from the proposed parameter offset prediction, when faced with unseen or out-of-distribution tasks.

We further note that our proposed method is a flexible plug-and-play framework that can be seamlessly applied on top of various merged models.
% Because \method{} operates as a post-hoc recovery layer, it can be seamlessly applied on top of various merged models, leveraging stronger base merges to improve recovery especially under tight low-rank budgets.
% Extensive experiments on computer vision and NLP benchmarks demonstrate that \method{} greatly improves the performance of various model merging methods, greatly improving the performance on both seen and unseen tasks and achieving 95\% of individual-expert performance.
Extensive experiments on computer vision and NLP benchmarks demonstrate that \method{} substantially enhances the performance of various model merging methods, greatly improving the performance on both seen and unseen tasks and achieving 95\% of individual-expert performance.
%while scaling to 30 tasks.

We summarize our contributions as follows:
\begin{enumerate}
    \item We introduce {\method}, an expert recovery framework that reconstructs task expert performance from a single merged checkpoint by ``undoing'' parameter interference via {offset prediction}.
    \item We propose a {router-free} task identification method based on {SVD subspace projection residuals}, removing the need for labeled router training and reducing data storage requirements.
    \item We show that \method{} is {plug-and-play} on top of various merged models across vision and NLP, recovering up to {95\%} of expert performance while improving robustness to unseen/OOD tasks.
\end{enumerate}

\section{Related work}
\label{sec:related_work}

\noindent\textbf{Model merging.}
Model merging aims to consolidate multiple task experts into a single merged model by directly combining or editing expert parameters, avoiding joint multi-task retraining.
Early works have employed weighted combination of expert parameters~\cite{matena2022fisher,jin2023dataless} or corresponding task vectors~\cite{ilharco2023editing}, where the goal becomes finding a set of coefficients such that a resulting merged model achieve strong performance on all tasks.
However, simple weighted averaging often results in parameter interference and thus performance degradation, which few works have attempted to tackle by sparsifying or masking task vectors to mitigate conflicts~\cite{yadav2023ties,wang2024localizing}.
\textcolor{black}{A complementary line of work addresses this issue before merging by modifying the fine-tuning stage itself, encouraging task experts to become more mergeable or less interfering in parameter space~\cite{lee2025mitigating, ortiz2023task, iurada2025efficient}.}
Despite efforts, a single shared model still often falls short of per-task experts.
% Weight-space ensembling and averaging provide simple yet strong baselines~\cite{wortsman2022modelsoups}, and static merging is further improved with principled fusion rules such as Fisher-weighted merging~\cite{matena2022fisher} and RegMean~\cite{regmean}, as well as task-vector composition via Task Arithmetic~\cite{ilharco2023editing}.
% To mitigate parameter interference when many tasks are combined, methods such as TIES-Merging~\cite{yadav2023ties} and Consensus TA/TALL-mask~\cite{wang2024localizing} sparsify or mask task updates, but a single shared checkpoint can still fall short of per-task experts as task diversity grows.

\noindent\textbf{Task-conditioned adaptation and meta-learning.}
\textcolor{black}{
HyperNetworks and FiLM generate task- or input-conditioned weights and feature-wise affine modulations~\cite{ha2017hypernetworks,perez2018film}.
Few-shot conditional adaptation methods, such as CNAPs and its follow-up variants, infer task-conditioned parameters from support-set statistics or limited supervision~\cite{requeima2019cnaps,bateni2020improved,bateni2022enhancing,li2022crossdomain,triantafillou2021universal}.
}
Related meta-learning methods learn task-adaptive initialization, attenuation, hyperparameters, losses, or online/test-time adaptation mechanisms~\cite{MAML,baik2020learning,baik2022learning,baik2020alfa,baik2024learning,baik2021metal,choi2020scene,choi2022testtime,choi2022visual}.
\textcolor{black}{
Although these works are related in that they produce task-adaptive behavior, they primarily adapt from support examples, task observations, or gradient-based updates.
}
Although similarly task-adaptive, \method{} has a different goal: it recovers task-expert behavior from an already merged checkpoint using expert-parameter supervision, rather than adapting from support examples.

% \noindent\textbf{Dynamic model merging.}
% Dynamic model merging aims to reduce parameter interference by formulating input-adaptive weighted combination by predicting coefficients for each input via routing at inference time~\cite{muqeeth2023soft, lu2024twin, tang2024merging, oh2025dawin}.
% While these methods have greatly improved the performance, they need a large amount of task datasets to train a routing module or a large number of forward passes to estimate the task ID of each sample.
% Further, they need to save and load multiple task components during inference, increasing memory traffic and complexity that increases with the number of tasks.
% % While often accurate, such methods usually require inference-time access to multiple task components and additional routing computation, increasing memory traffic and system complexity.
% Meanwhile, other works have proposed to store lightweight task-expert, which is achieved by compressing full parameters via masking or sparsification~\cite{huang2024emr, wang2024localizing, gargiulo2024task}.
% These approaches reduce storage for storing task experts, but they assume the knowledge of task ID of each input, limiting the applicability in practice.
\noindent\textbf{Dynamic model merging.}
Dynamic model merging aims to reduce parameter interference by formulating input-adaptive weighted combination by predicting coefficients for each input via routing at inference time~\cite{muqeeth2023soft,lu2024twin,tang2024merging,oh2025dawin}.
While these methods have greatly improved performance, they require large task datasets to train a routing module or a large number of forward passes to estimate the task ID of each sample.
\textcolor{black}{
A related training-free direction, SiM~\cite{sim}, shows that SVD-based task manifolds constructed from a small support set can provide reliable task signals by routing each test input through projection residuals, without additional router training.
Building on this observation, \method{} uses the SVD residual-based task signal not merely to select among stored components, but to condition task-expert recovery from a single merged checkpoint.
}
However, these dynamic or routing-based approaches still require storing and loading multiple task components during inference, increasing memory traffic and complexity as the number of tasks grows.
Meanwhile, other works have proposed storing lightweight task-expert components by compressing full parameters via masking or sparsification~\cite{huang2024emr,wang2024localizing,gargiulo2024task}.
These approaches reduce storage for task experts, but they assume knowledge of the task ID of each input, limiting their applicability in practice.
In contrast to previous dynamic model merging methods that require either task ID of each input or multiple experts during inference, our proposed approach directly recovers task-expert performance from a single merged model.
\section{Background}
\label{sec:background}

\noindent\textbf{Problem setting.}
Given a pre-trained model $f:\gX\times\Theta\to\gY$ with parameters $\vtheta_0\in\Theta$, we consider $T$ downstream tasks indexed by $t\in\{1,\dots,T\}$.
Since many finetuned models have become available in HuggingFace with the advent of pretraining-finetuning paradigm, we make the following assumption, similar to previous works~\cite{muqeeth2023soft, lu2024twin, tang2024merging, oh2025dawin,ilharco2023editing}: for each task $t$, a task expert $f_{\vtheta_t}$ is assumed to be available prior to merging process.
Prior to model merging process, we assume each task expert has been obtained by fine-tuning $f_{\vtheta_0}$ on each task dataset $\mathcal{D}_t=\{(\vx_t^{i}, y_t^{i})\}_{i=1}^{N_t}$ with $\vx_t^{i}\in \gX$ and $y_t^{i}\in\gY$.

Given task experts, multi-task model merging aims to find a single merged model $\vtheta_{\text{merge}}$ that can generalize across all tasks by directly operating on the expert parameters $\{\vtheta_t\}_{t=1}^T$:
\begin{equation}
\vtheta_{\text{merge}} = \sum_{t=1}^T \alpha_t \vtheta_t.
\end{equation}
% such that the resulting model performs well across all tasks.

\noindent\textbf{Task arithmetic.}
To better facilitate the management of task expert knowledge, Task Arithmetic (TA)~\cite{ilharco2023editing} proposes to isolate task-specific knowledge from prior knowledge; $\bm{\tau}_t := \vtheta_t - \vtheta_0$.
Using task vectors, TA and subsequent works formulate merging as
\begin{equation}
    \label{eq:task_arithmetic}
    \vtheta_{\text{merge}} = \vtheta_0 + \sum_{t=1}^T \lambda_t \bm{\tau}_t,
\end{equation}
where $\lambda_t$ is a task coefficient.

\noindent\textbf{Dynamic model merging.}
Despite previous efforts, static model merging is susceptible to parameter interference~\cite{ilharco2023editing}, as it is forced to decide which task-specific model parameter will contribute to the final merged model parameters, which are then fixed for all input data.
However, each data sample can come from varying tasks and thus require varying model expertise.
%Arguing that the accurate prediction for each data sample $\vx$ requires varying model expertise, recent works have shifted their attention to dynamic model merging~\cite{kang2024self, li2023merge, muqeeth2023soft,lu2024twin,oh2025dawin}.
Recently, several works have shifted their attention to dynamic model merging~\cite{kang2024self, li2023merge, muqeeth2023soft,lu2024twin,oh2025dawin}.
Dynamic model merging aims to find input-wise model expertise; in other words, the goal is to find the input-adaptive task coefficients $\lambda_t(\vx)$:
\begin{equation}
    \label{eq:dynamic_merging}
    \vtheta_\text{merge} =  \vtheta_0 + \sum_{t=1}^T \lambda_t(\vx)\bm{\tau}_t.
    %\vtheta' =  \vtheta_0 + \sum_{t=1}^T \alpha_t\bm{\tau}_t,
\end{equation}    
\section{Learning to Recover Task Experts}
\label{sec:learning_recovery_final_corrected}

Previous dynamic model merging methods have greatly improved the performance of a merged model by dynamically composing experts with input-adaptive coefficients at inference time~\cite{oh2025dawin, tang2024merging, lu2024twin, muqeeth2023soft}.
% aim to find task coefficients that improve performance on each task.
% Recent works further improve accuracy by dynamically composing experts with input-adaptive coefficients at inference time~\cite{oh2025dawin, tang2024merging, lu2024twin, muqeeth2023soft}.
While dynamic model merging methods greatly mitigated parameter interference, previous methods often require the knowledge of task ID of each input example~\cite{huang2024emr, wang2024localizing, gargiulo2024task} or require additional training with abundant amount of training examples~\cite{muqeeth2023soft, lu2024twin, tang2024merging}.
% it typically requires storing and reading multiple task-specific components at run time, which increases memory traffic and computation.
% Moreover, a non-trivial performance gap between a merged model and the original task experts often remains.

In this work, we approach model merging from the perspective of each task expert: each task expert experiences parameter perturbations that other experts inject during merging, resulting in parameter interference.
Thus, in this work, we aim to ``undo'' such parameter perturbations and recover a task expert from a single merged model.
% \method{} attributes this persisting gap to structured parameter perturbations that other experts inject during merging.
% Instead of repeatedly composing experts at inference, \method{} targets expert recovery from a single merged model.
As delineated in Appendix~\cref{app:offset_justification}, we show that each task expert parameter $\vtheta_t$ can be expressed as an affine transformation of $\vtheta_\text{merge}$:
\begin{equation}
\label{eq:theta_affine_form}
\vtheta_t
=
\gamma_t\,\vtheta_{\text{merge}}
\;+\;
\bm{\beta}_t,
\end{equation}
% The starting point is an affine recovery rule in parameter space.
% Let the merged task vector satisfy $\bm{\tau}_{\text{merge}}:=\vtheta_{\text{merge}}-\vtheta_0$.
% Then expert parameters admit the affine form
% \begin{equation}
% \label{eq:theta_affine_form}
% \vtheta_t
% \;\approx\;
% \vtheta_0
% \;+\;
% \gamma_t\,\bm{\tau}_{\text{merge}}
% \;+\;
% \bm{\beta}_t,
% \end{equation}
where $\gamma_t$ and $\bm{\beta}_t$ denote task-conditioned scaling and shift terms, which are composed of task coefficients and other task expert parameters.
This form suggests that expert recovery reduces to predicting $(\gamma_t,\bm{\beta}_t)$ conditioned on task identity.
However, our empirical analysis indicates that $\gamma_t$ concentrates near an identity $1$, which motivates an offset-only recovery rule:
\begin{equation}
\label{eq:theta_offset_form}
\vtheta_t
\;\approx\;
\vtheta_{\text{merge}}
\;+\;
\bm{\beta}_t,
\end{equation}
where we interpret $\bm{\beta}_t$ as ``undoing'' parameter perturbations introduced by other task experts to reduce parameter interference and recover task-expert performance for task $t$.

Thus, as illustrated in \cref{fig:overall_framework}, \method{} operates in two stages: (1) task identification that estimates a task ID $\hat{t}$ by nearest-subspace matching using a compact task memory bank (\cref{sec:task_identification}), and (2) task expert recovery that reconstructs inference parameters from $\vtheta_{\text{merge}}$ conditioned on $\hat{t}$ using offset prediction (\cref{sec:task_expert_recovery}).
% \method{} targets task-agnostic inference and follows two stages:
% (1) task identification that estimates a task ID $\hat{t}$ by nearest-subspace matching using a compact task memory bank, and
% (2) task expert recovery that reconstructs inference parameters from $\vtheta_{\text{merge}}$ conditioned on $\hat{t}$ using offset prediction.
% Figure~\ref{fig:overall_framework} illustrates the overall pipeline.

\begin{figure*}[t]
    \centering
    \includegraphics[width=\linewidth]{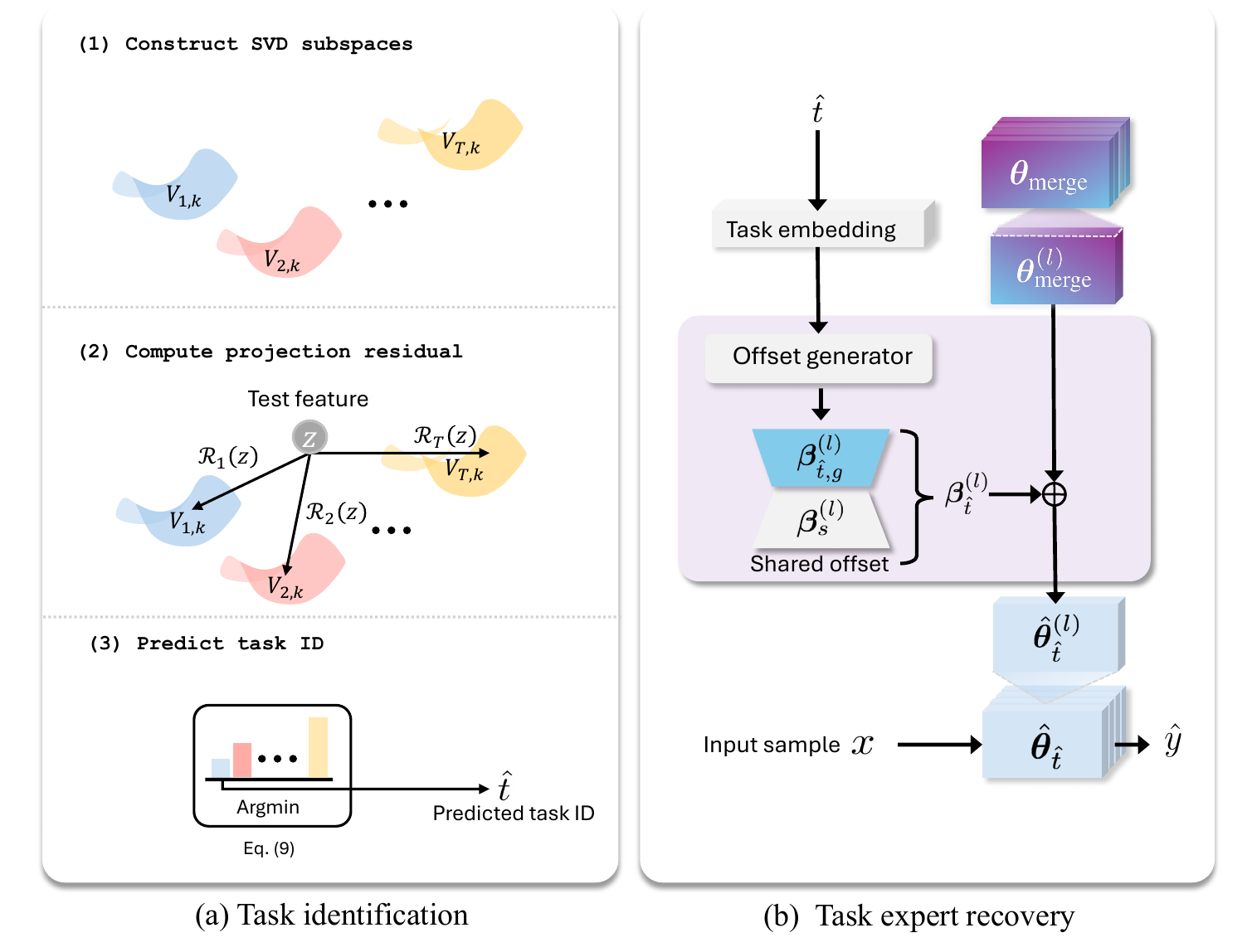}
    \caption{\textbf{Overall \method{} framework.}
    (a) Task identification constructs a task memory bank by fitting a low-dimensional SVD subspace to merged-model representations for each task.
    Given a test representation $\mathbf{z}$, \method{} computes projection residuals $\{\mathcal{R}_t(\mathbf{z})\}_{t=1}^{T}$ and outputs a task ID estimate $\hat{t}$ by an argmin rule.
    (b) Task expert recovery conditions a task embedding on $\hat{t}$ and predicts low-rank offsets across layers.
    Adding the offsets to merged parameters yields inference parameters for prediction.}
    \label{fig:overall_framework}
\end{figure*}

\iffalse
\cref{sec:task_identification} describes task identification and \cref{sec:task_expert_recovery} describes task expert recovery conditioned on the task ID estimate.
\fi

\subsection{Task identification}
\label{sec:task_identification}
To estimate task ID of each, we take inspirations from the nearest-subspace viewpoint used in the Nearest Subspace Classifier (NSC)~\cite{chi2012nsc}.
% Task-agnostic inference requires a task ID estimate for each input.
% The design follows the nearest-subspace viewpoint used in the Nearest Subspace Classifier (NSC)~\cite{chi2012nsc}, which assigns a sample to the class subspace with the smallest projection residual.
In particular, \method{} represents each task with a low-dimensional subspace estimated from merged-model representations.
%and stores these subspaces in a task memory bank.
At inference, \method{} selects the task that minimizes the projection residual.
We note that these subspaces incur much less memory overhead, compared to full task expert parameters or low-rank experts, as demonstrated in experimental results.

\noindent\textbf{Constructing task SVD subspaces.}
Let $\mathbf{z}_t^{i}=f_{\text{merge}}(\vx)\in\mathbb{R}^{d}$ denote an intermediate representation extracted by a merged model $f_{\vtheta_{\text{merge}}}$.
For each task $t\in\{1,\ldots,T\}$, we utilize $N$ reference inputs from original training set 
$\mathcal{D}_t^{\text{mb}}=\{\vx_t^{1},\vx_t^{2},\ldots,\vx_t^{N}\}$
and compute the empirical mean of its representations $\mathbf{z}_t^{i}$:
% \method{} computes the empirical mean
\begin{equation}
\label{eq:mean}
\bm{\mu}_t
=
\frac{1}{N}\sum_{i=1}^{N}\mathbf{z}_t^{i}.
\end{equation}
Then, we compute the feature matrix $\mathbf{X}_t\in\mathbb{R}^{N\times d}$ whose $i$-th row equals
$(\mathbf{z}_t^{i}-\bm{\mu}_t)^\top$,
and compute the SVD
\begin{equation}
\label{eq:singluar}
\mathbf{X}_t
=
\mathbf{U}_t\mathbf{\Sigma}_t\mathbf{V}_t^\top.
\end{equation}
\method{} keeps the top-$k$ right singular vectors as the task subspace basis
$\mathbf{V}_{t,k}\in\mathbb{R}^{d\times k}$ with $k\ll d$.
The set $\bm{\mu}_t + \mathrm{span}(\mathbf{V}_{t,k})$ defines an affine subspace for task $t$.
\method{} stores $\{\bm{\mu}_t,\mathbf{V}_{t,k}\}_{t=1}^{T}$ as task signatures.
Unless otherwise specified, \method{} uses $N=64$ reference inputs per task.

\noindent\textbf{Task identification.}
Given a test input $\vx$, and its feature $\mathbf{z}=f_{\text{merge}}(\vx)$.
% Given a test input $\vx$, \method{} computes $\mathbf{z}=f_{\text{merge}}(\vx)$.
\method{} measures alignment to each task subspace using the projection residual
\begin{equation}
\label{eq:residual}
\mathcal{R}_t(\mathbf{z})
=
\left\|
\left(\mathbf{I}-\mathbf{V}_{t,k}\mathbf{V}_{t,k}^\top\right)
(\mathbf{z}-\bm{\mu}_t)
\right\|_2,
\end{equation}
where $\mathbf{I}\in\mathbb{R}^{d\times d}$ denotes the identity matrix.
The task ID estimate satisfies
\begin{equation}
\label{eq:task_inference}
\hat{t}
=
\argmin_{t\in\{1,\ldots,T\}}
\mathcal{R}_t(\mathbf{z}).
\end{equation}
% For a batch of test samples, \method{} applies \cref{eq:task_inference} independently to each sample representation.

\subsection{Task expert recovery}
\label{sec:task_expert_recovery}

Given the task ID estimate $\hat{t}$ from \cref{sec:task_identification}, \method{} recovers the corresponding task-expert performance by predicting low-rank offsets that would ``undo'' parameter interference as follows:
%and adding them to merged parameters.
\method{} defines inference parameters as
\begin{equation}
\label{eq:theta_hat_t_def}
\vtheta_{\hat{t}}
\;:=\;
\vtheta_{\text{merge}} + \bm{\beta}_{\hat{t}},
\end{equation}
where $\bm{\beta}_{\hat{t}}$ denotes the offset predicted, conditioned on estimated task ID $\hat{t}$.

\noindent\textbf{Task embedding.}
Each task $t$ is represented with a learnable embedding $\ve_t\in\mathbb{R}^{d_{\mathrm{emb}}}$, where $d_{\mathrm{emb}}$ is the dimension of task embedding.
A one-hot task indicator for $\hat{t}$ maps to the embedding $\ve_{\hat{t}}$, which serves as input to the offset generator.

\noindent\textbf{Low-rank offset prediction.}
\method{} adopts a low-rank factorization for offsets independently for each layer $l$.
% \method{} adopts a LoRA-style low-rank factorization for recovery offsets~\cite{hu2022lora}.
% \method{} applies the same parameterization independently to each layer $l$; the following equations describe a generic layer $l$.
Let $\vtheta^{(l)}_{\text{merge}}\in\mathbb{R}^{a\times b}$ denote merged parameters at layer $l$, and choose rank $r<\min(a,b)$.
For the further efficiency, we decompose an offset as follows: 
\begin{equation}
\label{eq:beta_low_rank}
\bm{\beta}^{(l)}_{\hat{t}}
=
\bm{\beta}^{(l)}_{\hat{t},g}\,\bm{\beta}^{(l)}_{s},
\end{equation}
where $\bm{\beta}^{(l)}_{s}\in\mathbb{R}^{r\times b}$ is a learnable offset shared across all task experts, and $\bm{\beta}^{(l)}_{\hat{t},g}\in\mathbb{R}^{a\times r}$ is a task-specific offset predicted by single-layer offset generator $h^{(l)}$ conditioned on $\ve_{\hat{t}}$.
% A lightweight linear predictor $h^{(l)}$ maps $\ve_{\hat{t}}$ to a task-conditioned factor
% $\bm{\beta}^{(l)}_{\hat{t},g}\in\mathbb{R}^{a\times r}$.
% \method{} combines this factor with a shared learnable basis $\bm{\beta}^{(l)}_{s}\in\mathbb{R}^{r\times b}$ to form the layer offset:
Thus, for each layer, task-expert parameters at layer $l$ are approximated as $\vtheta^{(l)}_{\hat{t}}=\vtheta^{(l)}_{\text{merge}}+\bm{\beta}^{(l)}_{\hat{t}}.$ 
% \begin{equation}
% \label{eq:theta_recover_layer}
% \vtheta^{(l)}_{\hat{t}}
% =
% \vtheta^{(l)}_{\text{merge}}+\bm{\beta}^{(l)}_{\hat{t}}.
% \end{equation}
Stacking layers yields $\vtheta_{\hat{t}}=\vtheta_{\text{merge}}+\bm{\beta}_{\hat{t}}$, consistent with \cref{eq:theta_hat_t_def}.

\noindent\textbf{Optimization.}
We note that the training of the offset prediction module only uses parameter supervision, without the need for task datasets of input-output samples or test samples.
The training process of \method{} only requires a randomly sampled task ID $t$ and its corresponding task expert parameters.
In other words, the training objective for \method{} is

% Similar to standard model merging that operates on expert checkpoints, \method{} uses expert parameters as supervision.
% \method{} samples a task ID $t$, forms
% $\vtheta^{(l)}_{t,\mathrm{rec}}:=\vtheta^{(l)}_{\text{merge}}+\bm{\beta}^{(l)}_{t}$,
% and minimizes the discrepancy to corresponding expert parameters:
% \begin{equation}
% \label{eq:retex_training_loss_offset}
% \mathcal{L}
% =
% \sum_{t=1}^{T}\sum_{l=1}^{L}
% \left\|
% \textcolor{blue}{\vtheta^{(l)}_{\hat{t}}}-\vtheta^{(l)}_{t}
% \right\|_{2}^{2}.
% \end{equation}

\textcolor{black}{
\begin{equation}
\label{eq:retex_training_loss_offset}
\mathcal{L}
=
\mathbb{E}_{t \sim \mathcal{U}(1, T)}
\left[
\left\|
(\vtheta_{\text{merge}} + \hat{\bm{\beta}}_{t}) - \vtheta_{t}
\right\|_{2}^{2}
\right],
\end{equation}
where $\mathcal{U}(1, T)$ denotes a uniform distribution over task indices; $\hat{\bm{\beta}}_t$ denotes the predicted task-specific offset for the sampled task $t$; and $\vtheta_t$ acts as ground-truth. 
}

At inference, \method{} does not require storage of task experts: \method{} predicts a task ID $\hat{t}$ and recovers $\vtheta_{\hat{t}}$ on demand from $\vtheta_{\text{merge}}$.
% In this section, we first study efficiency improvements of \method{}, and then evaluate its multi-task merging performance under two inference scenarios: \textit{task-known} and \textit{task-unknown}.
% In the \textit{task-known} scenario (task ID available at inference), which is a standard multi-task model merging setting (to know which output layer to use), we compare with methods that construct a merged model conditioned on a given task: Task Arithmetic~\cite{ilharco2023editing}, TIES-Merging~\cite{yadav2023ties}, and EMR-Merging~\cite{huang2024emr}.
% In the \textit{task-unknown} scenario (task ID not given), we compare with methods that operate without task identity: Weight Averaging, Twin-Merging~\cite{lu2024twin}, and DaWin~\cite{oh2025dawin}.
% \method{} supports both settings: conditioned on given (task-known) or predicted (task-unknown), \method predicts a task offset to recover the corresponding task-expert.
\section{Experiments}
\label{sec:experiments}

We evaluate \method{} in a \textit{task-agnostic} multi-task deployment, where test inputs arrive as a mixed-task stream and per-sample task identity is not provided.
Accordingly, we do not assume task-specific validation or calibration at inference; \method{} predicts task identity and recovers the corresponding expert behavior on demand from a \emph{single} merged checkpoint.
% We report results on vision, NLP, and multimodal, and additionally study unseen-task generalization and efficiency.
We report results on vision, NLP, and additionally study unseen-task generalization and efficiency.

\noindent\textbf{Baselines and comparison scope.}
We group baselines by inference-time component requirements.
\textit{Static model merging} produces a single merged checkpoint and applies it to all inputs, without storing expert checkpoints at inference.
This group includes Weight Averaging, Task Arithmetic~\cite{ilharco2023editing}, TIES-Merging~\cite{yadav2023ties}, Consensus TA~\cite{wang2024localizing} and AdaMerging\cite{yang2024adamerging}.
\textit{Dynamic model merging} performs input-adaptive merging at inference and typically requires access to task-specific components (e.g., expert checkpoints or equivalent representations) during inference.
%to construct the effective model.
This group includes Twin-Merging~\cite{lu2024twin}, DaWin~\cite{oh2025dawin}, MoW-Merging~\cite{ye2025dynamic}, and WEMoE~\cite{tang2024merging}.
We also report Zero-shot (pre-trained model) and Individual experts (original task experts) as lower and upper reference points.
Detailed baseline configurations and protocols are provided in Appendix~\cref{app:experiment_details}.

\noindent\textbf{Implementation details.}
\label{sec:training_setup}
Unless specified otherwise, the base merged model $\vtheta_{\text{merge}}$ is constructed via simple Weight Averaging of task experts, to highlight that \method{} can recover high-fidelity experts even from a simple static merge.
We train the task-conditioned offset predictor for 5000 iterations using Adam~\cite{kingma2014adam-key} with learning rate $2\times 10^{-4}$ and a cosine schedule with 600 warm-up steps.
We note that the training of the offset predictor in \method{} does not require original task datasets of input-output examples.
\method{} training requires parameter supervision only: given any task ID, it predicts a task-conditioned parameter offset to approximate the task expert parameters, which are used as ground-truth.
%that removes interference from the merged model, without requiring task datasets or input–output examples during ReTeX training.

%The objective function for training \method{} is the L2 loss between the reconstructed layer parameters $\hat{\vtheta}_t^{(l)}$ and the task expert layer parameters $\vtheta_t^{(l)}$, as defined in Equation~\ref{eq:retex_training_loss_offset}, summed over all tasks and layers.

\subsection{Vision tasks}
\label{sec:vision_tasks}

\noindent\textbf{Setting.}
We follow the multi-task CLIP merging protocol in~\cite{wang2024localizing}.
We fine-tune a separate expert per dataset on three CLIP~\cite{radford2021llearning} backbones (ViT-B/32, ViT-B/16, ViT-L/14), and evaluate merged models under task-agnostic inference.
The 8-task suite includes 
SUN397~\cite{xiao2016sun}, 
Cars~\cite{krause20133d}, 
RESISC45~\cite{cheng2017remote}, 
EuroSAT~\cite{helber2019eurosat}, 
SVHN~\cite{netzer2011reading}, 
GTSRB~\cite{stallkamp2011german}, 
MNIST~\cite{deng2012mnist}, 
and DTD~\cite{cimpoi2014describing}.
The 14-task suite further adds 
CIFAR100~\cite{krizhevsky2009learning}, 
STL10~\cite{coates2011analysis}, 
Flowers102~\cite{nilsback2008automated}, 
Oxford-IIIT-Pet~\cite{parkhi2012cats}, 
PCAM~\cite{veeling2018rotation}, 
and FER2013~\cite{goodfellow2013challenges}.
The 20-task suite further adds 
EMNIST~\cite{cohen2017emnist}, 
CIFAR10~\cite{krizhevsky2009learning}, 
Food101~\cite{bossard2014food}, 
FashionMNIST~\cite{xiao2017fashion}, 
RenderedSST2~\cite{socher2013recursive}, 
and KMNIST~\cite{clanuwat2018deep}.
To study scalability in the number of tasks, we additionally evaluate a 30-task suite on ViT-B/32 by augmenting the 20-task suite with ten more datasets:
Vegetables~\cite{ahmed2021dcnn}, 
Kvasir-v2~\cite{pogorelov2017kvasir}, 
Intel Images~\cite{bansal2019intel}, 
Weather~\cite{xiao2021classification}, 
Cats and dogs~\cite{cukierskidogs}, 
MangoLeafBD~\cite{ahmed2023mangoleafbd}, 
Beans~\cite{beansdata}, 
Landscape Recognition~\cite{Landscape}, 
Garbage Classification~\cite{cchang_2018}, 
and Fruits-360~\cite{muresan2018fruit}.
We report classification accuracy (\%) on all datasets.

\noindent\textbf{Results on 8/14/20 tasks.}
\begin{table*}[t!]
  \centering
  \caption{\textbf{Multi-task performance of merged models across different CLIP backbones and numbers of tasks.}
  Values in parentheses $_{(\cdot)}$ indicate normalized accuracy (merged / individual).
  All methods are evaluated on 8, 14, and 20 computer vision tasks.}
  \label{tab_vision_main}
  \setlength{\tabcolsep}{4pt}
  \resizebox{\textwidth}{!}{
  \begin{tabular}{l|ccc|ccc|ccc}
    \toprule
    \multirow{2}{*}{\textbf{Method}} 
      & \multicolumn{3}{c|}{\textbf{ViT-B/32}} 
      & \multicolumn{3}{c|}{\textbf{ViT-B/16}} 
      & \multicolumn{3}{c}{\textbf{ViT-L/14}} \\
    \cmidrule(lr){2-4}\cmidrule(lr){5-7}\cmidrule(lr){8-10}
      & \textbf{8 tasks} & \textbf{14 tasks} & \textbf{20 tasks} 
      & \textbf{8 tasks} & \textbf{14 tasks} & \textbf{20 tasks} 
      & \textbf{8 tasks} & \textbf{14 tasks} & \textbf{20 tasks} \\
    \midrule
    Zero-shot    & 48.3 & 57.2 & 56.1 & 55.3 & 61.3 & 59.7 & 64.7 & 68.2 & 65.2 \\
    Individual   & 92.9 & 90.9 & 91.4 & 94.7 & 92.8 & 92.8 & 95.9 & 94.3 & 94.8 \\
    \midrule

    \rowcolor[HTML]{EFEFEF}
    \multicolumn{10}{l}{\textbf{\textit{(Static model merging)}}} \\

    Weight Averaging                          & 66.3$_{(72.1)}$ & 64.3$_{(71.1)}$ & 61.0$_{(67.5)}$ & 72.2$_{(76.6)}$ & 69.5$_{(74.8)}$ & 65.3$_{(70.4)}$ & 79.6$_{(83.2)}$ & 76.7$_{(81.1)}$ & 71.6$_{(75.6)}$ \\
    Task Arithmetic~\cite{ilharco2023editing}  & 70.6$_{(76.3)}$ & 65.3$_{(72.0)}$ & 60.5$_{(66.7)}$ & 75.9$_{(74.4)}$ & 70.5$_{(75.9)}$ & 65.8$_{(70.7)}$ & 85.0$_{(88.6)}$ & 79.4$_{(83.9)}$ & 74.1$_{(78.1)}$ \\
    TIES-Merging~\cite{yadav2023ties}         & 74.5$_{(80.3)}$ & 65.1$_{(71.9)}$ & 62.3$_{(68.8)}$ & 79.4$_{(83.8)}$ & 70.1$_{(75.5)}$ & 66.6$_{(71.7)}$ & 85.8$_{(89.5)}$ & 77.3$_{(81.6)}$ & 72.9$_{(76.9)}$ \\
    Consensus TA~\cite{wang2024localizing}      & 75.0$_{(80.8)}$ & 70.4$_{(77.4)}$ & 65.4$_{(72.0)}$ & 79.4$_{(83.9)}$ & 74.4$_{(79.9)}$ & 69.8$_{(74.9)}$ & 86.3$_{(90.1)}$ & 82.2$_{(86.9)}$ & 79.0$_{(83.2)}$ \\
    AdaMerging~\cite{yang2024adamerging}      & 75.9$_{(81.7)}$ & 70.5$_{(77.3)}$ & 66.2$_{(72.7)}$ & 79.5$_{(83.8)}$ & 73.8$_{(79.2)}$ & 69.6$_{(74.6)}$ & 87.4$_{(94.6)}$ & 82.6$_{(87.2)}$ & 77.8$_{(82.0)}$\\
    \midrule
    \rowcolor[HTML]{EFEFEF}
    \multicolumn{10}{l}{\textbf{\textit{(Dynamic model merging)}}} \\

    Twin-Merging~\cite{lu2024twin}            & 84.0$_{(90.3)}$ & 70.0$_{(76.7)}$ & 57.5$_{(61.8)}$ & 91.4$_{(96.2)}$ & 78.4$_{(83.9)}$ & 63.1$_{(67.0)}$ & 93.7$_{(97.7)}$ & 86.2$_{(91.2)}$ & 74.8$_{(78.6)}$ \\
    DaWin~\cite{oh2025dawin}                  & 89.0$_{(95.3)}$ & 73.8$_{(80.3)}$ & 52.8$_{(57.7)}$ & 87.1$_{(91.9)}$ & 77.8$_{(83.5)}$ & 62.8$_{(67.3)}$ & 91.6$_{(95.5)}$ & 82.6$_{(87.2)}$ & 77.5$_{(81.8)}$ \\
    MoW-Merging~\cite{ye2025dynamic}          & 88.1$_{(94.8)}$ & 83.2$_{(91.5)}$ & 79.3$_{(86.8)}$ & 93.7$_{(98.9)}$ & 79.3$_{(85.5)}$ & 78.2$_{(84.3)}$ & 94.9$_{(99.0)}$ & 78.8$_{(83.6)}$ & 81.8$_{(86.3)}$ \\
    WEMoE~\cite{tang2024merging}            & 90.4$_{(97.3)}$ & 83.1$_{(90.7)}$ & 74.4$_{(81.2)}$ & 93.1$_{(98.2)}$ & 85.6$_{(92.9)}$ & 78.8$_{(84.1)}$ & 94.8$_{(98.8)}$ & 87.0$_{(91.6)}$ & 75.7$_{(79.5)}$ \\ 

    \midrule

    \rowcolor[HTML]{FFF2CC}
    \textbf{\method{} (Ours)}   
      & $\mathbf{92.2}_{(99.3)}$ & $\mathbf{89.8}_{(98.8)}$ & $\mathbf{89.8}_{(98.3)}$
      & $\mathbf{94.2}_{(99.4)}$ & $\mathbf{91.9}_{(99.0)}$ & $\mathbf{91.7}_{(98.4)}$
      & $\mathbf{95.4}_{(99.5)}$ & $\mathbf{93.5}_{(99.1)}$ & $\mathbf{93.6}_{(98.9)}$ \\
    \bottomrule
  \end{tabular}
  }
  
\end{table*}
\cref{tab_vision_main} reports multi-task accuracy across three CLIP backbones and 8/14/20 task suites.
Static merging baselines degrade substantially as the number of tasks increases, consistent with parameter interference under a single fixed checkpoint.
Dynamic merging baselines mitigate interference via input-adaptive composition, but still exhibit degradation under larger suites and require inference-time access to task-specific components.
Across all backbones and task counts, \method{} achieves the strongest performance and closely matches Individual experts, reaching around $98$--$99\%$ normalized accuracy while using a single merged checkpoint plus lightweight recovery parameters.

\noindent\textbf{Results on 30 tasks.}
\begin{table}[t]
\caption{\textbf{Multi-task performance on ViT-B/32 across different numbers of vision tasks.}
Values in parentheses $_{(\cdot)}$ indicate normalized accuracy (merged / individual).}
\centering
\label{tab:30_tasks_vision_vit_b_32}

\resizebox{\linewidth}{!}{
\begin{tabular}{l|cccc}
\toprule
\multicolumn{1}{l|}{\textbf{Method}} & \textbf{8 tasks} & \textbf{14 tasks} & \textbf{20 tasks} & \textbf{30 tasks} \\
\midrule
\multicolumn{1}{l|}{Zero-shot}  & 48.3 & 57.2 & 56.1 & 55.5 \\
\multicolumn{1}{l|}{Individual} & 92.9 & 90.9 & 91.4 & 93.1 \\
\midrule

\rowcolor[HTML]{EFEFEF}
\multicolumn{5}{l}{\textbf{\textit{(Static model merging)}}} \\
\multicolumn{1}{l|}{Weight Averaging}                         & 66.3$_{(72.1)}$ & 64.3$_{(71.1)}$ & 61.0$_{(67.5)}$ & 59.1$_{(64.2)}$ \\
\multicolumn{1}{l|}{Task Arithmetic~\cite{ilharco2023editing}} & 70.6$_{(76.3)}$ & 65.3$_{(72.0)}$ & 60.5$_{(66.7)}$ & 58.0$_{(62.8)}$ \\
\multicolumn{1}{l|}{TIES-Merging~\cite{yadav2023ties}}         & 74.5$_{(80.3)}$ & 65.1$_{(71.9)}$ & 62.3$_{(68.8)}$ & 59.6$_{(64.7)}$ \\
\multicolumn{1}{l|}{Consensus TA~\cite{wang2024localizing}}    & 75.0$_{(80.8)}$ & 70.4$_{(77.4)}$ & 65.4$_{(72.0)}$ & 63.4$_{(68.5)}$ \\
\multicolumn{1}{l|}{AdaMerging~\cite{yang2024adamerging}}      & 75.9$_{(81.7)}$ & 70.5$_{(77.3)}$ & 66.2$_{(72.7)}$ & 56.8$_{(61.3)}$ \\
\midrule

\rowcolor[HTML]{EFEFEF}
\multicolumn{5}{l}{\textbf{\textit{(Dynamic model merging)}}} \\
\multicolumn{1}{l|}{Twin-Merging~\cite{lu2024twin}}            & 84.0$_{(90.3)}$ & 70.0$_{(76.7)}$ & 57.5$_{(61.8)}$ & 60.1$_{(65.2)}$ \\
\multicolumn{1}{l|}{DaWin~\cite{oh2025dawin}}                  & 89.0$_{(95.3)}$ & 73.8$_{(80.3)}$ & 52.8$_{(57.7)}$ & 40.3$_{(42.9)}$ \\
\multicolumn{1}{l|}{MoW-Merging~\cite{ye2025dynamic}}          & 88.1$_{(94.8)}$ & 83.2$_{(91.5)}$ & 79.3$_{(86.8)}$ & 56.4$_{(60.6)}$ \\
\multicolumn{1}{l|}{WEMoE~\cite{tang2024merging}}              & 90.4$_{(97.3)}$ & 83.1$_{(90.7)}$ & 74.4$_{(81.2)}$ & 67.1$_{(72.4)}$ \\
\midrule

\rowcolor[HTML]{FFF2CC}
\multicolumn{1}{l|}{\textbf{\method{} (Ours)}}
& $\mathbf{92.0}_{(99.1)}$ & $\mathbf{89.8}_{(98.8)}$ & $\mathbf{89.4}_{(97.9)}$ & $\mathbf{91.2}_{(97.9)}$ \\
\bottomrule
\end{tabular}
}
\end{table}
\cref{tab:30_tasks_vision_vit_b_32} shows that performance of both static and dynamic merging baselines deteriorates as the task suite scales to 30 tasks.
In contrast, \method{} remains stable and continues to closely match Individual experts, indicating that offset-based expert recovery scales favorably to larger and more diverse task collections.

\subsection{Unseen-task generalization robustness}
\label{sec:ood_generalization}
\begin{table}[t!]
\centering
\caption{\textbf{Evaluation under OOD scenarios.}}
\label{tab:ood_robustness_table_3}

\setlength{\tabcolsep}{4pt}
\resizebox{\linewidth}{!}{
\begin{tabular}{lcc}
\toprule
\textbf{Method} & \textbf{Seen Task Accuracy} & \textbf{Unseen Task Accuracy} \\ 
\midrule
Weight Averaging & 63.7 & 57.8 \\
\rowcolor[HTML]{FFF2CC}
Weight Averaging + \method{} & $\mathbf{90.4}$ & $\mathbf{63.4}$ \\

\midrule
Task Arithmetic~\cite{ilharco2023editing} & 72.2 & 58.5 \\
\rowcolor[HTML]{FFF2CC}
Task Arithmetic + \method{} & $\mathbf{90.4}$ & $\mathbf{66.0}$ \\

\midrule
TIES-Merging~\cite{yadav2023ties} & 75.7 & 57.9 \\
\rowcolor[HTML]{FFF2CC}
TIES-Merging + \method{} & $\mathbf{89.8}$ & $\mathbf{68.7}$ \\

\midrule
AdaMerging~\cite{yang2024adamerging} & 76.6 & 66.8\\ 
\rowcolor[HTML]{FFF2CC}
AdaMerging + \method{} & $\mathbf{90.2}$ & $\mathbf{69.1}$ \\ 

\bottomrule
\end{tabular}
}
\end{table}

\noindent\textbf{Setting.}
This section evaluates whether \method{} maintains performance when evaluation inputs include tasks that are not used to build the merged checkpoint.
Following the protocol in AdaMerging~\cite{yang2024adamerging}, the evaluation uses eight vision datasets.
MNIST~\cite{deng2012mnist} and EuroSAT~\cite{helber2019eurosat} serve as unseen tasks, while the remaining six datasets (Cars~\cite{krause20133d}, DTD~\cite{cimpoi2014describing}, GTSRB~\cite{stallkamp2011german}, RESISC45~\cite{cheng2017remote}, SUN397~\cite{xiao2016sun}, and SVHN~\cite{netzer2011reading}) serve as seen tasks.
The merged checkpoint construction and \method{} recovery training use only seen-task experts, and no unseen-task expert checkpoint is available.
Appendix~\cref{app:experiment_details} provides further details.

\noindent\textbf{Soft recovery beyond one-hot task selection.}
The task identifier in \cref{sec:task_identification} produces a hard task ID by nearest-subspace matching.
For unseen tasks, a hard assignment to a single seen task becomes a brittle approximation.
To enable interpolation across seen experts, this section replaces the one-hot task indicator with a soft mixing vector over seen tasks.
Concretely, projection residuals $\{\mathcal{R}_t(\mathbf{z})\}_{t=1}^{T}$ are converted into mixture weights $\mathbf{p}(\mathbf{z})\in\mathbb{R}^{T}$ (for example, by applying a softmax to the negative residuals), and \method{} conditions offset prediction on $\mathbf{p}(\mathbf{z})$ instead of a one-hot task vector.
To support this behavior without using any task data, recovery training uses soft targets in parameter space: a mixing vector is sampled from a Dirichlet distribution and defines a convex combination of seen experts as the supervision target.

\noindent\textbf{Results.}
\cref{tab:ood_robustness_table_3} reports performance when \method{} is applied on top of multiple merged checkpoints.
\method{} consistently improves accuracy on both seen tasks and unseen tasks across all merging baselines.
Seen-task accuracy increases to around 90\% for every merged model.
Unseen-task accuracy also improves for every baseline, and the best unseen-task performance is obtained by combining \method{} with AdaMerging~\cite{yang2024adamerging}, reaching 69.1\%.
These results support the claim in the introduction that offset-based recovery enables interpolation over seen expertise and improves unseen-task generalization robustness.

\subsection{NLP tasks}
\label{sec:nlp_tasks}

\noindent\textbf{Setting.}
This section evaluates \method{} on a multi-task NLP suite using T5-large as a common pre-trained backbone.
A separate task expert is obtained by fine-tuning the backbone on each task, and merged methods are evaluated under task-agnostic inference.
The 7-task scenario comprises:
(1) QASC~\cite{khot2020qasc},
(2) WikiQA~\cite{yang2015wikiqa},
(3) QuaRTz~\cite{tafjord2019quartz} for question answering,
(4) PAWS~\cite{zhang2019paws} for paraphrase identification,
(5) Story Cloze~\cite{sharma2018tack} for sentence completion,
(6) Winogrande~\cite{sakaguchi2020wino},
and (7) WSC~\cite{levesque2012wsc} for coreference resolution.
All numbers are reported as percentages, and values in parentheses denote normalized accuracy (merged / fine-tuned), similar to \cref{tab:nlpacc}.

\noindent\textbf{Results.}
\begin{table*}[t]
\caption{\textbf{Multi-task performance of merged T5-large across seven NLP tasks.}
Numbers in parentheses $_{(\cdot)}$ denote normalized accuracy (merged / fine-tuned).
Bold values indicate the best performance among merged methods (excluding fine-tuned).}
\centering
\setlength\tabcolsep{4.5pt}
\renewcommand{\arraystretch}{1.2}
\footnotesize
\resizebox{\textwidth}{!}{
\begin{tabular}{l|ccccccc|c}
\toprule
\multicolumn{1}{l|}{\textbf{Method}} & PAWS & QASC & QuaRTz & StoryCloze & WikiQA & Winogrande & WSC & \textbf{Avg.} \\
\midrule
\multicolumn{1}{l|}{Fine-tuned} & 94.4 & 98.9 & 87.8 & 90.8 & 96.0 & 74.7 & 79.2 & 88.8 \\
\midrule

\rowcolor[HTML]{EFEFEF}
\multicolumn{9}{l}{\textbf{\textit{(Static model merging)}}} \\
\multicolumn{1}{l|}{Weight Averaging} & 61.3$_{(64.9)}$ & 82.6$_{(83.5)}$ & 70.5$_{(80.3)}$ & 53.7$_{(59.1)}$ & 63.2$_{(65.8)}$ & 49.7$_{(66.5)}$ & 36.1$_{(45.6)}$ & 59.6$_{(67.1)}$ \\
\multicolumn{1}{l|}{Task Arithmetic~\cite{ilharco2023editing}} & 77.8$_{(82.4)}$ & 96.0$_{(97.1)}$ & 78.6$_{(89.5)}$ & 86.4$_{(95.2)}$ & 59.1$_{(61.6)}$ & 62.3$_{(83.4)}$ & 52.8$_{(66.7)}$ & 73.3$_{(82.5)}$ \\
\multicolumn{1}{l|}{TIES-Merging~\cite{yadav2023ties}} & 81.5$_{(86.3)}$ & 96.2$_{(97.3)}$ & 80.1$_{(91.2)}$ & 83.6$_{(92.1)}$ & 64.9$_{(67.6)}$ & 66.5$_{(89.0)}$ & 65.3$_{(82.4)}$ & 76.9$_{(86.6)}$ \\
\multicolumn{1}{l|}{Consensus TA~\cite{wang2024localizing}} 
& 77.7$_{(82.3)}$ & 67.4$_{(68.1)}$ & 65.5$_{(74.6)}$ & 79.9$_{(88.0)}$ & 90.3$_{(94.1)}$ & 78.9$_{(105.6)}$ & 56.0$_{(70.7)}$ & 73.7$_{(83.0)}$ \\
\multicolumn{1}{l|}{AdaMerging~\cite{yang2024adamerging}}
& 58.5$_{(62.0)}$ & 76.0$_{(76.8)}$ & 75.5$_{(86.0)}$ & 82.1$_{(90.4)}$
& 93.5$_{(97.4)}$ & 65.0$_{(87.0)}$ & 64.4$_{(81.3)}$ & 73.6$_{(82.9)}$ \\
% \multicolumn{1}{l|}{TSV-M~\cite{gargiulo2024task}} & 85.0$_{(90.0)}$ & 92.4$_{(93.4)}$ & 66.0$_{(75.2)}$ & 81.1$_{(89.3)}$ & 88.4$_{(92.1)}$ & 87.0$_{(116.5)}$ & 57.0$_{(72.0)}$ & 79.6$_{(89.6)}$ \\
\midrule

\rowcolor[HTML]{EFEFEF}
\multicolumn{9}{l}{\textbf{\textit{(Dynamic model merging)}}} \\
\multicolumn{1}{l|}{Twin-Merging~\cite{lu2024twin}} & 93.2$_{(98.7)}$ & 96.0$_{(97.1)}$ & 87.1$_{(99.2)}$ & 85.6$_{(94.3)}$ & 77.1$_{(80.3)}$ & 70.8$_{(94.8)}$ & 69.7$_{(88.0)}$ & 82.8$_{(93.2)}$ \\
\multicolumn{1}{l|}{DaWin~\cite{oh2025dawin}} & 85.6$_{(90.7)}$ & 96.1$_{(97.2)}$ & 81.4$_{(92.7)}$ & 73.2$_{(80.6)}$ & 70.2$_{(73.1)}$ & 68.7$_{(92.0)}$ & 57.9$_{(73.1)}$ & 76.2$_{(85.8)}$ \\
\multicolumn{1}{l|}{WEMoE~\cite{tang2024merging}} & 93.4$_{(98.9)}$ & 95.4$_{(96.5)}$ & 74.5$_{(84.9)}$ & 79.6$_{(87.7)}$ & 94.1$_{(98.0)}$ & 90.4$_{(121.0)}$ & 57.0$_{(72.0)}$ & 83.4$_{(93.9)}$ \\
\multicolumn{1}{l|}{MoW-Merging~\cite{ye2025dynamic}} & 94.7$_{(100.3)}$ & 90.2$_{(91.2)}$ & 79.0$_{(90.0)}$ & 83.1$_{(91.5)}$ & 56.6$_{(59.0)}$ & 93.8$_{(125.6)}$ & 51.0$_{(64.4)}$ & 78.3$_{(88.2)}$ \\
\midrule

\rowcolor[HTML]{FFF2CC}
\textbf{\method{} (Ours)} 
& $\mathbf{96.1}_{(101.8)}$ & $\mathbf{97.4}_{(98.5)}$ & $\mathbf{83.5}_{(95.1)}$ & $\mathbf{90.3}_{(99.4)}$ & $\mathbf{95.7}_{(99.7)}$ & $\mathbf{95.0}_{(127.2)}$ & $\mathbf{48.0}_{(60.6)}$ & $\mathbf{86.6}_{(97.5)}$ \\
\bottomrule
\end{tabular}}
\label{tab:nlpacc}
\end{table*}
\cref{tab:nlpacc} reports multi-task performance of merged T5-large across the seven NLP tasks.
Static model merging baselines, including Task Arithmetic~\cite{ilharco2023editing}, TIES-Merging~\cite{yadav2023ties}, and TSV-M~\cite{gargiulo2024task}, show clear degradation relative to fine-tuned experts under multi-task interference.
Dynamic model merging baselines, including Twin-Merging~\cite{lu2024twin}, DaWin~\cite{oh2025dawin}, WEMoE~\cite{tang2024merging}, and MoW-Merging~\cite{ye2025dynamic}, improve over static merging by using input-adaptive composition, but they still fall short of expert-level performance on average.

Across the full suite, \method{} achieves the strongest average performance among merged methods, reaching $86.6$ average accuracy and $97.5\%$ normalized accuracy relative to fine-tuned experts.
\method{} matches or exceeds fine-tuned experts on several tasks such as PAWS and Winogrande, while WSC remains challenging and some baselines obtain higher scores on that task.
Overall, the results indicate that \method{} scales to multi-task NLP settings and recovers expert-level performance from a single merged model with high fidelity.

% \noindent\textbf{Shared Layer generation}

\subsection{Ablation study}
\label{sec:ablation}

\begin{figure*}[t]
  \centering
  \captionsetup[subfigure]{labelformat=empty}

  \begin{subfigure}{0.32\linewidth}
    \centering
    \includegraphics[width=\linewidth]{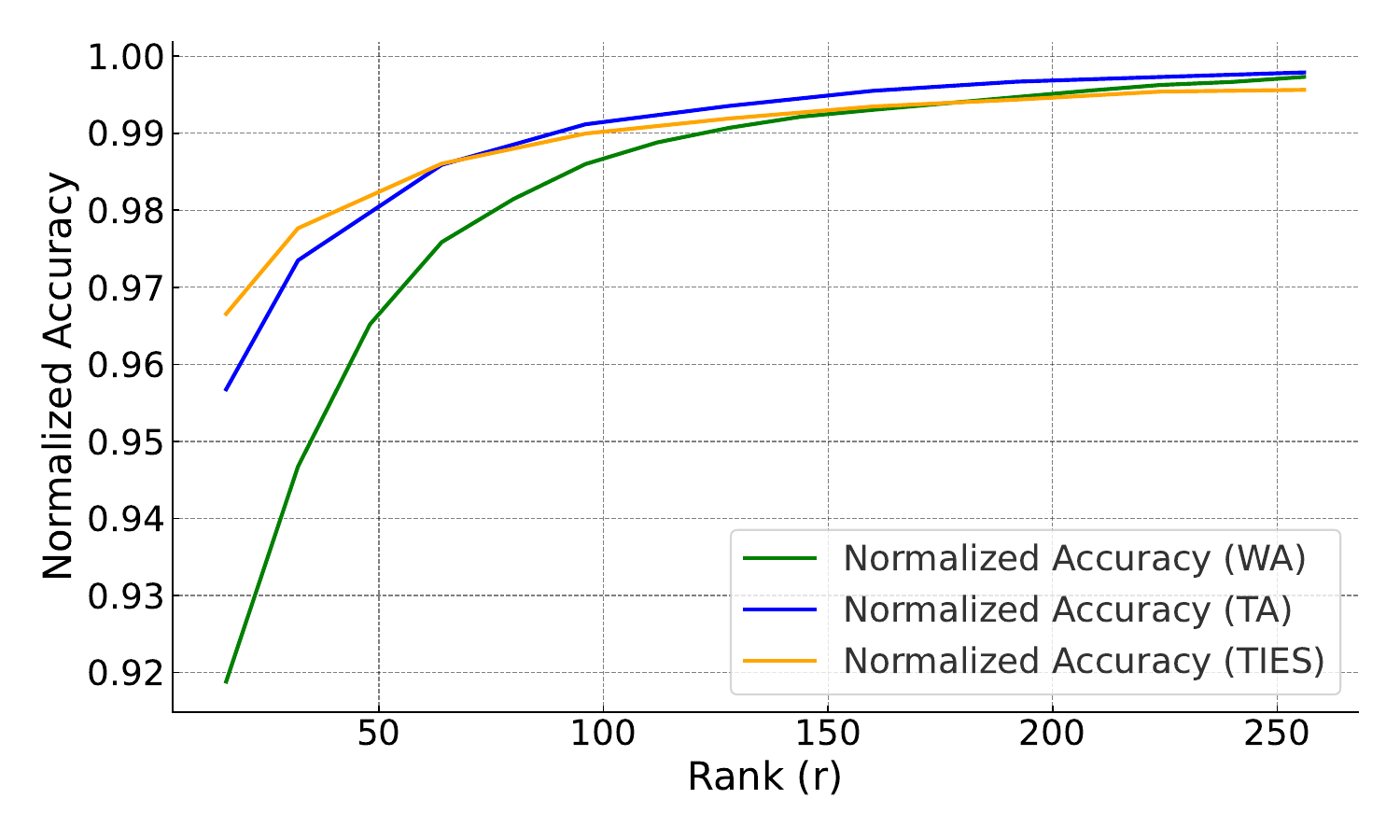}
    \caption{\small (a) Effect of the base merged model across different ranks $r$.}
    \label{fig:normalized_accuracy_methods}
  \end{subfigure}
  \hfill
  \begin{subfigure}{0.32\linewidth}
    \centering
    \includegraphics[width=\linewidth]{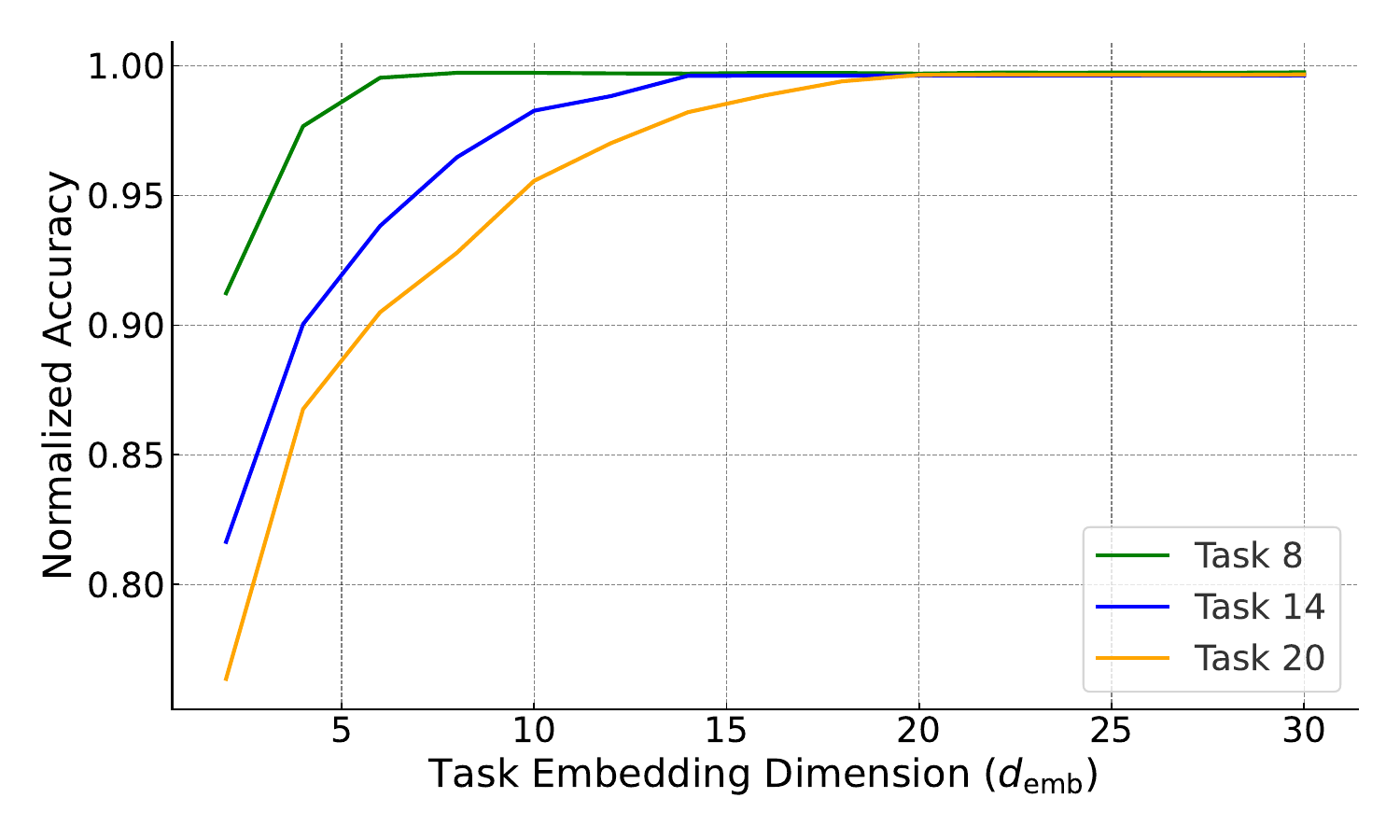}
    \caption{\small (b) Effect of task embedding dimension $d_{\mathrm{emb}}$ for 8/14/20 tasks.}
    \label{fig:task_embedding_vs_accuracy}
  \end{subfigure}
  \hfill
  \begin{subfigure}{0.32\linewidth}
    \centering
    \includegraphics[width=\linewidth]{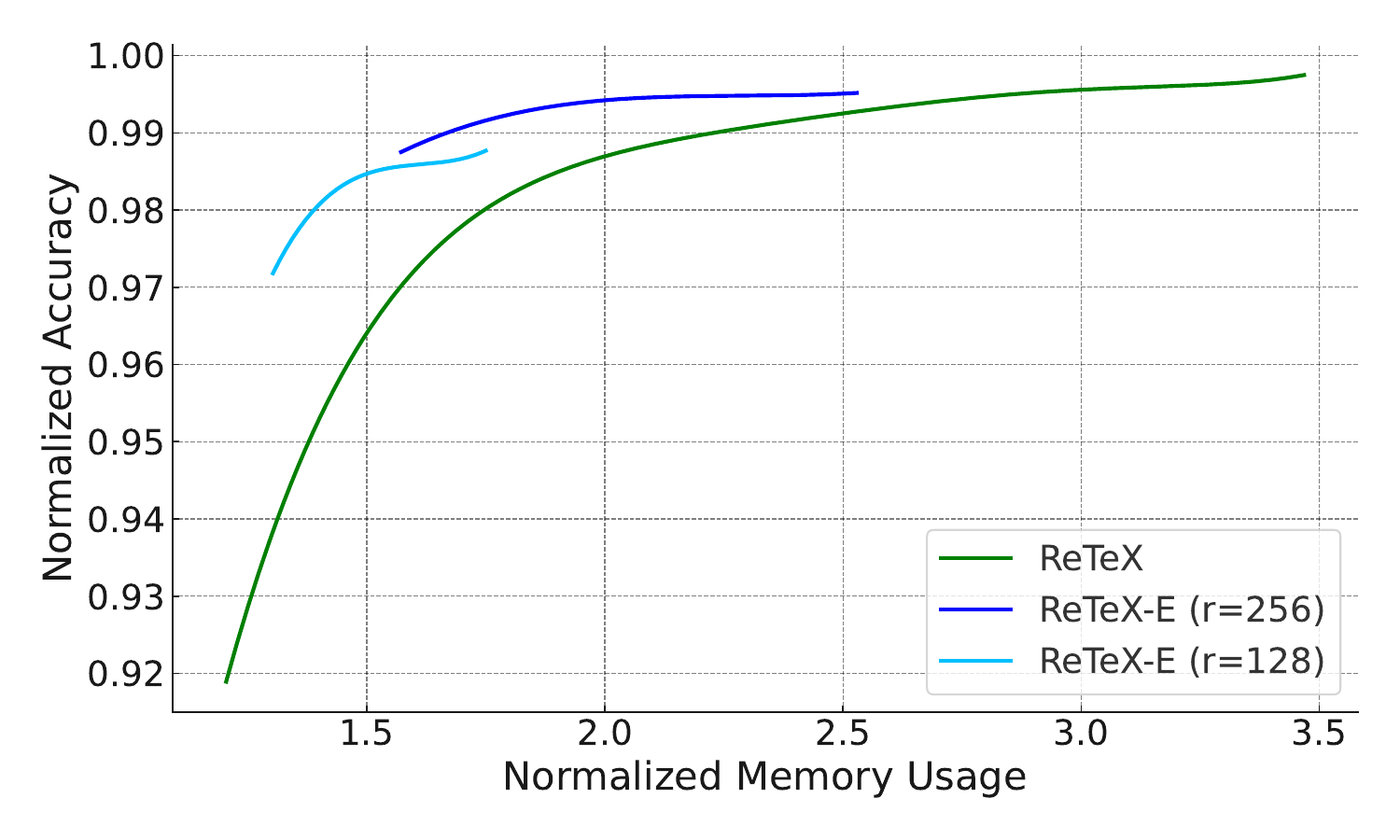}
    \caption{\small (c) Memory--accuracy trade-off of \method{} and \method{}-E at outer rank $r$.}
    \label{fig:advanced_efficiency}
  \end{subfigure}

  \caption{\small
    Performance and efficiency trade-offs of \method{}.
    (a) Normalized accuracy across ranks $r$ for three base merged models (WA, TA, TIES).
    (b) Normalized accuracy as a function of task embedding dimension $d_{\mathrm{emb}}$ for 8, 14, and 20 tasks with ViT-B/32 and fixed $r{=}256$.
    (c) Normalized accuracy vs.\ normalized memory for \method{} and \method{}-E at fixed outer rank $r$ while varying $r_g$.
    Normalized memory is computed as the storage of recovery parameters divided by the storage of all task experts.
  }
  \label{fig:ablation_overview}
\end{figure*}

We conduct ablation studies for further analysis.
Unless specified otherwise, this subsection follows the ViT-B/32 protocol from~\cite{wang2024localizing}.

\noindent\textbf{Integration of \method{} with different base merged models.}
This ablation evaluates whether recovery quality depends on the base merged checkpoint.
\method{} is trained on top of Weight Averaging, Task Arithmetic~\cite{ilharco2023editing}, and TIES-Merging~\cite{yadav2023ties}, while keeping the base merge rule fixed during recovery training.
\cref{fig:normalized_accuracy_methods} shows that using a stronger base merge improves normalized accuracy, with larger gains at smaller recovery ranks $r$.
This indicates that expert recovery benefits from a better initialization in weight space, especially under tight low-rank budgets.

\noindent\textbf{Task embedding dimension.}
This ablation studies the effect of task embedding dimension $d_{\mathrm{emb}}$ on recovery.
We investigate the influence of the task embedding dimension, $d_{\mathrm{emb}}$, on the recovery performance of \method{}.
For this analysis, we fix the rank $r{=}256$ and vary $d_{\mathrm{emb}}$ while evaluating on configurations with 8, 14, and 20 tasks.
\cref{fig:task_embedding_vs_accuracy} shows that normalized accuracy increases with $d_{\mathrm{emb}}$ and then saturates.
High recovery is achieved once $d_{\mathrm{emb}}$ is comparable to the number of tasks, and increasing $d_{\mathrm{emb}}$ beyond this point yields diminishing returns.

\textcolor{black}{\subsection{Efficient low-rank offset factorization}}
\label{sec:advanced_efficiency}

\noindent\textbf{Two-stage offset generation.}
\method{} uses a low-rank offset per layer,
$\bm{\beta}_{t}^{(l)}=\bm{\beta}_{t,g}^{(l)}\bm{\beta}_{s}^{(l)}$,
where $\bm{\beta}_{t,g}^{(l)}\in\mathbb{R}^{a\times r}$ is task-conditioned and
$\bm{\beta}_{s}^{(l)}\in\mathbb{R}^{r\times b}$ is shared.
The dominant parameter cost comes from producing $\bm{\beta}_{t,g}^{(l)}$.

\method{}-E reduces this cost by factorizing the generator output:
$\bm{\beta}_{t,g}^{(l)}=\bm{\beta}_{t,gA}^{(l)}\bm{\beta}_{gB}^{(l)}$ with
$\bm{\beta}_{t,gA}^{(l)}\in\mathbb{R}^{a\times r_g}$,
$\bm{\beta}_{gB}^{(l)}\in\mathbb{R}^{r_g\times r}$, and $r_g<r$.
The final offset becomes
$\bm{\beta}_{t}^{(l)}=(\bm{\beta}_{t,gA}^{(l)}\bm{\beta}_{gB}^{(l)})\bm{\beta}_{s}^{(l)}$.

\noindent\textbf{Results.}
\cref{fig:advanced_efficiency} fixes the outer rank $r$ and varies $r_g$.
\method{}-E improves the memory--accuracy trade-off and maintains high recovery quality over a wide range of $r_g$.
The corresponding inference memory usage is reported in \cref{tab:cost}.

\subsection{Computational cost}
\label{sec:cost}
\begin{table}[t]
  \centering
  \renewcommand{\arraystretch}{}
  \caption{\textbf{Inference cost with CLIP ViT-B/32.}
  Per-input inference latency and peak VRAM usage in the 8-task vision setting with task-ID unknown, together with average accuracy (Avg.) on ViT-B/32, measured on an NVIDIA GeForce RTX 3090.}
  \label{tab:cost}

  \resizebox{\linewidth}{!}{
  \begin{tabular}{l|ccc}
    \toprule
    \textbf{Method} & \textbf{Inference cost (per input)} & \textbf{VRAM (GB)} & \textbf{Avg.} \\
    \midrule

    \rowcolor[HTML]{EFEFEF}
    \multicolumn{4}{l}{\textbf{\textit{(Static model merging)}}} \\
    Weight Averaging                            & 0.0004s & 1.3 & 66.3 \\
    Task Arithmetic~\cite{ilharco2023editing}  & 0.0004s & 1.3 & 70.8 \\
    TIES-Merging~\cite{yadav2023ties}          & 0.0004s & 1.3 & 75.1 \\
    Consensus TA~\cite{wang2024localizing}     & 0.0004s & 1.3 & 75.0 \\
    \midrule

    \rowcolor[HTML]{EFEFEF}
    \multicolumn{4}{l}{\textbf{\textit{(Dynamic model merging)}}} \\
    Twin-Merging~\cite{lu2024twin}             & 0.03s  & 3.2 & 84.0 \\
    DaWin~\cite{oh2025dawin}                   & 0.63s  & 5.5 & 89.0 \\
    WEMoE~\cite{tang2024merging}               & 0.02s  & 4.3 & 90.4 \\
    MoW-Merging~\cite{ye2025dynamic}           & 0.008s & 4.9 & 88.1 \\
    \midrule

    \rowcolor[HTML]{EFEFEF}
    \multicolumn{4}{l}{\textbf{\textit{(Component selection)}}} \\
    ID-LoRA~\cite{hu2022lora}                  & 0.0009s & 1.9 & 89.3 \\
    ID-EMR-Merging~\cite{huang2024emr}         & 0.004s  & 1.9 & 90.5 \\
    \midrule

    \rowcolor[HTML]{FFF2CC}
    \textbf{\method{} (Sample-wise)}           & 0.05s   & 2.5 & 92.2 \\
    \rowcolor[HTML]{FFF2CC}
    \textbf{\method{} (Group: $B=64$)}         & 0.0015s & 2.7 & 92.2 \\
    \rowcolor[HTML]{FFF2CC}
    \textbf{\method{}-E (Sample-wise)}         & 0.07s   & 1.8 & 92.0 \\
    \rowcolor[HTML]{FFF2CC}
    \textbf{\method{}-E (Group: $B=64$)}       & 0.0015s & 2.0 & 92.0 \\
    \bottomrule
  \end{tabular}
  }
\end{table}

\noindent\textbf{Batch inference.}
\method{} supports efficient batched execution by grouping samples that share the same predicted task ID $\hat{t}$ (\cref{sec:task_identification}).
Given a batch of $B$ test inputs, task predictions $\{\hat{t}_i\}_{i=1}^{B}$ are computed and then:
(i) partition indices by task ID,
$\mathcal{I}_t = \{ i \in \{1,\dots,B\} : \hat{t}_i = t \}$,
and for each $t$ with $|\mathcal{I}_t| > 0$ recover the corresponding expert once from the single merged checkpoint;
(ii) run each sub-batch $X_{\mathcal{I}_t}$ through the recovered expert and scatter outputs back to the original order.
This reduces the number of recoveries to the number of unique task IDs in the batch, $K \le \min(B, T)$.

\noindent\textbf{Component selection.}
Component-based multi-task deployment stores a shared backbone together with task-specific components and selects components conditioned on task identity.
To test whether task identification plus component selection is sufficient for a task-unknown stream, per-task LoRA adapters and EMR-Merging components are evaluated by selecting components using predicted task IDs from \cref{sec:task_identification}.
\textcolor{black}{For a fair latency comparison, ID-LoRA and ID-EMR-Merging are also evaluated under the same grouped inference protocol as \method{}.}
\textcolor{black}{After predicting task IDs, samples with the same predicted task ID are grouped and processed together with batch size $B=64$.}
\textcolor{black}{The reported latency is the per-input inference cost, obtained by normalizing the measured grouped latency by the batch size.}
In \cref{tab:cost}, these baselines use the prefix ID-.

This comparison evaluates an alternative deployment path that combines task identification with compact components, and clarifies when expert recovery provides additional value beyond component selection.

\noindent\textbf{Results.}
\cref{tab:cost} reports latency and peak VRAM in the 8-task vision setting without task IDs with CLIP ViT-B/32.
Latency includes task identification, offset generation, and the forward pass with the recovered parameters.
Static merging baselines are fastest (0.0004s per input) with low VRAM (1.3GB), but accuracy remains low (66.3--75.1 Avg.).
Dynamic merging baselines improve accuracy (84.0--90.4) but incur higher VRAM (4.3--5.6GB) and additional latency (0.008--0.63s) due to inference-time access to task components.

Component selection baselines reduce VRAM to 1.9GB and run quickly (0.0009--0.004s), but accuracy remains below \method{} (89.3--90.5 vs.\ 92.2 Avg.) and deployment still depends on storing and selecting task-specific components.

\method{} achieves the best accuracy (92.2 Avg.) with 2.5GB VRAM in the sample-wise setting.
With grouped inference at $B{=}64$, \method{} reduces latency to 0.0015s per input while maintaining accuracy, with a small VRAM increase to 2.7GB.
\method{}-E further reduces VRAM (1.8GB sample-wise; 2.0GB grouped) with minor accuracy change.

\section{Conclusion}
%In this work, we aim to recover task-expert-level performance while reducing memory usage overhead.
In this work, instead of storing task experts during inference, we aim to recover task experts directly from a merged model, retrieving task-expert-level performance while removing the need for storing experts during inference.
We note that the task-specific offset between the task-expert parameters and the merged model parameters can be recovered from the merged model.
Building upon this, we introduce a new model merging approach that learns to \textbf{Re}cover \textbf{T}ask \textbf{eX}perts (\textbf{\method{}}) from a merged model by predicting these offsets.
Particularly, our framework first estimates the task identity for a given input.
Conditioned on the estimated task identity, our framework generates a low-rank, task-dependent offset that maps the merged parameters to the corresponding expert for that input.
Experimental results across vision and NLP domains highlight the effectiveness of \method{} in recovering task-expert-level performance while reducing memory overhead, compared to previous input-adaptive merging methods.
We hope that this work encourages future research on the relationship between a merged model and task-specific models.
%, and on more efficient approaches to model merging via offset recovery.

% ---------------------------------------------------------------
% Acknowledgements
% ---------------------------------------------------------------

\section*{Acknowledgements}

This work was supported by the Institute of Information and Communications
Technology Planning and Evaluation (IITP) grant
(No. RS-2025-25422680, No. RS-2020-II201373,
Artificial Intelligence Graduate School Program (Hanyang University))
and the National Research Foundation of Korea (NRF) grant funded by the
Korea government (MSIT) (No. RS-2025-24533064).

% ===============================================================
% References
% ===============================================================

\FloatBarrier

\bibliographystyle{splncs04}
\bibliography{main}

@String(CVPR  = {IEEE Conf. Comput. Vis. Pattern Recog.})

@String(ICCV  = {Int. Conf. Comput. Vis.})

@String(ECCV  = {Eur. Conf. Comput. Vis.})

@String(NeurIPS = {Adv. Neural Inform. Process. Syst.})

@String(ICML  = {Int. Conf. Mach. Learn.})

@String(ICLR  = {Int. Conf. Learn. Represent.})

@String(AAAI  = {AAAI})

@String(CVPR  = {CVPR})

@String(ICCV  = {ICCV})

@String(ECCV  = {ECCV})

@String(NeurIPS = {NeurIPS})

@String(ICML  = {ICML})

@String(ICLR  = {ICLR})

@inproceedings{ilharco2023editing,
title={Editing models with task arithmetic},
author={Ilharco, Gabriel and Ribeiro, Marco Tulio and Wortsman, Mitchell and Schmidt, Ludwig and Hajishirzi, Hannaneh and Farhadi, Ali},
booktitle={ICLR},
year={2023}
}

@inproceedings{yadav2023ties,
title={TIES-Merging: Resolving Interference When Merging Models},
author={Yadav, Prateek and Tam, Derek and Choshen, Leshem and Raffel, Colin and Bansal, Mohit},
booktitle={NeurIPS},
year={2023}
}

@inproceedings{matena2022fisher,
  title={Merging models with Fisher-weighted averaging},
  author={Matena, Michael S.~and Raffel, Colin A.},
  booktitle={NeurIPS},
  year={2022}
}

@inproceedings{yang2024adamerging,
  title={AdaMerging: Adaptive Model Merging for Multi-Task Learning},
  author={Yang, Enneng and Wang, Zhenyi and Shen, Li and Liu, Shiwei and Guo, Guibing and Wang, Xingwei and Tao, Dacheng},
  booktitle={ICLR},
  year={2024}
}

@inproceedings{huang2024emr,
  title={EMR-Merging: Tuning-Free High-Performance Model Merging},
  author={Huang, Chenyu and Ye, Peng and Chen, Tao and He, Tong and Yue, Xiangyu and Ouyang, Wanli},
  booktitle={NeurIPS},
  year={2024}
}

@article{achiam2023gpt,
  title={Gpt-4 technical report},
  author={Achiam, Josh and Adler, Steven and Agarwal, Sandhini and Ahmad, Lama and Akkaya, Ilge and Aleman, Florencia Leoni and Almeida, Diogo and Altenschmidt, Janko and Altman, Sam and Anadkat, Shyamal and others},
  journal={arXiv preprint arXiv:2303.08774},
  year={2023},
}

@article{saab2024capabilities,
    author = {Saab, Khaled and Tu, Tao and Weng, Wei-Hung and Tanno, Ryutaro and Stutz, David and Wulczyn, Ellery and Zhang, Fan and Strother, Tim and Park, Chunjong and Vedadi, Elahe and others},
    title = {Capabilities of gemini models in medicine},
    journal = {arXiv preprint arXiv:2404.18416},
    year = {2024}
}

@article{ding2023parameter,
  title={Parameter-efficient fine-tuning of large-scale pre-trained language models},
  author={Ding, Ning and Qin, Yujia and Yang, Guang and Wei, Fuchao and Yang, Zonghan and Su, Yusheng and Hu, Shengding and Chen, Yulin and Chan, Chi-Min and Chen, Weize and others},
  journal={Nature Machine Intelligence},
  year={2023},
}

@inproceedings{lu2024twin,
  title={Twin-merging: Dynamic integration of modular expertise in model merging},
  author={Lu, Zhenyi and Fan, Chenghao and Wei, Wei and Qu, Xiaoye and Chen, Dangyang and Cheng, Yu},
  booktitle={NeurIPS},
  year={2024}
}

@article{xiao2016sun,
  title={Sun database: Exploring a large collection of scene categories},
  author={Xiao, Jianxiong and Ehinger, Krista A. and Hays, James and Torralba, Antonio and Oliva, Aude},
  journal={International Journal of Computer Vision},
  year={2016},
}

@inproceedings{krause20133d,
  title={3d object representations for fine-grained categorization},
  author={Krause, Jonathan and Stark, Michael and Deng, Jia and Fei-Fei, Li},
  booktitle={ICCV Workshop},
  year={2013}
}

@article{cheng2017remote,
  title={Remote sensing image scene classification: Benchmark and state of the art},
  author={Cheng, Gong and Han, Junwei and Lu, Xiaoqiang},
  journal={Proceedings of the IEEE},
  year={2017},
}

@article{helber2019eurosat,
  title={Eurosat: A novel dataset and deep learning benchmark for land use and land cover classification},
  author={Helber, Patrick and Bischke, Benjamin and Dengel, Andreas and Borth, Damian},
  journal={IEEE Journal of Selected Topics in Applied Earth Observations and Remote Sensing},
  year={2019},
}

@inproceedings{netzer2011reading,
  title={Reading digits in natural images with unsupervised feature learning},
  author={Netzer, Yuval and Wang, Tao and Coates, Adam and Bissacco, Alessandro and Wu, Baolin and Ng, Andrew Y. and others},
  booktitle={NIPS Workshop},
  year={2011},
}

@inproceedings{stallkamp2011german,
  title={The German traffic sign recognition benchmark: a multi-class classification competition},
  author={Stallkamp, Johannes and Schlipsing, Marc and Salmen, Jan and Igel, Christian},
  booktitle={IJCNN},
  year={2011},
}

@article{deng2012mnist,
  title={The mnist database of handwritten digit images for machine learning research [best of the web]},
  author={Deng, Li},
  journal={IEEE signal processing magazine},
  year={2012},
}

@inproceedings{cimpoi2014describing,
  title={Describing textures in the wild},
  author={Cimpoi, Mircea and Maji, Subhransu and Kokkinos, Iasonas and Mohamed, Sammy and Vedaldi, Andrea},
  booktitle={CVPR},
  year={2014}
}

@inproceedings{coates2011analysis,
  title={An analysis of single-layer networks in unsupervised feature learning},
  author={Adam Coates and Andrew Ng and Honglak Lee},
  booktitle={AISTATS},
  year={2011}
}

@article{krizhevsky2009learning,
  title={Learning multiple layers of features from tiny images},
  author={Alex Krizhevsky and Geoffrey Hinton and others},
  year={2009},
  publisher={Toronto, ON, Canada}
}

@inproceedings{nilsback2008automated,
  title={Automated Flower Classification over a Large Number of Classes},
  author={Maria-Elena Nilsback and Andrew Zisserman},
  booktitle={ICVGIP},
  year={2008}
}

@inproceedings{parkhi2012cats,
  title={Cats and dogs},
  author={Omkar M Parkhi and Andrea Vedaldi and Andrew Zisserman and C. V. Jawahar},
  booktitle={CVPR},
  year={2012}
}

@inproceedings{veeling2018rotation,
  title={Rotation Equivariant CNNs for Digital Pathology},
  author={Bastiaan S. Veeling and Jasper Linmans and Jim Winkens and Taco Cohen and Max Welling},
  booktitle={MICCAI},
  year={2018}
}

@inproceedings{goodfellow2013challenges,
  title={Challenges in Representation Learning: A Report on Three Machine Learning Contests},
  author={Ian J. Goodfellow and Dumitru Erhan and Pierre Luc Carrier and Aaron Courville and Mehdi Mirza and Ben Hamner and Will Cukierski and Yichuan Tang and David Thaler and Dong-Hyun Lee and Yingbo Zhou and Chetan Ramaiah and Fangxiang Feng and Ruifan Li and Xiaojie Wang and Dimitris Athanasakis and John Shawe-Taylor and Maxim Milakov and John Park and Radu Ionescu and Marius Popescu and Cristian Grozea and James Bergstra and Jingjing Xie and Lukasz Romaszko and Bing Xu and Zhang Chuang and Yoshua Bengio},
  booktitle={ICONIP},
  year={2013}
}

@inproceedings{cohen2017emnist,
  title={EMNIST: Extending MNIST to handwritten letters},
  author={Gregory Cohen and Saeed Afshar and Jonathan Tapson and André van Schaik},
  booktitle={IJCNN},
  year={2017}
}

@inproceedings{bossard2014food,
  title = {Food-101 -- Mining Discriminative Components with Random Forests},
  author = {Lukas Bossard and Matthieu Guillaumin and Luc Van Gool},
  booktitle = {ECCV},
  year = {2014}
}

@article{xiao2017fashion,
  title={Fashion-MNIST: a Novel Image Dataset for Benchmarking Machine Learning Algorithms},
  author={Han Xiao and Kashif Rasul and Roland Vollgraf},
  journal={arXiv preprint arXiv:1708.07747},
  year={2017}
}

@inproceedings{socher2013recursive,
  title = {Recursive Deep Models for Semantic Compositionality Over a Sentiment Treebank},
  author = {Richard Socher and Alex Perelygin and Jean Wu and Jason Chuang and Christopher D. Manning and Andrew Ng and Christopher Potts},
  booktitle = {EMNLP},
  year = {2013}
}

@inproceedings{clanuwat2018deep,
    author={Tarin Clanuwat and Mikel Bober-Irizar and Asanobu Kitamoto and Alex Lamb and Kazuaki Yamamoto and David Ha},
    title={Deep Learning for Classical Japanese Literature},
    booktitle = {NeurIPS},
    year = {2018}
}

@inproceedings{wang2024localizing,
title={Localizing Task Information for Improved Model Merging and Compression},
author={Ke Wang and Nikolaos Dimitriadis and Guillermo Ortiz-Jimenez and François Fleuret and Pascal Frossard},
booktitle={ICML},
year={2024},
}

@inproceedings{perez2018film,
  title={Film: Visual reasoning with a general conditioning layer},
  author={Perez, Ethan and Strub, Florian and De Vries, Harm and Dumoulin, Vincent and Courville, Aaron},
  booktitle={AAAI},
  year={2018}
}

@inproceedings{shoemake1985animating,
author={Ken Shoemake},
title={Animating rotation with quaternion curves},
booktitle={SIGGRAPH},
year={1985}
}

@inproceedings{utans1996weight,
author={Joachim Utans},
title={Weight averaging for neural networks and local resampling schemes},
booktitle={AAAI Workshop},
year={1996},
}

@inproceedings{yu2024language,
  title={Language models are super mario: Absorbing abilities from homologous models as a free lunch},
  author={Yu, Le and Yu, Bowen and Yu, Haiyang and Huang, Fei and Li, Yongbin},
  booktitle={ICML},
  year={2024}
}

@article{muqeeth2023soft,
  title={Soft merging of experts with adaptive routing},
  author={Muqeeth, Mohammed and Liu, Haokun and Raffel, Colin},
  journal={Transactions on Machine Learning Research},
  year={2023}
}

@inproceedings{oh2025dawin,
title={DaWin: Training-free Dynamic Weight Interpolation for Robust Adaptation},
author={Changdae Oh and Yixuan Li and Kyungwoo Song and Sangdoo Yun and Dongyoon Han},
booktitle=ICLR,
year={2025},
}

@InProceedings{tang2024merging,
title = {Merging Multi-Task Models via Weight-Ensembling Mixture of Experts},
author = {Anke Tang and Li Shen and Yong Luo and Nan Yin and Lefei Zhang and Dacheng Tao},
booktitle = {ICML},
year = {2024},
}

@inproceedings{jin2023dataless,
title={Dataless Knowledge Fusion by Merging Weights of Language Models},
author={Jin, Xisen and Ren, Xiang and Preotiuc-Pietro, Daniel and Cheng, Pengxiang},
booktitle={ICLR},
year={2023}
}

@inproceedings{ortiz2023task,
title={Task arithmetic in the tangent space: Improved editing of pre-trained models},
author={Ortiz-Jimenez, Guillermo and Favero, Alessandro and Frossard, Pascal},
booktitle={NeurIPS},
year={2023}
}

@inproceedings{davari2024model,
  title={Model breadcrumbs: Scaling multi-task model merging with sparse masks},
  author={Davari, MohammadReza and Belilovsky, Eugene},
  booktitle={ECCV},
  year={2024},
}

@inproceedings{kang2024self,
    author={Kang, Junmo and Karlinsky, Leonid and Luo, Hongyin and Wang, Zhen and Hansen, Jacob and Glass, James and Cox, David and Panda, Rameswar and Feris, Rogerio and Ritter, Alan},
    title={Self-moe: Towards compositional large language models with self-specialized experts},
    booktitle = {ICLR},
    year = {2025}
}

@inproceedings{li2023merge,
    author={Li, Pingzhi and Zhang, Zhenyu and Yadav, Prateek and Sung, Yi-Lin and Cheng, Yu and Bansal, Mohit and Chen, Tianlong},
    title={Merge, then compress: Demystify efficient smoe with hints from its routing policy},
    booktitle = {ICLR},
    year = {2024}
}

@article{tang2023concrete,
  title={Concrete subspace learning based interference elimination for multi-task model fusion},
  author={Tang, Anke and Shen, Li and Luo, Yong and Ding, Liang and Hu, Han and Du, Bo and Tao, Dacheng},
  journal={arXiv preprint arXiv:2312.06173},
  year={2023}
}

@inproceedings{kingma2014adam-key,
  author={Kingma, Diederik P},
    title ={Adam: A method for stochastic optimization} ,
    booktitle ={ICLR} ,
    year = {2015}
}

@inproceedings{radford2021llearning,
    author={Radford, Alec and Kim, Jong Wook and Hallacy, Chris and Ramesh, Aditya and Goh, Gabriel and Agarwal, Sandhini and Sastry, Girish and Askell, Amanda and Mishkin, Pamela and Clark, Jack and Krueger, Gretchen and Sutskever, Ilya},
    title={Learning Transferable Visual Models From Natural Language Supervision},
    booktitle={ICML},
    year={2021} 
}

@article{xiong2024multi,
  title={Multi-task model merging via adaptive weight disentanglement},
  author={Xiong, Feng and Cheng, Runxi and Chen, Wang and Zhang, Zhanqiu and Guo, Yiwen and Yuan, Chun and Xu, Ruifeng},
  journal={arXiv preprint arXiv:2411.18729},
  year={2024}
}

@article{muresan2018fruit,
  title={Fruit recognition from images using deep learning},
  author={Muresan, Horea and Oltean, Mihai},
  journal={Acta Universitatis Sapientiae, Informatica},
  year={2018}
}

@misc{cchang_2018,
	title={Garbage Classification},
	howpublished={\url{https://www.kaggle.com/ds/81794}},
	DOI={10.34740/KAGGLE/DS/81794},
	publisher={Kaggle},
	author={CCHANG},
	year={2018},
    note = {Accessed: 2026-06-30}
}

@misc{Landscape,
  author = {DeepNets},
  title = {Landscape Recognition},
  publisher = {Kaggle},
  howpublished = {\url{https://www.kaggle.com/datasets/utkarshsaxenadn/landscape-recognition-image-dataset-12k-images}},
  note = {Accessed: 2026-06-30}
}

@misc{beansdata,
    author="Makerere AI Lab",
    title="Bean disease dataset",
    year="2020",
    url="https://github.com/AI-Lab-Makerere/ibean/",
    note = {Accessed: 2026-06-30}
}

@article{ahmed2023mangoleafbd,
  title={MangoLeafBD: A comprehensive image dataset to classify diseased and healthy mango leaves},
  author={Ahmed, Sarder Iftekhar and Ibrahim, Muhammad and Nadim, Md and Rahman, Md Mizanur and Shejunti, Maria Mehjabin and Jabid, Taskeed and Ali, Md Sawkat},
  journal={Data in Brief},
  year={2023},
}

@article{cukierskidogs,
  title={Dogs vs. cats, 2013},
  author={Cukierski, Will},
  journal={URL https://kaggle.com/competitions/dogs-vs-cats},
  note = {Accessed: 2026-06-30}
}

@article{xiao2021classification,
  title={Classification of weather phenomenon from images by using deep convolutional neural network},
  author={Xiao, Haixia and Zhang, Feng and Shen, Zhongping and Wu, Kun and Zhang, Jinglin},
  journal={Earth and Space Science},
  year={2021},
}

@article{bansal2019intel,
  title={Intel image classification},
  author={Bansal, Puneet},
  journal={Available on https://www. kaggle. com/puneet6060/intel-image-classification, Online},
  year={2019},
  note = {Accessed: 2026-06-30}
}

@inproceedings{pogorelov2017kvasir,
  title={Kvasir: A multi-class image dataset for computer aided gastrointestinal disease detection},
  author={Pogorelov, Konstantin and Randel, Kristin Ranheim and Griwodz, Carsten and Eskeland, Sigrun Losada and de Lange, Thomas and Johansen, Dag and Spampinato, Concetto and Dang-Nguyen, Duc-Tien and Lux, Mathias and Schmidt, Peter Thelin and others},
  booktitle={ACM MMSys},
  year={2017}
}

@inproceedings{ahmed2021dcnn,
  title={Dcnn-based vegetable image classification using transfer learning: A comparative study},
  author={Ahmed, M Israk and Mamun, Shahriyar Mahmud and Asif, Asif Uz Zaman},
  booktitle={ICCCSP},
  year={2021},
}

@inproceedings{bateni2020improved,
  title={Improved few-shot visual classification},
  author={Bateni, Peyman and Goyal, Raghav and Masrani, Vaden and Wood, Frank and Sigal, Leonid},
  booktitle={CVPR},
  year={2020}
}

@inproceedings{bateni2022enhancing,
  title={Enhancing few-shot image classification with unlabelled examples},
  author={Bateni, Peyman and Barber, Jarred and Van de Meent, Jan-Willem and Wood, Frank},
  booktitle={WACV},
  year={2022}
}

@inproceedings{MAML,
  title={Model-agnostic meta-learning for fast adaptation of deep networks},
  author={Finn, Chelsea and Abbeel, Pieter and Levine, Sergey},
  booktitle={ICLR},
  year={2017},
}

@inproceedings{
ha2017hypernetworks,
title={HyperNetworks},
author={David Ha and Andrew M. Dai and Quoc V. Le},
booktitle={ICLR},
year={2017},
}

@inproceedings{hu2022lora,
  title={LoRA: Low-Rank Adaptation of Large Language Models},
  author={Hu, Edward J. and Shen, Yelong and Wallis, Phillip and Allen-Zhu, Zeyuan and Li, Yuanzhi and Wang, Shean and Wang, Lu and Chen, Weizhu},
  booktitle={ICLR},
  year={2022}
}

@inproceedings{requeima2019cnaps,
  title={Fast and Flexible Multi-Task Classification using Conditional Neural Adaptive Processes},
  author={Requeima, James and Gordon, Jonathan and Bronskill, John and Nowozin, Sebastian and Turner, Richard E.},
  booktitle={NeurIPS},
  year={2019}
}

@inproceedings{li2022crossdomain,
  title={Cross-domain Few-shot Learning with Task-specific Adapters},
  author={Li, Wei-Hong and Liu, Xialei and Bilen, Hakan},
  booktitle={CVPR},
  year={2022}
}

@inproceedings{triantafillou2021universal,
  title={Learning a Universal Template for Few-shot Dataset Generalization},
  author={Triantafillou, Eleni and Larochelle, Hugo and Zemel, Richard and Dumoulin, Vincent},
  booktitle={ICML},
  year={2021}
}

@inproceedings{chi2012nsc,
  title={Connecting the Dots in Multi-Class Classification: From Nearest Subspace to Collaborative Representation},
  author={Chi, Yuejie and Porikli, Fatih},
  booktitle={CVPR},
  year={2012},
  pages={3602--3609}
}

@article{ye2025dynamic,
  title={Dynamic model merging with mixture of weights},
  author={Ye, Peng and Huang, Chenyu and Shen, Mingzhu and Chen, Tao and Huang, Yongqi and Ouyang, Wanli},
  journal={IEEE Transactions on Circuits and Systems for Video Technology},
  year={2025},
}

@inproceedings{khot2020qasc,
title={Qasc: A dataset for question
answering via sentence composition},
author={Tushar Khot and Peter Clark and Michal Guerquin and Peter Jansen and Ashish Sabharwal},
booktitle= {AAAI},
year={2020}
}

@inproceedings{yang2015wikiqa,
title={WikiQA: A challenge dataset for open-domain question
answering},
author={Yi Yang and Wen-tau Yih and Christopher Meek},
booktitle={ACL},
year={2015}
}

@inproceedings{tafjord2019quartz,
title={Quartz: An open-domain dataset of qualitative
relationship questions},
author={Oyvind Tafjord and Matt Gardner and Kevin Lin and Peter Clark},
booktitle={EMNLP},
year={2019}
}

@inproceedings{zhang2019paws,
title={PAWS: Paraphrase Adversaries from Word Scrambling},
author={Yuan Zhang and Jason Baldridge and Luheng He},
booktitle={NAACL},
year={2019}
}

@inproceedings{sharma2018tack,
title={Tackling the story ending biases
in the story cloze test},
author={Rishi Sharma and James Allen and Omid Bakhshandeh and Nasrin Mostafazadeh},
booktitle={ACL},
year={2018}
}

@inproceedings{sakaguchi2020wino,
title={Winogrande: An adversarial winograd
schema challenge at scale},
author={Keisuke Sakaguchi and Ronan Le Bras and Chandra Bhagavatula and Yejin Choi},
booktitle={AAAI},
year={2020}
}

@inproceedings{levesque2012wsc,
title={The winograd schema challenge},
author={Hector Levesque and Ernest Davis and Leora Morgenstern},
booktitle={KR},
year={2012}
}

@inproceedings{gargiulo2024task,
    author={Gargiulo, Antonio Andrea and Crisostomi, Donato and Bucarelli, Maria Sofia and Scardapane, Simone and Silvestri, Fabrizio and Rodolà, Emanuele},
    title={Task Singular Vectors: Reducing Task Interference in Model Merging},
    booktitle = {CVPR},
    year = {2025}
}

@article{mohanty2016plantdisease,
  author= {Sharada P. Mohanty and David P. Hughes and Marcel Salath{\'e}},
  title = {Using Deep Learning for Image-Based Plant Disease Detection},
  journal= {Frontiers in Plant Science},
  year = {2016}
}

@article{kermany2018identifying,
  author    = {Daniel S. Kermany and Michael Goldbaum and Wenjia Cai and Carolina C. S. Valentim and Huiying Liang and Sally L. Baxter and Alex McKeown and Ge Yang and Xiaokang Wu and Fangbing Yan and Justin Dong and Madeleine Prasadha and Jacqueline Pei and Maggie Ting and Jie Zhu and Christopher Li and Sierra Hewett and Jason Dong and Ian Ziyar and Alexander Shi and Runze Zhang and Kashyap Gupta and Roger M. Y. Wong and Louis A. T. C. Lam and Josephine Cheung and Markus Tsoi and Valerie Wu and Christina Yan and Congjiao Huang and Dennis H. Lee and Yanning Zhang and Jung W. Wong and K. Maggie Li and Dan L. Zhang and Xun Xu and Yefeng Zheng and Lily Zhang},
  title     = {Identifying Medical Diagnoses and Treatable Diseases by Image-Based Deep Learning},
  journal   = {Cell},
  year      = {2018}
}

@misc{grassknoted_asl_alphabet,
  author= {Akash Nagaraj},
  title= {ASL Alphabet},
  howpublished = {\url{https://www.kaggle.com/datasets/grassknoted/asl-alphabet}},
  note = {Accessed: 2026-06-30}
}

@article{xia2017aid,
  author    = {Gui-Song Xia and Jingwen Hu and Fan Hu and Baoyuan Shi and Xiang Bai and Yongchao Zhong and Liangpei Zhang and Xiaoqiang Lu},
  title     = {AID: A Benchmark Data Set for Performance Evaluation of Aerial Scene Classification},
  journal   = {IEEE Transactions on Geoscience and Remote Sensing},
  year      = {2017},
}

@techreport{wah2011cub,
  author      = {Catherine Wah and Steve Branson and Peter Welinder and Pietro Perona and Serge Belongie},
  title       = {The Caltech-UCSD Birds-200-2011 Dataset},
  institution = {California Institute of Technology},
  year        = {2011},
}

@techreport{griffin2007caltech256,
  author      = {Gregory Griffin and Alex Holub and Pietro Perona},
  title       = {Caltech-256 Object Category Dataset},
  institution = {California Institute of Technology},
  year        = {2007},
}

@misc{corrado_animals10,
  author       = {Corrado Alessio},
  title        = {Animals-10},
  howpublished = {\url{https://www.kaggle.com/datasets/alessiocorrado99/animals10}},
  note = {Accessed: 2026-06-30}
}

@article{cuadros2009eyepacs,
  author  = {Jorge Cuadros and George Bresnick},
  title   = {EyePACS: An Adaptable Telemedicine System for Diabetic Retinopathy Screening},
  journal = {Journal of Diabetes Science and Technology},
  year    = {2009},
}

@inproceedings{khosla2011fgvc,
  author    = {Aditya Khosla and Nityananda Jayadevaprakash and Bangpeng Yao and Li Fei-Fei},
  title     = {Novel Dataset for Fine-Grained Image Categorization},
  booktitle = {CVPR},
  year      = {2011}
}

@inproceedings{quattoni2009indoor,
  author    = {Ariadna Quattoni and Antonio Torralba},
  title     = {Recognizing Indoor Scenes},
  booktitle = {CVPR},
  year      = {2009},
}

@misc{howard2019imagenette,
  author       = {Jeremy Howard},
  title        = {Imagenette: A Smaller Subset of 10 Easily Classified Classes from ImageNet},
  year         = {2019},
  howpublished = {\url{https://github.com/fastai/imagenette}},
  note = {Accessed: 2026-06-30}
}

@article{fei2007caltech101,
  author  = {Li Fei-Fei and Rob Fergus and Pietro Perona},
  title   = {Learning Generative Visual Models from Few Training Examples: An Incremental Bayesian Approach Tested on 101 Object Categories},
  journal = {Computer Vision and Image Understanding},
  year    = {2007},
}

@inproceedings{li2017reliable,
  author    = {Shan Li and Weihong Deng and JunPing Du},
  title     = {Reliable Crowdsourcing and Deep Locality-Preserving Learning for Expression Recognition in the Wild},
  booktitle = {CVPR},
  year      = {2017},
}

@article{maji2013aircraft,
  author  = {Subhransu Maji and Esa Rahtu and Juho Kannala and Matthew Blaschko and Andrea Vedaldi},
  title   = {Fine-Grained Visual Classification of Aircraft},
  journal = {arXiv preprint arXiv:1306.5151},
  year    = {2013}
}

@article{olsen2019deepweeds,
  author  = {Alex Olsen and Dmitry A. Konovalov and Bronson Philippa and Peter Ridd and Jake C. Wood and Jamie Johns and Wesley Banks and Benjamin Girgenti and Owen Kenny and James Whinney and Brendan Calvert and Mostafa {Rahimi Azghadi} and Ronald D. White},
  title   = {DeepWeeds: A Multiclass Weed Species Image Dataset for Deep Learning},
  journal = {Scientific Reports},
  year    = {2019}
}

@inproceedings{vanhorn2015nabirds,
  author    = {Grant Van Horn and Steve Branson and Ryan Farrell and Scott Haber and Jessie Barry and Panos Ipeirotis and Pietro Perona and Serge Belongie},
  title     = {Building a Bird Recognition App and Large Scale Dataset With Citizen Scientists: The Fine Print in Fine-Grained Dataset Collection},
  booktitle = {CVPR},
  year      = {2015},
}

@inproceedings{kumar2012leafsnap,
  author    = {Neeraj Kumar and Peter N. Belhumeur and Arijit Biswas and David W. Jacobs and W. John Kress and Ida C. Lopez and Jo{\~a}o V. B. Soares},
  title     = {Leafsnap: A Computer Vision System for Automatic Plant Species Identification},
  booktitle = {ECCV},
  year      = {2012}
}

@inproceedings{choi2020starganv2,
  author    = {Yunjey Choi and Youngjung Uh and Jaejun Yoo and Jung-Woo Ha},
  title     = {StarGAN v2: Diverse Image Synthesis for Multiple Domains},
  booktitle = {CVPR},
  year      = {2020},
}

@article{liao2022artbench,
  author  = {Peiyuan Liao and Xiuyu Li and Xihui Liu and Kurt Keutzer},
  title   = {The ArtBench Dataset: Benchmarking Generative Models with Artworks},
  journal = {arXiv preprint arXiv:2206.11404},
  year    = {2022}
}

@inproceedings{lee2025mitigating,
  author    = {Yeoreum Lee and Jinwook Jung and Sungyong Baik},
  title     = {Mitigating Parameter Interference in Model Merging via Sharpness-Aware Fine-Tuning},
  booktitle = {ICLR},
  year      = {2025},
}

@inproceedings{iurada2025efficient,
  author    = {Leonardo Iurada and Marco Ciccone and Tatiana Tommasi},
  title     = {Efficient Model Editing with Task-Localized Sparse Fine-tuning},
  booktitle = {ICLR},
  year      = {2025},
}

@inproceedings{baik2020learning,
  title={Learning to Forget for Meta-Learning},
  author={Baik, Sungyong and Hong, Seokil and Lee, Kyoung Mu},
  booktitle={CVPR},
  year={2020}
}

@article{baik2024learning,
  title={Learning to Learn Task-Adaptive Hyperparameters for Few-Shot Learning},
  author={Baik, Sungyong and Choi, Myungsub and Choi, Janghoon and Kim, Heewon and Lee, Kyoung Mu},
  journal={IEEE Transactions on Pattern Analysis and Machine Intelligence},
  year={2024}
}

@inproceedings{baik2021metal,
  title={Meta-Learning with Task-Adaptive Loss Function for Few-Shot Learning},
  author={Baik, Sungyong and Choi, Janghoon and Kim, Heewon and Cho, Dohee and Min, Jaesik and Lee, Kyoung Mu},
  booktitle={ICCV},
  year={2021}
}

@inproceedings{choi2020scene,
  title={Scene-Adaptive Video Frame Interpolation via Meta-Learning},
  author={Choi, Myungsub and Choi, Janghoon and Baik, Sungyong and Kim, Tae Hyun and Lee, Kyoung Mu},
  booktitle={CVPR},
  year={2020}
}

@article{choi2022testtime,
  title={Test-Time Adaptation for Video Frame Interpolation via Meta-Learning},
  author={Choi, Myungsub and Choi, Janghoon and Baik, Sungyong and Kim, Tae Hyun and Lee, Kyoung Mu},
  journal={IEEE Transactions on Pattern Analysis and Machine Intelligence},
  year={2022}
}

@article{choi2022visual,
  title={Visual Tracking by Adaptive Continual Meta-Learning},
  author={Choi, Janghoon and Baik, Sungyong and Choi, Myungsub and Kwon, Junseok and Lee, Kyoung Mu},
  journal={IEEE Access},
  year={2022}
}

@inproceedings{baik2020alfa,
  title={Meta-Learning with Adaptive Hyperparameters},
  author={Baik, Sungyong and Choi, Myungsub and Choi, Janghoon and Kim, Heewon and Lee, Kyoung Mu},
  booktitle={NeurIPS},
  year={2020}
}

@article{baik2022learning,
  title={Learning to Forget for Meta-Learning via Task-and-Layer-Wise Attenuation},
  author={Baik, Sungyong and Hong, Seokil and Lee, Kyoung Mu},
  journal={IEEE Transactions on Pattern Analysis and Machine Intelligence},
  year={2022}
}

@article{sim,
    author = {Jungyong Son and Jinwook Jung and Sungyong Baik},
    title = {Training-free Task Classification for Multi-Task Model Merging},
    journal = {arXiv preprint arXiv:2606.22589},
    year={2026}
}

\clearpage

% ===============================================================
% Supplementary Material / Appendix
% ===============================================================

\setcounter{section}{0}
\setcounter{figure}{0}
\setcounter{table}{0}
\setcounter{equation}{0}

\renewcommand{\thesection}{\Alph{section}}
\renewcommand{\thefigure}{\Alph{figure}}
\renewcommand{\thetable}{\Alph{table}}
\renewcommand{\theequation}{\Alph{equation}}

% Unique hyperref identifiers for supplementary material
\renewcommand{\theHsection}{appendix.\Alph{section}}
\renewcommand{\theHfigure}{appendix.\Alph{figure}}
\renewcommand{\theHtable}{appendix.\Alph{table}}
\renewcommand{\theHequation}{appendix.\Alph{equation}}

\appendix

\section*{Appendix} 

\section*{Table of contents}

We provide the following items in this Appendix:
\begin{itemize}
    % \item Clarifications on main paper
    % \begin{itemize}
    %     \item Recovered-model notation in Main Fig. \textcolor{red}{1}
    %     \item Group inference for component-selection baselines in Tab \textcolor{red}{5}
    % \end{itemize}

    \item (\cref{app:offset_justification}) Justification for offset-based recovery
    \begin{itemize}
        \item (\cref{app:affine_transformation_based_recovery}) Affine transformation-based expert recovery
        \item (\cref{app:learning_and_simplyfing}) Learning and simplifying the affine rule in \method{}
        \item (\cref{app:theoretical_justification}) Theoretical justification via low-rank offsets
    \end{itemize}

    \item (\cref{app:more_related_work}) More related work

    \item (\cref{app:experiment_details}) Experiment details
    \begin{itemize}
        \item (\cref{app:pseudocode}) Pseudocode of \method{}
        \item (\cref{app:module_architecture}) Module architecture and parameters
        \item (\cref{app:baseline_details}) Baseline details
        \item (\cref{app:computational_resources_training}) Computational resources and training time
    \end{itemize}

    \item (\cref{app:additional_experiments}) Additional experiments
    \begin{itemize}
        \item (\cref{app:dirichlet_training}) Dirichlet-sampling training for unseen-task generalization
        \item (\cref{app:more_ood_experiments}) More unseen-task generalization results
        \item (\cref{app:larger_task_suites}) Scalability to larger task suites
        \item (\cref{app:fine_grained_similarity}) Fine-grained tasks with high inter-task similarity
        \item (\cref{app:robustness_under_corrupted_input}) Robustness under corrupted inputs
    \end{itemize}

    \item (\cref{app:more_ablation_study}) More ablation study
    \begin{itemize}
        \item (\cref{app:representation_source_and_layer}) Representation source and feature layer
        \item (\cref{app:bank_size}) Number of reference inputs and subspace dimension
        \item (\cref{app:rank_ablation}) Rank
        \item (\cref{app:random_seed}) Random seed
        \item (\cref{app:cosine_objective}) Cosine similarity as an objective
        \item (\cref{app:factorization_order}) Impact of factorization order in offset generation
        \item (\cref{app:shared_layers}) Application with shared layers
    \end{itemize}

    \item (\cref{app:additional_analyses}) Additional analyses
    \begin{itemize}
        \item (\cref{app:soft_recovery_interpolation}) Soft-recovery interpolation mechanism
        \item (\cref{app:small_reference_sets}) Task identification under small reference sets
        \item (\cref{app:wsc_flip_analysis}) Prediction-flip analysis for WSC
    \end{itemize}

    \item (\cref{app:limitations}) Limitations
    
\end{itemize}

\FloatBarrier

% \input{supple/1_Clarifications}
% \FloatBarrier

\section{Justification for offset-based recovery}
\label{app:offset_justification}

This appendix provides additional justification for the expert recovery rule used in the main paper.
% We first derive an affine recovery rule directly in parameter space, consistent with \textcolor{linkred}{Eq.~(4)} and \textcolor{linkred}{Eq.~(5)} in the main paper.
We first derive an affine recovery rule directly in parameter space, consistent with \cref{eq:theta_affine_form} and \cref{eq:theta_offset_form} in the main paper.
We then explain how \method{} learns this affine rule and why it simplifies to an offset-only form in practice.
Finally, we provide a theoretical justification for using low-rank offsets based on spectral concentration.

\subsection{Affine transformation--based expert recovery}
\label{app:affine_transformation_based_recovery}

Starting from the task-arithmetic formulation in the main paper,
\begin{equation}
\label{eq:task_arithmetic_app}
\vtheta_{\text{merge}}
\;=\;
\vtheta_0 \;+\; \sum_{i=1}^{T} \lambda_i\,\bm{\tau}_i,
\qquad
\bm{\tau}_i := \vtheta_i - \vtheta_0,
\end{equation}
we substitute $\bm{\tau}_i=\vtheta_i-\vtheta_0$ into the expression for $\vtheta_{\text{merge}}$:
\begin{align}
\vtheta_{\text{merge}}
&=
\vtheta_0 + \sum_{i=1}^{T}\lambda_i(\vtheta_i-\vtheta_0) \\
&=
\Bigl(1-\sum_{i=1}^{T}\lambda_i\Bigr)\vtheta_0
\;+\;
\sum_{i=1}^{T}\lambda_i\vtheta_i.
\label{eq:theta_merge_expand}
\end{align}
Separating the target task $t$ gives
\begin{equation}
\label{eq:theta_merge_target}
\vtheta_{\text{merge}}
=
\lambda_t \vtheta_t
\;+\;
\Bigl(1-\sum_{i=1}^{T}\lambda_i\Bigr)\vtheta_0
\;+\;
\sum_{i\neq t}\lambda_i \vtheta_i.
\end{equation}
Assuming $\lambda_t \neq 0$, we isolate $\vtheta_t$:
\begin{equation}
\label{eq:theta_exact_app}
\vtheta_t
=
\lambda_t^{-1}\vtheta_{\text{merge}}
-
\lambda_t^{-1}
\left[
\Bigl(1-\sum_{i=1}^{T}\lambda_i\Bigr)\vtheta_0
+
\sum_{i\neq t}\lambda_i \vtheta_i
\right].
\end{equation}

\cref{eq:theta_exact_app} shows that recovering the target expert from the merged model consists of a multiplicative term applied to $\vtheta_{\text{merge}}$ and an additive correction term that compensates for contributions from the pre-trained model and other experts.
\cref{eq:theta_exact_app} can therefore be rewritten exactly in the affine form
\begin{equation}
\label{eq:affine_theta_rule_app}
\vtheta_t
=
\gamma_t \vtheta_{\text{merge}}
\;+\;
\bm{\beta}_t,
\end{equation}
with
\begin{equation}
\label{eq:gamma_beta_exact_app}
\gamma_t
=
\lambda_t^{-1},
\qquad
\bm{\beta}_t
=
-
\lambda_t^{-1}
\left[
\Bigl(1-\sum_{i=1}^{T}\lambda_i\Bigr)\vtheta_0
+
\sum_{i\neq t}\lambda_i \vtheta_i
\right].
\end{equation}
Therefore, if $\gamma_t$ and $\bm{\beta}_t$ are generated from task identity $t$, then the target expert parameters $\vtheta_t$ can be recovered directly from $\vtheta_{\text{merge}}$ via an affine transformation.
This matches the affine formulation in \cref{eq:theta_affine_form} of the main paper.

\subsection{Learning and simplifying the affine rule in \method{}}
\label{app:learning_and_simplyfing}

In this appendix, when the task index $t$ is given, we denote the recovered parameters by $\hat{\vtheta}_t$.
This notation is distinct from the main-paper notation $\vtheta_{\hat{t}}$, which denotes inference-time parameters conditioned on a predicted task ID $\hat{t}$.

Guided by \cref{eq:affine_theta_rule_app}, a preliminary affine variant of \method{} can learn, at each layer $l$, a scalar $\gamma_t^{(l)}$ together with a task-dependent offset $\bm{\beta}_t^{(l)}$:
\begin{equation}
\label{eq:full_retex_recovery_appendix_b}
\hat{\vtheta}_{t}^{(l)}
\;=\;
\gamma_{t}^{(l)}\vtheta_\text{merge}^{(l)}
\;+\;
\bm{\beta}_{t}^{(l)}.
\end{equation}
In the actual implementation, $\bm{\beta}_{t}^{(l)}$ is parameterized in low-rank form, but we suppress that factorization here in order to focus on the affine simplification itself.

Empirically, the task-averaged $\gamma_t^{(l)}$ for many layer types converges close to $1$, indicating that most task-specific adjustment is captured by the offset term rather than the scaling term.
\cref{fig:gamma_convergence_appendix_b} visualizes this convergence trend over training.

\begin{figure}[t]
    \centering
    \includegraphics[width=0.98\linewidth]{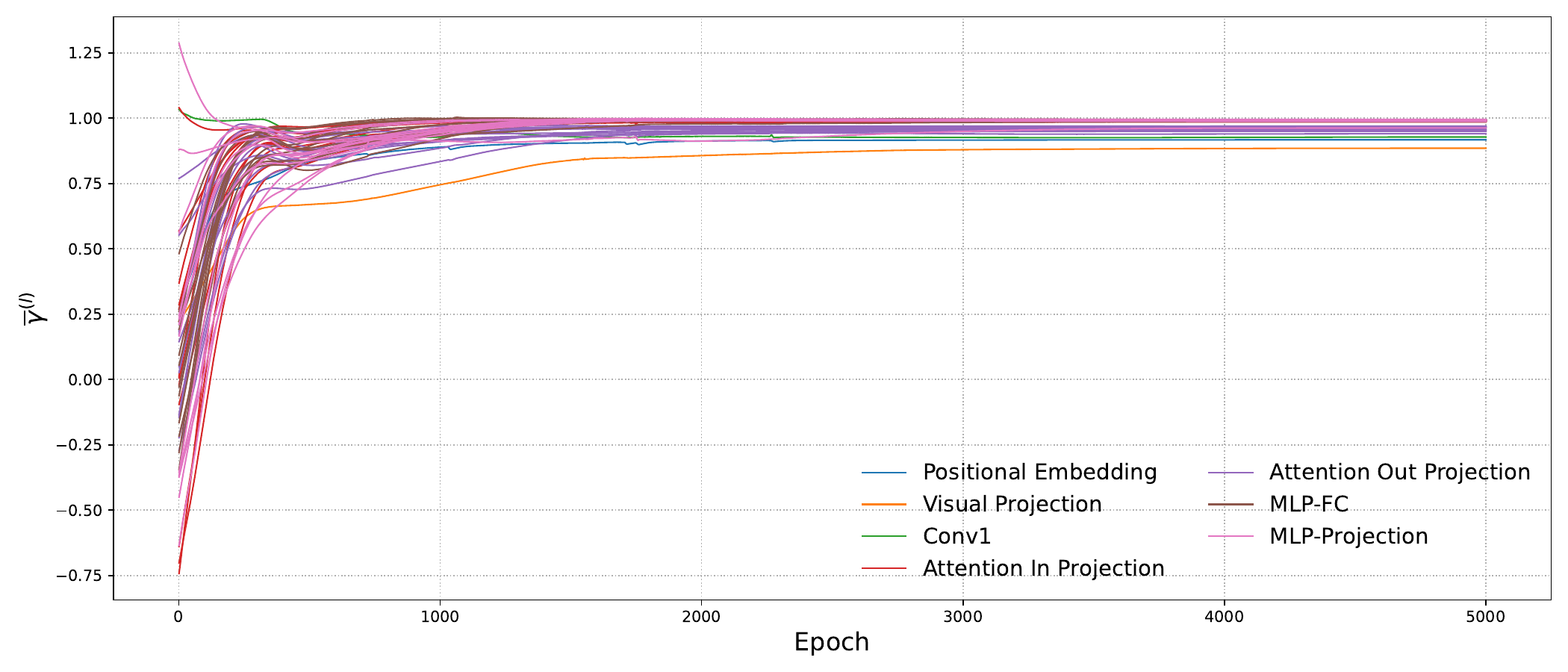}
    \caption{Convergence behavior of task-averaged $\gamma_t^{(l)}$ values for various 2D layer types during \method{} training, where $\gamma_t^{(l)}$ is learnable. Experiments on ViT-B/32 with 8 vision tasks show that many layers converge to $\gamma_t^{(l)}\!\approx\!1$.}
    \label{fig:gamma_convergence_appendix_b}
\end{figure}

\noindent\textbf{Offset-only rule via fixed scaling.}
Motivated by the above observation, we fix $\gamma_t^{(l)}\!=\!1$ for all tasks and layers so that \method{} generates only the low-rank offset:
\begin{align}
\hat{\vtheta}_{t}^{(l)}
\;=\;
\vtheta_\text{merge}^{(l)} \;+\; \hat{\bm{\beta}}_{t}^{(l)}.
\label{eq:offset_only_rule_app}
\end{align}
This corresponds to the offset-only recovery form used in \cref{eq:theta_affine_form} of the main paper, namely
\(
\hat{\vtheta}_{t}\approx\vtheta_{\text{merge}}+\hat{\bm{\beta}}_t
\).
\cref{fig:results_fixed_gamma_b} shows that fixing $\gamma_t^{(l)}\!=\!1$ closely matches the performance of the learned-$\gamma$ model across a wide range of ranks $r$.
These results indicate that the multiplicative term is not essential in practice and support the use of a simpler offset-only recovery rule.

\begin{figure}[t]
    \centering
    \includegraphics[width=0.62\linewidth]{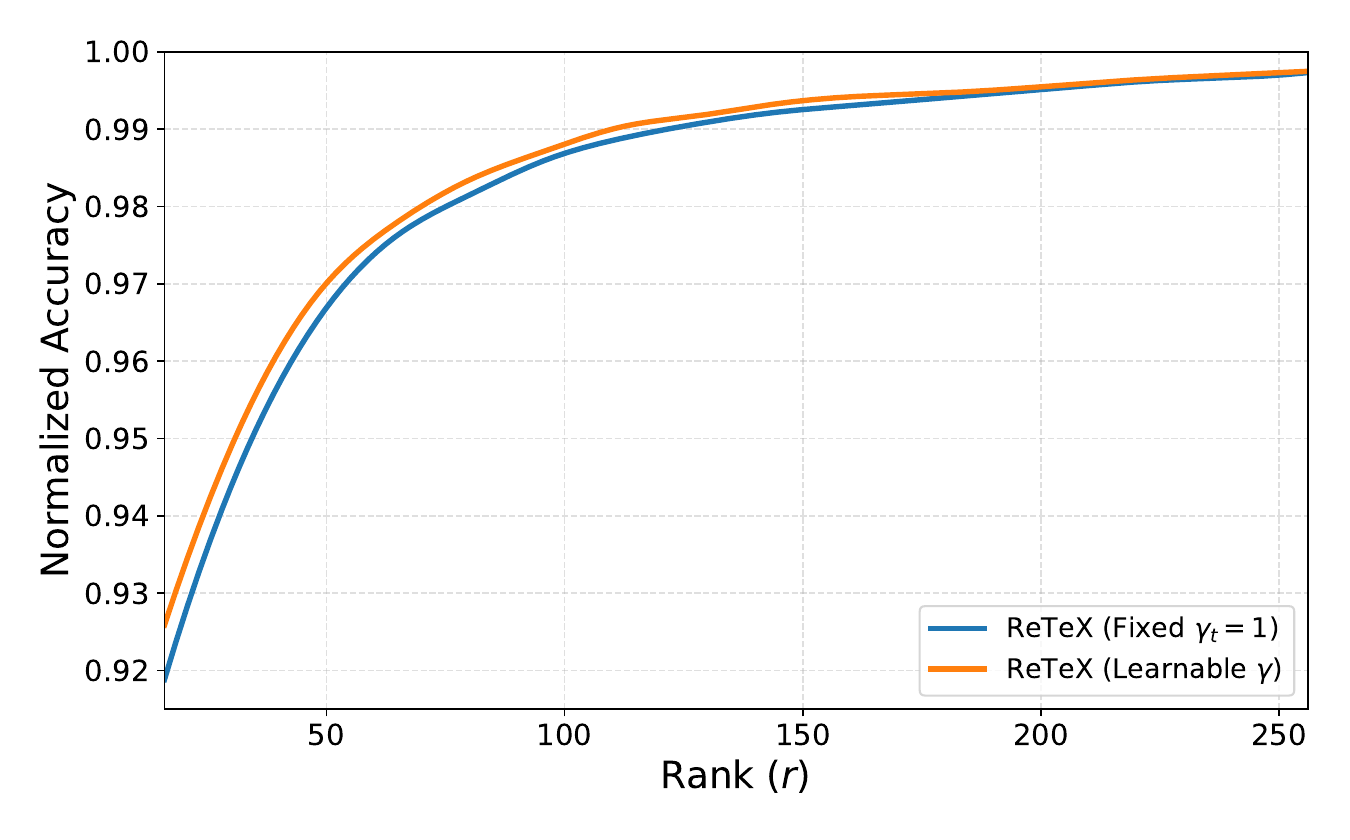}
    \caption{Normalized accuracy versus rank $r$ on 8 vision tasks (ViT-B/32). Fixing $\gamma_t^{(l)}\!=\!1$ closely matches the performance of the learned-$\gamma$ model across a wide range of ranks, supporting the offset-only recovery view.}
    \label{fig:results_fixed_gamma_b}
\end{figure}

\subsection{Theoretical justification via low-rank offsets}
\label{app:theoretical_justification}

We now justify why a low-rank offset is often sufficient in practice.
The key idea is that if the expert--merged difference is spectrally concentrated, then a low-rank approximation can recover most of its energy with small error.
To formalize this, we analyze approximation error for the expert--merged difference at a single layer.

Consider one 2D weight matrix at layer $l$ and define the expert--merged difference, which we also refer to here as the adjusted task vector, by
\[
\bm{\tau}'^{(l)}_t
\;:=\;
\vtheta_t^{(l)} - \vtheta_{\mathrm{merge}}^{(l)}
\in \mathbb{R}^{m\times m},
\]
where $m$ denotes the matrix dimension.
For notational simplicity, we use a square shape.
Let $\bm{\beta}^{(l)}_t$ be a rank-$r$ matrix intended to approximate $\bm{\tau}'^{(l)}_t$.
We measure the relative prediction error by
\begin{equation}
\label{eq:relative_error}
\varepsilon^{(l)}
\;:=\;
\frac{
  \bigl\|\bm{\tau}'^{(l)}_t - \bm{\beta}_t^{(l)}\bigr\|_{F}
}{
  \bigl\|\bm{\tau}'^{(l)}_t\bigr\|_{F}
}.
\end{equation}

\begin{theorem}[Worst-case error bound for truncated SVD]
\label{thm:prediction_error_bound}
Let $\bm{\tau}'^{(l)}_t \in \mathbb{R}^{m \times m}$ have singular values
$\sigma_1 \ge \sigma_2 \ge \dots \ge \sigma_m \ge 0$.
Let $\bigl(\bm{\tau}'^{(l)}_t\bigr)_r$ be the rank-$r$ matrix obtained by truncating the SVD of $\bm{\tau}'^{(l)}_t$ to its top $r$ singular values.
If we set $\bm{\beta}^{(l)}_t := \bigl(\bm{\tau}'^{(l)}_t\bigr)_r$, then the relative error $\varepsilon^{(l)}$ satisfies
\begin{equation}
\label{eq:prediction_error_bound}
\varepsilon^{(l)}
\;\le\;
\sqrt{\frac{m - r}{m}} \,.
\end{equation}
\end{theorem}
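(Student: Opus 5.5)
The plan is to combine the Eckart--Young--Mirsky theorem with an averaging argument on the singular values.

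First, by Eckart--Young (or directly from orthogonality of the rank-one SVD terms), the truncation error is
\[
\bigl\|\bm{\tau}'^{(l)}_t - \bigl(\bm{\tau}'^{(l)}_t\bigr)_r\bigr\|_F^2 = \sum_{i=r+1}^{m}\sigma_i^2 ,
\qquad
\bigl\|\bm{\tau}'^{(l)}_t\bigr\|_F^2=\sum_{i=1}^{m}\sigma_i^2 .
\]
Hence
\[
\bigl(\varepsilon^{(l)}\bigr)^2=\frac{\sum_{i>r}\sigma_i^2}{\sum_{i=1}^m\sigma_i^2}.
\]
This requires $\bm{\tau}'^{(l)}_t\neq 0$; otherwise the ratio is undefined, and the statement should be read with that convention.

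Second, I would show that
\[
\frac{\sum_{i>r}\sigma_i^2}{\sum_{i=1}^m\sigma_i^2}\le\frac{m-r}{m}.
\]
Write $A=\sum_{i\le r}\sigma_i^2$ and $B=\sum_{i>r}\sigma_i^2$. Because the singular values are sorted, every head term is at least every tail term. This gives $A/r\ge \sigma_r^2\ge \sigma_{r+1}^2\ge B/(m-r)$, so $A\ge \tfrac{r}{m-r}B$. Then
\[
A+B\ge \frac{m}{m-r}\,B ,
\]
which is the claim. Taking square roots yields \cref{eq:prediction_error_bound}. Equality holds when all singular values are equal, so the bound is tight in the worst case.

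The only mildly delicate step is the monotonicity inequality $A/r\ge B/(m-r)$: the mean of the largest $r$ squared singular values is at least the mean of the remaining ones. I would justify it by the chain through $\sigma_r^2\ge\sigma_{r+1}^2$ above, handling the edge cases separately. If $r=m$, the error is $0$. If $r=0$, the ratio is $1$. Everything else is routine.
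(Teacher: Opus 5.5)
Your proposal is correct and follows essentially the same route as the paper. The paper also writes the squared relative error as the tail-energy fraction $\sum_{i>r}\sigma_i^2 / \sum_{i=1}^m \sigma_i^2$. It then uses the fact that the mean of the top-$r$ squared singular values is at least the global mean, which is algebraically equivalent to your inequality $A/r \ge B/(m-r)$. Your handling of the degenerate cases ($\bm{\tau}'^{(l)}_t = 0$ and $r \in \{0, m\}$) and your tightness remark are small additions that the paper omits.
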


\begin{proof}
Let $\bm{\tau}'^{(l)}_t = \mathbf{U}\mathbf{\Sigma}\mathbf{V}^\top$ be the SVD with
$\mathbf{\Sigma}=\mathrm{diag}(\sigma_1,\dots,\sigma_m)$.
The residual $\bm{\tau}'^{(l)}_t-(\bm{\tau}'^{(l)}_t)_r$ has singular values
$\sigma_{r+1},\dots,\sigma_m$, so
\[
\bigl\|\bm{\tau}'^{(l)}_t - (\bm{\tau}'^{(l)}_t)_r\bigr\|_{F}^{2}
=
\sum_{i=r+1}^{m} \sigma_i^{2},
\qquad
\bigl\|\bm{\tau}'^{(l)}_t\bigr\|_{F}^{2}
=
\sum_{i=1}^{m} \sigma_i^{2}.
\]
Since $\sigma_1^2 \ge \dots \ge \sigma_m^2$, the average of the top-$r$ squared singular values is at least the global average:
\[
\sum_{i=1}^{r} \sigma_i^{2}
\;\ge\;
\frac{r}{m}\sum_{i=1}^{m}\sigma_i^{2}.
\]
Thus
\[
\sum_{i=r+1}^{m}\sigma_i^{2}
=
\sum_{i=1}^{m}\sigma_i^{2} - \sum_{i=1}^{r}\sigma_i^{2}
\;\le\;
\frac{m-r}{m}\sum_{i=1}^{m}\sigma_i^{2}.
\]
Dividing by $\|\bm{\tau}'^{(l)}_t\|_F^2$ and taking square roots yields \cref{eq:prediction_error_bound}.
\end{proof}

\cref{thm:prediction_error_bound} gives a worst-case guarantee.
In practice, expert--merged differences often exhibit spectral concentration, where a small number of leading singular directions capture most Frobenius energy.
The next result quantifies how such concentration tightens the approximation guarantee.

\begin{theorem}[Effect of spectral concentration]
\label{thm:spectral_concentration}
Let $\bm{\tau}'^{(l)}_t \in \mathbb{R}^{m \times m}$ have singular values
$\sigma_1 \ge \dots \ge \sigma_m \ge 0$.
For $q \in \{1,\dots,m\}$, define the top-$q$ energy ratio
\begin{equation}
\label{eq:rho_q_def}
\rho_q
\;:=\;
\frac{
  \sum_{i=1}^{q} \sigma_i^{2}
}{
  \sum_{i=1}^{m} \sigma_i^{2}
}
\;\in\; (0,1].
\end{equation}
If $r \ge q$ and we again take $\bm{\beta}^{(l)}_t := (\bm{\tau}'^{(l)}_t)_r$, then
\begin{equation}
\label{eq:spectral_concentration_bound}
\varepsilon^{(l)}
\;\le\;
\sqrt{1 - \rho_q} \,.
\end{equation}
\end{theorem}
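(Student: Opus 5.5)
We will reuse the residual computation from the proof of \cref{thm:prediction_error_bound}. Writing the SVD $\bm{\tau}'^{(l)}_t=\mathbf{U}\mathbf{\Sigma}\mathbf{V}^\top$, the truncation residual $\bm{\tau}'^{(l)}_t-(\bm{\tau}'^{(l)}_t)_r$ equals $\sum_{i>r}\sigma_i\mathbf{u}_i\mathbf{v}_i^\top$. By orthogonal invariance of the Frobenius norm (the Eckart--Young identity), this gives
\[
\varepsilon^{(l)\,2}
=
\frac{\sum_{i=r+1}^{m}\sigma_i^{2}}{\sum_{i=1}^{m}\sigma_i^{2}}
=
1-\rho_r .
\]
We assume $\bm{\tau}'^{(l)}_t\neq 0$, so that $\rho_q$ and $\varepsilon^{(l)}$ are well defined. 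This is implicit in the statement's claim that $\rho_q\in(0,1]$.

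The key step is monotonicity of the energy ratio in its index. Every $\sigma_i^2$ is nonnegative, and $r\ge q$, so
\[
\sum_{i=1}^{r}\sigma_i^2=\sum_{i=1}^{q}\sigma_i^2+\sum_{i=q+1}^{r}\sigma_i^2\ \ge\ \sum_{i=1}^{q}\sigma_i^2 .
\]
Dividing by the total energy gives $\rho_r\ge\rho_q$, and therefore $1-\rho_r\le 1-\rho_q$. Taking nonnegative square roots yields $\varepsilon^{(l)}\le\sqrt{1-\rho_q}$, which is \cref{eq:spectral_concentration_bound}. This step does not use the ordering $\sigma_1\ge\dots\ge\sigma_m$ at all. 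The ordering is needed only so that the rank-$r$ truncation keeps the top-$r$ energy, which gives the identity $\varepsilon^{(l)\,2}=1-\rho_r$ above.

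There is no real obstacle here. The only point needing care is the exact residual identity, which follows directly from orthonormality of the singular vectors and was already used in the previous proof. As a sanity check, $q=r$ gives equality. Also, since $\rho_r\ge r/m$ by the averaging argument of \cref{thm:prediction_error_bound}, this bound is always at least as sharp as the worst-case bound when $q=r$.
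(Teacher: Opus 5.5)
Your proof is correct and is essentially the paper's own argument. The paper also derives $\bigl(\varepsilon^{(l)}\bigr)^2 = 1-\rho_r$ from the truncated-SVD residual and then concludes via $\rho_r \ge \rho_q$ for $r \ge q$; your side remarks (that $\bm{\tau}'^{(l)}_t \neq 0$ is implicitly assumed, and that the $q=r$ case is never weaker than the worst-case bound) are accurate additions the paper leaves unstated.
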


\begin{proof}
For the truncated SVD approximation,
\[
\varepsilon^{(l)2}
=
\frac{
  \bigl\|\bm{\tau}'^{(l)}_t - (\bm{\tau}'^{(l)}_t)_r\bigr\|_{F}^{2}
}{
  \bigl\|\bm{\tau}'^{(l)}_t\bigr\|_{F}^{2}
}
=
1-\rho_r,
\qquad
\rho_r
=
\frac{
  \sum_{i=1}^{r}\sigma_i^{2}
}{
  \sum_{i=1}^{m}\sigma_i^{2}
}.
\]
If $r\ge q$, then $\rho_r \ge \rho_q$, hence
$\varepsilon^{(l)2}=1-\rho_r \le 1-\rho_q$, which gives \cref{eq:spectral_concentration_bound}.
\end{proof}

To verify that real expert--merged differences exhibit strong spectral concentration, we compute the top-$q$ energy ratio $\rho_q$ in \cref{eq:rho_q_def} for each square 2D layer of $\bm{\tau}'^{(l)}_t$ and then average over layers for each task.
\cref{fig:top_rho_adjust} reports the resulting curves on our vision tasks.
The strong concentration, namely large $\rho_q$ for modest $q$, supports the effectiveness of low-rank offset prediction in \method{}.

\begin{figure}[t]
  \centering
  \includegraphics[width=0.7\linewidth]{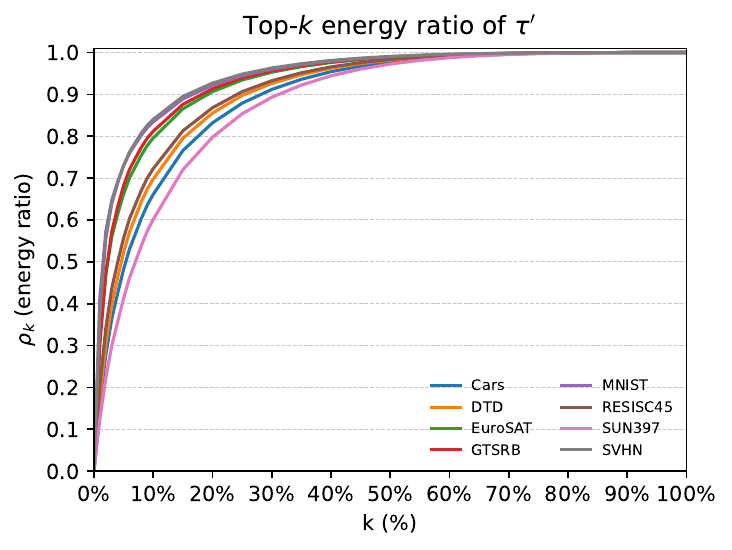}
  \caption{\textbf{Top-$q$ energy ratio of expert--merged differences.}
  For each task $t$ and each square 2D layer $l$,
  we compute the top-$q$ energy ratio $\rho_q$ of the adjusted task vector
  $\bm{\tau}'^{(l)}_t = \vtheta_t^{(l)} - \vtheta_{\mathrm{merge}}^{(l)}$
  and average over layers.
  The curves quickly saturate toward $1.0$, indicating strong spectral concentration and supporting the effectiveness of low-rank offset prediction in \method{}.}
  \label{fig:top_rho_adjust}
\end{figure}
\FloatBarrier

\section{More related work}
\label{app:more_related_work}
HyperNetworks generate the weights of one network using another network~\cite{ha2017hypernetworks}, and FiLM performs feature-wise affine modulation conditioned on side information~\cite{perez2018film}. 
These ideas underpin a large body of few-shot and conditional adaptation methods that infer task-conditioned parameters from support examples. 
A representative example is CNAPs~\cite{requeima2019cnaps}, which encodes a support set and predicts task-conditioned parameters for rapid adaptation. 
Follow-up work strengthens this line by refining conditional adaptation and exploiting richer task statistics, including Mahalanobis-distance based refinement~\cite{bateni2020improved,bateni2022enhancing}, task-specific adapters for cross-domain transfer~\cite{li2022crossdomain}, and universal templates specialized to new datasets with limited supervision~\cite{triantafillou2021universal}.
% Related meta-learning methods such as MAML~\cite{MAML} also target fast adaptation from small support sets, although they rely on gradient-based adaptation rather than direct parameter generation.

Related optimization-based meta-learning methods also adapt the task-learning process itself, for example by learning task- and layer-wise attenuation~\cite{baik2020learning,baik2022learning}, adaptive hyperparameters~\cite{baik2020alfa,baik2024learning}, or task-adaptive loss functions~\cite{baik2021metal}. 
Meta-learning has also been applied to test-time or online adaptation in vision tasks, such as scene-adaptive video frame interpolation~\cite{choi2020scene,choi2022testtime} and adaptive continual visual tracking~\cite{choi2022visual}.

% \textcolor{blue}{
% Another related direction studies training-free task identification for model merging.
% For instance, SiM constructs SVD-based task manifolds from a small support set and routes each test input by measuring projection residuals, avoiding additional router training~\cite{sim}.
% }

\method{} shares the high-level idea of task-conditioned generation, but the setting and objective are different.
Prior methods adapt a shared backbone to a new task using support examples, whereas \method{} starts from an already merged checkpoint and recovers the behavior of an existing task expert.
In addition, prior hypernetwork-based methods usually generate affine feature modulation or task-conditioned classifier parameters from support-set statistics, while \method{} predicts low-rank offsets directly in parameter space.
Recovery training in \method{} uses only expert checkpoints as parameter supervision and does not require task datasets or support examples, because the goal is not few-shot generalization but compensating for merging-induced interference.
\FloatBarrier

\section{Experiment details}
\label{app:experiment_details}

This section provides implementation details that complement \cref{sec:learning_recovery_final_corrected} and \cref{sec:experiments} of the main paper.
We first summarize the training and inference procedures of \method{} in pseudocode form.
We then describe the offset generator and parameterization, the baseline protocols used in the experiments, and the computational resources used in our study.

\subsection{Pseudocode of \method{}}
\label{app:pseudocode}

This subsection summarizes the recovery-module training and task-agnostic inference procedures of \method{} using the notation in the main paper.
The task memory bank
$\{(\bm{\mu}_t,\mathbf{V}_{t,k})\}_{t=1}^{T}$
is constructed offline using \cref{eq:mean} and \cref{eq:singluar}, and only the resulting task signatures are retained afterward.
% Once training is finished, task-agnostic inference requires only the merged checkpoint $\vtheta_{\text{merge}}$ and the trained \method{}, where the trained \method{} includes the task memory bank together with the learned task embeddings, offset generators, and shared offsets.

Once training is finished, task-agnostic inference requires only the merged checkpoint $\vtheta_{\text{merge}}$ and the trained \method{}, where the trained \method{} includes the task memory bank together with its trainable \method{} parameters: the task embeddings $\{\ve_t\}_{t=1}^T$, offset generators $\{h^{(l)}\}_{l=1}^L$, and shared offsets $\{\bm{\beta}^{(l)}_s\}_{l=1}^L$.

\cref{alg:retex_train} describes recovery-module training with parameter supervision only.
\cref{alg:retex_inference} describes task-agnostic inference with task identification and task expert recovery.

\begin{algorithm}[h]
\small
\caption{Training of \method{}}
\label{alg:retex_train}
\begin{algorithmic}[1]
% \Require Merged checkpoint $\vtheta_{\text{merge}}$, expert checkpoints $\{\vtheta_t\}_{t=1}^{T}$, trainable \method{}, number of training iterations $N_{\mathrm{iter}}$
% \Ensure Trained \method{}

\Require Merged checkpoint $\vtheta_{\text{merge}}$, expert checkpoints $\{\vtheta_t\}_{t=1}^{T}$, trainable \method{} parameters ($\{\ve_t\}_{t=1}^T$, $\{h^{(l)}\}_{l=1}^L$, $\{\bm{\beta}^{(l)}_s\}_{l=1}^L$), number of training iterations $N_{\mathrm{iter}}$
\Ensure Trained \method{}

\Statex \textbf{\method{} training with parameter supervision only}
\For{$n = 1$ to $N_{\mathrm{iter}}$}
    \State Sample task index \(t \sim \mathcal{U}(1,T)\)
    \State Obtain task embedding \(\ve_t\) from \method{}
    \For{$l = 1$ to $L$}
        \State Generate task-conditioned factor \(\bm{\beta}^{(l)}_{t,g} \gets h^{(l)}(\ve_t)\)
        \State Construct layer offset \(\hat{\bm{\beta}}^{(l)}_{t} \gets \bm{\beta}^{(l)}_{t,g}\bm{\beta}^{(l)}_{s}\)
        \State Recover layer parameters \(\hat{\vtheta}^{(l)}_{t} \gets \vtheta^{(l)}_{\text{merge}} + \hat{\bm{\beta}}^{(l)}_{t}\)
    \EndFor
    \State Stack recovered layer parameters into \(\hat{\vtheta}_{t}\)
    \State Compute reconstruction loss \(\mathcal{L} \gets \left\| \hat{\vtheta}_{t} - \vtheta_{t} \right\|_{2}^{2}\)
    \State Update trainable \method{} parameters \(\{\ve_t\}_{t=1}^T\), \(\{h^{(l)}\}_{l=1}^L\), and \(\{\bm{\beta}^{(l)}_s\}_{l=1}^L\) with Adam
\EndFor
\end{algorithmic}
\end{algorithm}

\begin{algorithm}[h]
\small
\caption{Inference with \method{}}
\label{alg:retex_inference}
\begin{algorithmic}[1]
\Require Input batch $\mathbf{X}=\{\vx_i\}_{i=1}^{B}$, merged checkpoint $\vtheta_{\text{merge}}$, trained \method{}
\Ensure Task ID estimates $\{\hat{t}_i\}_{i=1}^{B}$, recovered parameters $\{\hat{\vtheta}_{t}\}_{t:\,|\mathcal{I}_t|>0}$, predictions $\hat{\mathbf{Y}}$

\Statex \textbf{Before inference: offline task signatures are already stored in trained \method{}}
\Statex \textbf{Task identification}
\State Compute merged-model representations \(\mathbf{Z} \gets f_{\text{merge}}(\mathbf{X})\)

\For{$i = 1$ to $B$}
    \For{$t = 1$ to $T$}
        \State Compute projection residual \(\mathcal{R}_t(\mathbf{z}_i) \gets \left\| \left(\mathbf{I} - \mathbf{V}_{t,k}\mathbf{V}_{t,k}^{\top}\right)(\mathbf{z}_i - \bm{\mu}_t) \right\|_2\)
    \EndFor
    \State Estimate task ID \(\hat{t}_i \gets \arg\min_{t \in \{1,\ldots,T\}} \mathcal{R}_t(\mathbf{z}_i)\)
\EndFor

\Statex \textbf{Grouped task expert recovery}
\State Partition sample indices by predicted task ID \(\mathcal{I}_t \gets \{\, i \in \{1,\ldots,B\} : \hat{t}_i = t \,\}\)

\For{each task $t$ with $|\mathcal{I}_t| > 0$}
    \State Obtain task embedding \(\ve_t\) from \method{}
    \For{$l = 1$ to $L$}
        \State Generate task-conditioned factor \(\bm{\beta}^{(l)}_{t,g} \gets h^{(l)}(\ve_t)\)
        \State Construct layer offset \(\hat{\bm{\beta}}^{(l)}_{t} \gets \bm{\beta}^{(l)}_{t,g}\bm{\beta}^{(l)}_{s}\)
        \State Recover layer parameters \(\hat{\vtheta}^{(l)}_{t} \gets \vtheta^{(l)}_{\text{merge}} + \hat{\bm{\beta}}^{(l)}_{t}\)
    \EndFor
    \State Stack recovered layer parameters into \(\hat{\vtheta}_{t}\)
    \State Run the corresponding sub-batch \(\hat{\mathbf{Y}}_{\mathcal{I}_t} \gets f_{\hat{\vtheta}_{t}}(\mathbf{X}_{\mathcal{I}_t})\)
\EndFor

\State Return \(\{\hat{t}_i\}_{i=1}^{B}\), \(\{\hat{\vtheta}_{t}\}_{t:\,|\mathcal{I}_t|>0}\), and \(\hat{\mathbf{Y}}\)
\end{algorithmic}
\end{algorithm}

\subsection{Module architecture and parameters}
\label{app:module_architecture}

This subsection details the module architecture and parameterization of \method{} in the offset-only configuration justified in \cref{app:offset_justification}.
In this setting, \method{} generates a task-specific offset tensor $\bm{\beta}^{(l)}_{t}$ for each layer $l$ and applies it to the merged parameters.
The offset is factorized as
\begin{equation}
\bm{\beta}^{(l)}_{t}
=
\bm{\beta}^{(l)}_{t,g}\bm{\beta}^{(l)}_{s},
\end{equation}
where $\bm{\beta}^{(l)}_{t,g}$ is produced by a lightweight offset generator $h^{(l)}$ conditioned on a task embedding $\ve_t$, and $\bm{\beta}^{(l)}_{s}$ is a learnable offset shared across all task experts.
In implementation, $h^{(l)}$ is a single linear layer whose output dimension matches the required low-rank factorization for layer $l$.

The core components involved in offset generation, their shapes for a 2D layer, and their PyTorch-like forms are summarized in \cref{tab:module_architecture}.
We denote the number of tasks as $T$, the task embedding dimension as $d_{\mathrm{emb}}$, the dimensions of a 2D parameter matrix as $(a,b)$, and the target rank as $r$.

\begin{table}[t]
\centering
\caption{\textbf{Core components for offset generation in \method{} (offset-only), exemplified for a 2D layer.}}
\label{tab:module_architecture}
\begin{tabular}{lll}
\toprule
\textbf{Component} & \textbf{Shape} & \textbf{PyTorch-like form} \\
\midrule
Task embedding $\ve_t$ & $(T, d_{\mathrm{emb}})$ & \texttt{nn.Embedding(T, d\_emb)} \\
Offset generator $h^{(l)}$ & $(d_{\mathrm{emb}}, a\cdot r)$ & \texttt{nn.Linear(d\_emb, a*r)} \\
Generated $\bm{\beta}^{(l)}_{t,g}$ & $(a,r)$ & reshaped output of $h^{(l)}(\ve_t)$ \\
Shared $\bm{\beta}^{(l)}_{s}$ & $(r,b)$ & \texttt{nn.Parameter} \\
Effective offset $\bm{\beta}^{(l)}_{t}$ & $(a,b)$ & $\bm{\beta}^{(l)}_{t,g}\bm{\beta}^{(l)}_{s}$ \\
\bottomrule
\end{tabular}
\end{table}

\noindent\textbf{Offset generator.}
For each layer $l$, the offset generator $h^{(l)}$ takes the task embedding $\ve_t$ as input and outputs a vector in $\mathbb{R}^{a\cdot r}$, which is reshaped into the task-conditioned factor $\bm{\beta}^{(l)}_{t,g}\in\mathbb{R}^{a\times r}$ for a 2D layer of shape $(a,b)$.
The shared offset $\bm{\beta}^{(l)}_{s}\in\mathbb{R}^{r\times b}$ is a learnable parameter initialized with zeros and shared across tasks for layer $l$.
The final offset $\bm{\beta}^{(l)}_{t}$ is given by the matrix product $\bm{\beta}^{(l)}_{t,g}\bm{\beta}^{(l)}_{s}$.

\noindent\textbf{Parameter handling for different dimensionalities.}
The shapes of the task-conditioned factor $\bm{\beta}^{(l)}_{t,g}$ and the shared offset $\bm{\beta}^{(l)}_{s}$ are adapted according to the dimensionality of the original parameter tensor $\vtheta^{(l)}$:

\begin{itemize}
    \item \textbf{0D parameters.}
    $\bm{\beta}^{(l)}_{t,g}$ has shape $(1)$ and $\bm{\beta}^{(l)}_{s}$ is a learnable scalar.
    The resulting offset is computed by element-wise multiplication.

    \item \textbf{1D parameters.}
    For an original parameter of shape $(D)$, $\bm{\beta}^{(l)}_{t,g}$ has shape $(D)$ and $\bm{\beta}^{(l)}_{s}$ is a learnable scalar that scales $\bm{\beta}^{(l)}_{t,g}$ element-wise.

    \item \textbf{2D parameters.}
    For an original parameter of shape $(a,b)$ and rank $r$, $\bm{\beta}^{(l)}_{t,g}$ has shape $(a,r)$ and $\bm{\beta}^{(l)}_{s}$ has shape $(r,b)$.

    \item \textbf{4D parameters.}
    For an original parameter of shape $(c_{\mathrm{out}}, c_{\mathrm{in}}, k_h, k_w)$, we reshape it to $(c_{\mathrm{out}}, c_{\mathrm{in}}k_hk_w)$ and apply the same 2D decomposition.
    In this case, $\bm{\beta}^{(l)}_{t,g}$ has shape $(c_{\mathrm{out}}, r)$ and $\bm{\beta}^{(l)}_{s}$ has shape $(r, c_{\mathrm{in}}k_hk_w)$.
    The resulting offset is reshaped back to the original 4D form.
\end{itemize}

\noindent\textbf{Rank adjustment.}
\method{} uses a common target rank $r$ across layers.
For 2D parameters of shape $(d_1,d_2)$, or reshaped 4D parameters, if $r \ge \min(d_1,d_2)$, the effective rank is reduced to $\lfloor \min(d_1,d_2)/2 \rfloor$.
Otherwise, the target rank $r$ is used.
This prevents degenerate factorizations in small layers.

\subsection{Baseline details}
\label{app:baseline_details}
The baseline approaches employed for comparative evaluation in our experiments are detailed as follows:
\begin{itemize}
    \item \textbf{Individual Models}: This represents the standard performance benchmark where a distinct, fine-tuned model is dedicated to each specific task. These models operate independently and are not designed for multi-task execution.

    \item \textbf{Weight Averaging}~\cite{shoemake1985animating,utans1996weight}: As a foundational technique in model merging, this method directly computes an average of the parameters from all constituent task-specific models. It operates under the simplifying assumption that all tasks contribute equally, hence applying uniform weighting to each model.

    \item \textbf{Task Arithmetic}~\cite{ilharco2023editing}: This approach first defines a task vector \( \boldsymbol{\tau}_t \) for each task \( t \) as the parametric difference between the fine-tuned model \( \vtheta_t \) and the initial pre-trained model \( \vtheta_0 \) (i.e., \( \boldsymbol{\tau}_t = \vtheta_t - \vtheta_0 \)). A unified model \( \vtheta_\text{merge} \) is then constructed by adding a scaled sum of these task vectors to the pre-trained parameters, formulated as \( \vtheta_\text{merge} = \vtheta_0 + \lambda \sum_{t=1}^T \boldsymbol{\tau}_t \). The scaling factor \( \lambda \) is a hyperparameter selected from the range \( \{0.0, 0.1, \dots, 1.0\} \) to maximize average performance across all task validation sets.

    \item \textbf{TIES-Merging}~\cite{yadav2023ties}: This method refines task vectors before merging through a three-step process: Trim, Elect Sign, and Merge. In the Trim step, only the top 20\% of values by magnitude in each task vector are retained, with others zeroed out. The Elect Sign step (implicitly handled by the original task vector signs) and the subsequent Merge step proceed analogously to Task Arithmetic, including the hyperparameter tuning for the scaling factor.

    \item \textbf{AdaMerging}~\cite{yang2024adamerging}: This method proposed an adaptive model merging technique that learned merging coefficients using unlabeled test data. Operating as an unsupervised task arithmetic scheme, it utilized entropy minimization on the test samples as a surrogate objective function to iteratively update the coefficients. The approach adjusted these coefficients in either a task-wise or layer-wise manner to address parameter conflicts and task correlations.

    \item \textbf{Consensus TA}~\cite{wang2024localizing}: This technique first utilizes a multi-task model to derive binary masks that highlight parameters deemed critical for each task. The sparsity of these masks is controlled by a hyperparameter \( \lambda \), optimized over \( \{0.2, 0.3, 0.4, 0.5, 0.6\} \) using validation performance. Each task-specific mask is then applied to its corresponding task vector via an element-wise (Hadamard) product before the final merging, which follows the Task Arithmetic procedure.

    \item \textbf{EMR-Merging}~\cite{huang2024emr}: This approach begins by creating a consolidated "unified task vector" derived from the sign and magnitude of individual task vectors. It then computes task-specific binary masks and rescaling vectors for each task. The final merged model for a given task is obtained by an element-wise multiplication of this unified task vector with the corresponding task-specific mask and rescaler. This method is presented as hyperparameter-free.

    \item \textbf{Twin-Merging}~\cite{lu2024twin}: This method involves first training a router for dynamic task identification. A shared common expert is then established using Task Arithmetic with a predetermined scaling factor. Subsequently, exclusive knowledge vectors unique to each task are extracted, typically using Singular Value Decomposition (SVD) or a trimming procedure similar to TIES-Merging (with Trim reported as superior). At inference, the router assigns task-specific weights to an input. The final model output is derived by combining the shared expert with a weighted sum of these exclusive knowledge vectors, using the router-determined weights.

    \item \textbf{DaWin}~\cite{oh2025dawin}: This dynamic merging technique assigns an input-specific weight to each task model. These weights are calculated based on the Shannon entropy of the outputs from both the task-specific model and the pre-trained base model for the given input. To optimize inference speed, a Beta Mixture Model (BMM) can optionally be trained to approximate these dynamic weights, typically using \( K=3 \) mixture components.

    \item \textbf{MoW-Merging}~\cite{ye2025dynamic}: This method performs sample-wise weight fusion using a lightweight gating network. It predicts input-dependent coefficients and applies dynamic merging only to layers with strong interference, while layers with high similarity are handled by static merging.

    \item \textbf{WEMoE}~\cite{tang2024merging}: This method performs input-adaptive merging by decomposing model parameters into a shared component and task-specific components. A router predicts input-dependent weights, which are then used to reconstruct an effective model for the current input.
    
\end{itemize}

\subsection{Computational resources and training time}
\label{app:computational_resources_training}

All experimental procedures reported in this work, encompassing the training and inference of our proposed \method{} framework, as well as performance evaluations and computational cost measurements for baseline methods, were conducted on specific GPU hardware.
For experiments involving 14 tasks or fewer, NVIDIA GeForce RTX 3090 GPUs were utilized.
As a specific example of training duration, the training of \method{} for the 8-task vision benchmark typically completed in approximately 53 minutes and 49 seconds on a single NVIDIA GeForce RTX 3090 GPU.
For more extensive experiments involving 20 tasks or more, NVIDIA H100 80GB HBM3 GPUs were employed to accommodate the increased computational demands.
\FloatBarrier

\section{Additional experiments}
\label{app:additional_experiments}

This section provides additional experiments that complement the results in the main paper.
We first give a more detailed formulation of the Dirichlet-sampling training strategy used for soft recovery in unseen-task generalization as discussed in \textcolor{linkred}{Sec.~5.2}.
We then report additional unseen-task results under alternative unseen-task splits, study scalability on larger task suites, and finally evaluate \method{} on a fine-grained benchmark with high inter-task similarity.

\subsection{Dirichlet sampling training for unseen-task generalization}
\label{app:dirichlet_training}

The main paper studies unseen-task generalization through soft recovery over seen experts.
Here we provide a more explicit formulation of the additional training step used for that setting and an ablation that compares the base one-hot-indicator version of \method{} against its Dirichlet-sampling extension.

The base \method{} model is first trained with one-hot task indicators, exactly as in the standard task-expert recovery setting.
To further improve unseen-task generalization, we then fine-tune this already trained model with Dirichlet sampling.
Specifically, a continuous mixing vector
$\mathbf{w}=[w_1,\ldots,w_T]^\top$
is sampled from a symmetric Dirichlet distribution,
$\mathbf{w}\sim\mathrm{Dir}(\alpha)$,
where $T$ is the number of seen tasks.
Using this mixing vector, we define a soft task embedding and a soft target expert as
\begin{equation}
\label{eq:soft_embedding_target}
\ve_{\mathbf{w}} = \sum_{t=1}^{T} w_t \ve_t,
\qquad
\vtheta_{\mathbf{w}} = \sum_{t=1}^{T} w_t \vtheta_t.
\end{equation}
Given $\ve_{\mathbf{w}}$, the offset generator predicts a soft offset $\hat{\bm{\beta}}_{\mathbf{w}}$, and the fine-tuning objective becomes
\begin{equation}
\label{eq:retex_soft_loss}
\mathcal{L}_{\text{soft}}
=
\mathbb{E}_{\mathbf{w}\sim\mathrm{Dir}(\alpha)}
\left[
\left\|
(\vtheta_{\text{merge}}+\hat{\bm{\beta}}_{\mathbf{w}}) - \vtheta_{\mathbf{w}}
\right\|_2^2
\right].
\end{equation}
This additional training step exposes \method{} to convex combinations of seen experts and encourages a smoother recovery manifold for unseen inputs.

At inference time, instead of hard nearest-subspace assignment, the SVD projection residuals $\{\mathcal{R}_t(\mathbf{z})\}_{t=1}^{T}$ are converted into soft mixture weights
\begin{equation}
\label{eq:soft_routing}
p_t(\mathbf{z})
=
\frac{\exp(-\mathcal{R}_t(\mathbf{z})/\tau)}
{\sum_{j=1}^{T}\exp(-\mathcal{R}_j(\mathbf{z})/\tau)},
\end{equation}
where $\tau$ is a temperature parameter.
The resulting soft embedding
\[
\ve_{\mathbf{p}(\mathbf{z})}
=
\sum_{t=1}^{T} p_t(\mathbf{z}) \ve_t
\]
is fed to the fine-tuned offset generator to recover input-adaptive parameters
\[
\vtheta_{\mathbf{p}(\mathbf{z})}
=
\vtheta_{\text{merge}} + \hat{\bm{\beta}}_{\mathbf{p}(\mathbf{z})}.
\]

\begin{table}[t!]
\centering
\caption{
Performance comparison on seen and unseen tasks (EuroSAT and MNIST).
Base \method{} improves over Weight Averaging (WA), and further training with Dirichlet sampling consistently improves unseen-task generalization while remaining robust across different $\alpha$ values.
}
\label{tab:ood_robustness}
\footnotesize
\setlength{\tabcolsep}{4pt}
\begin{tabular}{lcc}
\toprule
\textbf{Method} & \textbf{Seen Task Accuracy} & \textbf{Unseen Task Accuracy} \\
\midrule
Weight Averaging (WA) & 63.7 & 57.8 \\
\midrule
\rowcolor[HTML]{EFEFEF}
\multicolumn{3}{l}{\textbf{\textit{(One-hot indicator training)}}} \\
WA + \method{} & 90.4 & 63.4 \\
\midrule
\rowcolor[HTML]{EFEFEF}
\multicolumn{3}{l}{\textbf{\textit{(Dirichlet-sampling training)}}} \\
WA + \method{} ($\alpha = 0.01$) & 90.1 & 67.1 \\
WA + \method{} ($\alpha = 0.1$)  & 90.1 & 67.2 \\
WA + \method{} ($\alpha = 1.0$)  & 90.2 & 67.2 \\
WA + \method{} ($\alpha = 2.0$)  & 90.1 & 67.2 \\
WA + \method{} ($\alpha = 5.0$)  & 90.1 & 67.3 \\
\bottomrule
\end{tabular}
\end{table}

\cref{tab:ood_robustness} compares three stages: the Weight Averaging baseline, the base \method{} model trained with one-hot task indicators, and the further fine-tuned variants with Dirichlet sampling.
The base one-hot-indicator version already improves over Weight Averaging on both seen and unseen tasks, showing that expert recovery alone provides a stronger starting point.
Additional Dirichlet-sampling training then yields a clear gain on unseen tasks while preserving seen-task performance.
Moreover, the unseen-task improvement remains stable across a wide range of $\alpha$ values, indicating that the benefit of soft recovery does not depend strongly on the specific concentration parameter.

\subsection{More unseen-task generalization experiments}
\label{app:more_ood_experiments}

Beyond the EuroSAT/MNIST unseen-task setting above, we further evaluate the Dirichlet-sampling version of \method{} under alternative unseen-task compositions and larger unseen-task benchmarks.

\begin{table}[t!]{
\caption{\textbf{Comparison of OOD robustness and seen task preservation across different merging strategies.} 
Evaluating on unseen tasks (RESISC45, SVHN), \method{} consistently outperforms baseline merging methods (Weight Averaging, Task Arithmetic, TIES-Merging) in terms of generalization. Furthermore, it successfully recovers Seen task accuracy, which is often degraded in baselines due to parameter interference, to over 90\%.}
\label{tab:ood_type1}
%\textcolor{red}{
%Evaluation of soft target ReTeX applied to different merging methods 
%}
}

%\vspace{-1em}
% \label{tab:ood_robustness_table}
\centering
\setlength{\tabcolsep}{4pt}
% [수정] \resizebox로 tabular 감싸기
\resizebox{\columnwidth}{!}{%
\begin{tabular}{lcc}
\toprule
\textbf{Method} & \textbf{Seen Task Accuracy} & \textbf{Unseen Task Accuracy} \\ 
\midrule
Weight Averaging & 68.5 & 53.8 \\
\rowcolor[HTML]{FFF2CC}
Weight Averaging + \method{} & $\mathbf{91.3}$ & $\mathbf{54.5}$ \\

\midrule
Task Arithmetic~\cite{ilharco2023editing} & 74.7 & 50.8 \\
\rowcolor[HTML]{FFF2CC}
Task Arithmetic + \method{} & $\mathbf{91.2}$ & $\mathbf{55.9}$ \\

\midrule
TIES-Merging~\cite{yadav2023ties} & 77.3 & 49.7 \\
\rowcolor[HTML]{FFF2CC}
TIES-Merging + \method{} & $\mathbf{90.6}$ & $\mathbf{56.0}$ \\

\midrule
AdaMerging~\cite{yang2024adamerging} & 76.3 & 51.6\\ 
\rowcolor[HTML]{FFF2CC}
AdaMerging + \method{} & $\mathbf{91.3}$ & $\mathbf{55.2}$ \\ 
\bottomrule
\end{tabular}
} % \resizebox 닫기
%\vspace{-1em}
\end{table}

\noindent\textbf{Robust generalization across merging methods.}
As shown in \cref{tab:ood_type1}, \method{} demonstrated strong robustness by consistently enhancing the generalization capability on unseen tasks (RESISC45 and SVHN) across all evaluated merging methods.
Regardless of the base merging strategy—whether Weight Averaging, Task Arithmetic, TIES-Merging, or AdaMerging—the integration of \method{} consistently improves out-of-distribution (OOD) performance. 
% This indicates that the regularization effect induced by Dirichlet sampling allows the merged model to robustly adapt to novel domains, demonstrating its broad applicability across various merging mechanisms.
These results are consistent with the view that Dirichlet-sampling training improves unseen-task generalization across merging methods, demonstrating its broad applicability.

\begin{table}[t!]
\caption{\textbf{Comparison of robustness on a 30-task benchmark (20 seen, 10 unseen).}
We evaluate unseen-task generalization on a larger benchmark and compare Weight Averaging, Task Arithmetic, TIES-Merging, and AdaMerging with and without \method{}.
\method{} consistently improves unseen-task performance across all base merging methods while substantially recovering seen-task accuracy.}

\label{tab:ood_robustness_table}
\centering
\setlength{\tabcolsep}{4pt}
\resizebox{\columnwidth}{!}{%
\begin{tabular}{lcc}
\toprule
\textbf{Method} & \textbf{Seen Task Accuracy} & \textbf{Unseen Task Accuracy} \\ 
\midrule
Weight Averaging & 55.0 & 54.7 \\
\rowcolor[HTML]{FFF2CC}
Weight Averaging + \method{} & $\mathbf{88.4}$ & $\mathbf{55.0}$ \\ 
% 여기부터 추가
\midrule
\textcolor{black}{Task Arithmetic~\cite{ilharco2023editing}} & \textcolor{black}{61.0} & \textcolor{black}{54.8} \\
\rowcolor[HTML]{FFF2CC}
\textcolor{black}{Task Arithmetic + \method{}} & $\mathbf{\textcolor{black}{89.0}}$ & $\mathbf{\textcolor{black}{55.2}}$ \\

\midrule
\textcolor{black}{TIES-Merging~\cite{yadav2023ties}} & \textcolor{black}{57.8} & \textcolor{black}{51.0} \\
\rowcolor[HTML]{FFF2CC}
\textcolor{black}{TIES-Merging + \method{}} & $\mathbf{\textcolor{black}{85.5}}$ & $\mathbf{\textcolor{black}{55.2}}$ \\

\midrule
\textcolor{black}{AdaMerging~\cite{yang2024adamerging}} & \textcolor{black}{69.2} & \textcolor{black}{54.2}\\ 
\rowcolor[HTML]{FFF2CC}
\textcolor{black}{AdaMerging + \method{}} & $\mathbf{\textcolor{black}{89.0}}$ & $\mathbf{\textcolor{black}{55.4}}$ \\ 

\bottomrule
\end{tabular}
}
\end{table}

\noindent\textbf{Scalability on large-scale benchmark.}
To verify the scalability of our approach, we extended the evaluation to a larger benchmark consisting of 30 tasks (20 seen and 10 unseen). 
As presented in \cref{tab:ood_robustness_table}, \method{} demonstrated robust performance even in this challenging large-scale setting. 
Compared to the Weight Averaging baseline, \method{} significantly recovers the accuracy on seen tasks (from 55.0\% to 88.4\%) while maintaining superior generalization capability on unseen tasks, confirming that further training with Dirichlet sampling is effective regardless of the number of tasks.

\subsection{Scalability to larger task suites}
\label{app:larger_task_suites}

\noindent\textbf{Setting.}
To assess scalability to larger and more heterogeneous task collections, we extend the ViT-B/32 vision benchmark in \cref{sec:vision_tasks} of the main paper from 30 tasks to 50 tasks.
The 30-task suite augments the 20-task setting with ten additional datasets:
Vegetables~\cite{ahmed2021dcnn},
Kvasir-v2~\cite{pogorelov2017kvasir},
Intel Images~\cite{bansal2019intel},
Weather~\cite{xiao2021classification},
Cats and dogs~\cite{cukierskidogs},
MangoLeafBD~\cite{ahmed2023mangoleafbd},
Beans~\cite{beansdata},
Landscape Recognition~\cite{Landscape},
Garbage Classification~\cite{cchang_2018},
and Fruits-360~\cite{muresan2018fruit}.
The 50-task suite further adds twenty more datasets:
PlantVillage~\cite{mohanty2016plantdisease},
OCT2017~\cite{kermany2018identifying},
ASL Alphabet~\cite{grassknoted_asl_alphabet},
AID~\cite{xia2017aid},
CUB200-2011~\cite{wah2011cub},
ObjectCategories-256~\cite{griffin2007caltech256},
Animals-10~\cite{corrado_animals10},
EyePACS~\cite{cuadros2009eyepacs},
StanfordDogs~\cite{khosla2011fgvc},
MIT Indoor-67~\cite{quattoni2009indoor},
Imagenette2-320~\cite{howard2019imagenette},
Caltech-101~\cite{fei2007caltech101},
RAF-DB~\cite{li2017reliable},
FGVC-Aircraft~\cite{maji2013aircraft},
DeepWeeds~\cite{olsen2019deepweeds},
Imagewoof~\cite{howard2019imagenette},
NABirds~\cite{vanhorn2015nabirds},
Leafsnap~\cite{kumar2012leafsnap},
AFHQ~\cite{choi2020starganv2},
and ArtBench10~\cite{liao2022artbench}.
All methods are evaluated under the same task-agnostic protocol as in the main vision experiments.

\noindent\textbf{Results.}
\begin{table}[t]
\caption{\textbf{Multi-task performance on ViT-B/32 across increasing numbers of vision tasks.}
Values in parentheses $_{(\cdot)}$ indicate normalized accuracy (merged / individual).}
\centering
\label{tab:vision_vitb32_scalability}
\begin{tabular}{l|ccccc}
\toprule
\multicolumn{1}{l|}{\textbf{Method}} & \textbf{8 tasks} & \textbf{14 tasks} & \textbf{20 tasks} & \textbf{30 tasks} & \textbf{50 tasks} \\
\midrule
\multicolumn{1}{l|}{Zero-shot}  & 48.3 & 57.2 & 56.1 & 55.5 & 46.8 \\
\multicolumn{1}{l|}{Individual} & 92.9 & 90.9 & 91.4 & 93.1 & 90.8 \\
\midrule

\rowcolor[HTML]{EFEFEF}
\multicolumn{6}{l}{\textbf{\textit{(Static model merging)}}} \\
\multicolumn{1}{l|}{Weight Averaging}                          & 66.3$_{(72.1)}$ & 64.3$_{(71.1)}$ & 61.0$_{(67.5)}$ & 59.1$_{(64.2)}$ & $47.8_{(52.6)}$ \\
\multicolumn{1}{l|}{Task Arithmetic~\cite{ilharco2023editing}} & 70.6$_{(76.3)}$ & 65.3$_{(72.0)}$ & 60.5$_{(66.7)}$ & 58.0$_{(62.8)}$ & 46.8$_{(51.5)}$ \\
\multicolumn{1}{l|}{TIES-Merging~\cite{yadav2023ties}}        & 74.5$_{(80.3)}$ & 65.1$_{(71.9)}$ & 62.3$_{(68.8)}$ & 59.6$_{(64.7)}$ & 47.5$_{(52.3)}$ \\
\multicolumn{1}{l|}{Consensus TA~\cite{wang2024localizing}}     & 75.0$_{(80.8)}$ & 70.4$_{(77.4)}$ & 65.4$_{(72.0)}$ & 63.4$_{(68.5)}$ & $52.1_{(57.4)} $ \\
\multicolumn{1}{l|}{AdaMerging~\cite{yang2024adamerging}}     & 75.9$_{(81.7)}$ & 70.5$_{(77.3)}$ & 66.2$_{(72.7)}$ & 56.8$_{(61.3)}$ & 49.4$_{(54.4)}$ \\
\midrule

\rowcolor[HTML]{EFEFEF}
\multicolumn{6}{l}{\textbf{\textit{(Dynamic model merging)}}} \\
\multicolumn{1}{l|}{Twin-Merging~\cite{lu2024twin}}           & 84.0$_{(90.3)}$ & 70.0$_{(76.7)}$ & 57.5$_{(61.8)}$ & 60.1$_{(65.2)}$ & 36.4$_{(40.1)}$ \\
\multicolumn{1}{l|}{DaWin~\cite{oh2025dawin}}                 & 89.0$_{(95.3)}$ & 73.8$_{(80.3)}$ & 52.8$_{(57.7)}$ & 40.3$_{(42.9)}$ & 33.1$_{(36.5)}$ \\
\multicolumn{1}{l|}{MoW-Merging~\cite{ye2025dynamic}}         & 88.1$_{(94.8)}$ & 83.2$_{(91.5)}$ & 79.3$_{(86.8)}$ & 56.4$_{(60.6)}$ & 30.1$_{(33.1)}$ \\
\multicolumn{1}{l|}{WEMoE~\cite{tang2024merging}}             & 90.4$_{(97.3)}$ & 83.1$_{(90.7)}$ & 74.4$_{(81.2)}$ & 67.1$_{(72.4)}$ & 51.6$_{(56.8)}$ \\
\midrule

\rowcolor[HTML]{FFF2CC}
\multicolumn{1}{l|}{\textbf{\method{} (Ours)}} 
& $\mathbf{92.2}_{(99.3)}$ 
& $\mathbf{89.8}_{(98.8)}$ 
& $\mathbf{89.8}_{(98.3)}$
& $\mathbf{91.2}_{(97.9)}$ 
& $\mathbf{81.5}_{(89.8)}$ \\
\bottomrule
\end{tabular}
\end{table}
\cref{tab:vision_vitb32_scalability} summarizes performance as the number of vision tasks increases from 8 to 50.
The 30-task results are consistent with the trend reported in \cref{sec:vision_tasks}: static and dynamic merging baselines deteriorate as the task suite grows, while \method{} remains substantially stronger than all available baselines even at larger scale.
This trend becomes even more pronounced at 50 tasks.
% Among the available baselines, the strongest 50-task result reaches 59.1 for Task Arithmetic in the static group and 51.6 for WEMoE in the dynamic group, whereas \method{} attains 81.5, corresponding to 89.8\% normalized accuracy.
%-- 구체적 성능 언급 수정
While the strongest baselines in both static and dynamic groups suffer severe performance drops at this scale, \method{} exhibits minimal degradation, maintaining a striking performance gap over all other methods.

These results indicate that offset-based expert recovery continues to scale effectively even when both the number of tasks and task heterogeneity increase substantially.

\subsection{Fine-grained tasks with high inter-task similarity}
\label{app:fine_grained_similarity}

\begin{figure*}[t]
    \centering
    \begin{minipage}{0.49\linewidth}
        \centering
        \includegraphics[width=\linewidth]{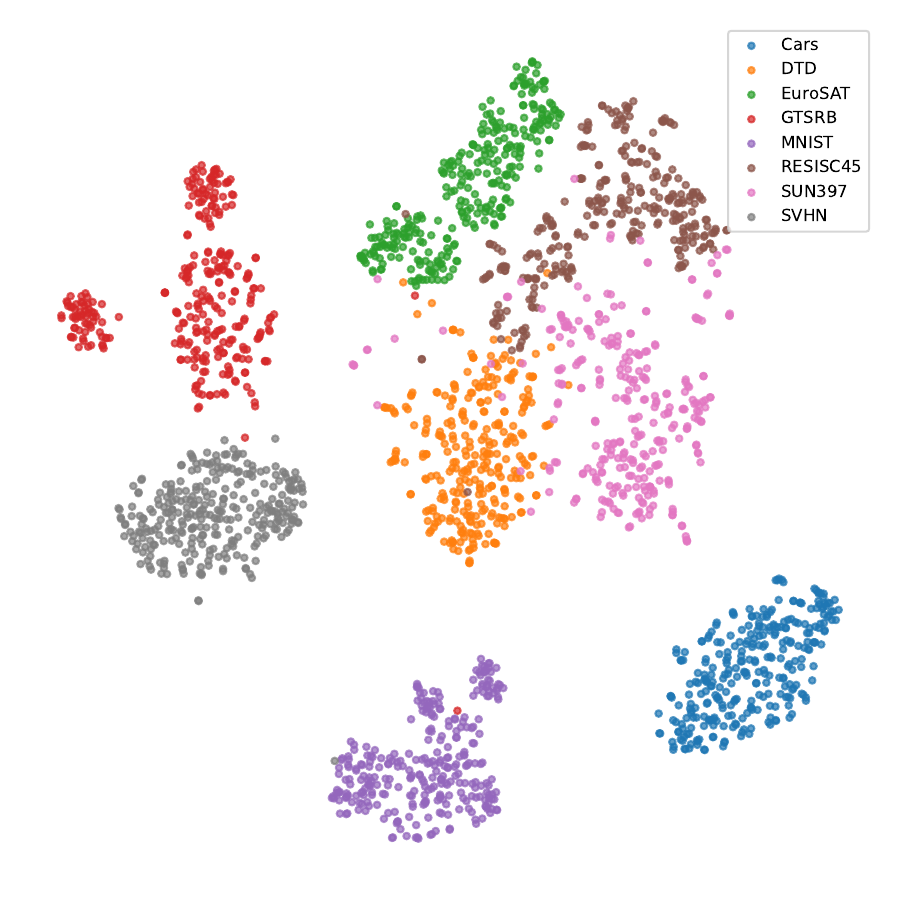}
    \end{minipage}
    \hfill
    \begin{minipage}{0.49\linewidth}
        \centering
        \includegraphics[width=\linewidth]{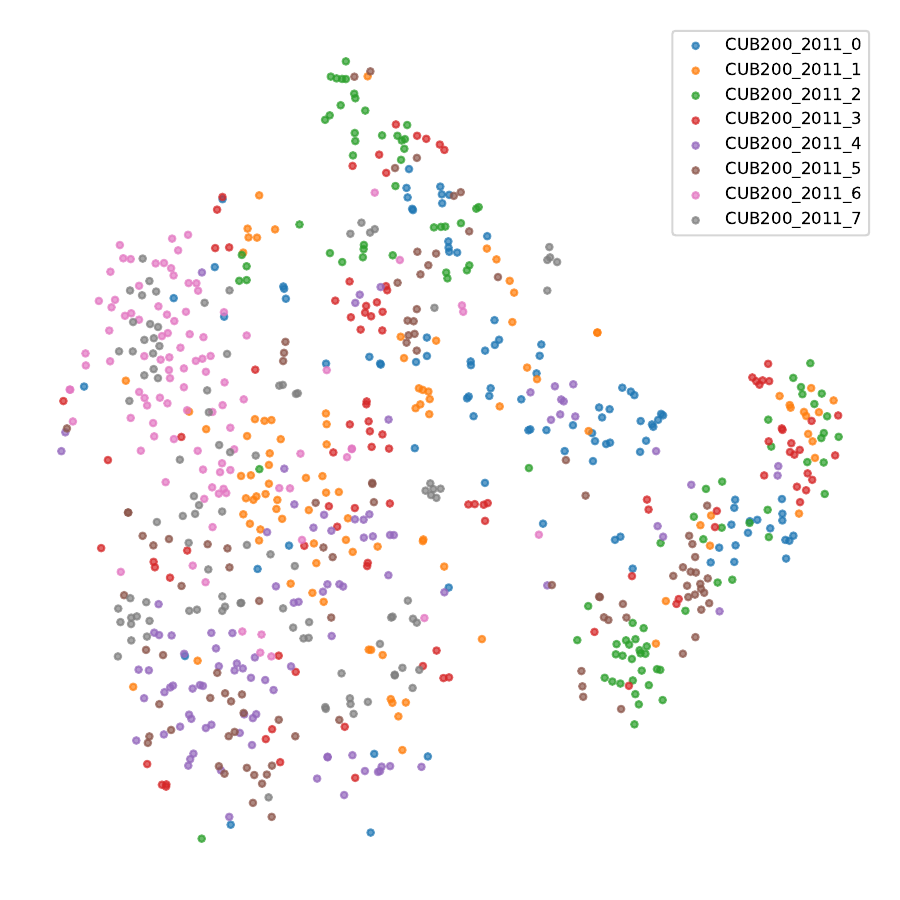}
    \end{minipage}
    \caption{\textbf{Feature overlap comparison between the original vision benchmark and the fine-grained bird benchmark.}
    The visualization is obtained by applying t-SNE to the final merged-model representations.
    The bird benchmark exhibits substantially stronger feature overlap across tasks, indicating a more challenging setting for task identification and task expert recovery.}
    \label{fig:finegrained_feature_overlap}
\end{figure*}

\cref{fig:finegrained_feature_overlap} compares feature distributions between the original vision benchmark and the fine-grained bird benchmark.
Compared with the original benchmark, the bird setting shows substantially stronger feature overlap across tasks, making task identification more challenging.
To rigorously assess robustness under such high inter-task similarity, we construct a fine-grained benchmark from the CUB-200-2011 birds dataset~\cite{wah2011cub}.
Specifically, we partition the bird classes into eight disjoint subsets, treat each subset as an individual task, and fine-tune one expert per task using a CLIP ViT-B/32 backbone.
Since all tasks belong to the same visual domain and differ only through subtle fine-grained distinctions between bird categories, this setting provides a stronger stress test than the heterogeneous vision benchmark used in the main paper.

\begin{table*}[t!]
\centering
\caption{
\textbf{Multi-task performance for 8 fine-grained bird classification tasks (CUB-200 dataset~\cite{wah2011cub}).} We report experimental results for ViT-B/32 on 8 fine-grained bird classification tasks from the CUB-200 dataset. Bold values represent the best performance among all methods, excluding the individual task-specific baselines.}

\label{tab:bird}
\setlength\tabcolsep{4.0pt}
\resizebox{0.5\textwidth}{!}{
\begin{tabular}{l|c}
\toprule
\textbf{Method} 
& \textbf{Mean Accuracy}  \\
\midrule
Fine-tuned & 78.1 \\
Zero-shot & 57.3 \\
\midrule
\rowcolor[HTML]{EFEFEF}
\multicolumn{2}{l}{\textbf{\textit{(Static model merging)}}} \\
Weight Averaging~\cite{utans1996weight} & 60.9   \\
Task Arithmetic~\cite{ilharco2023editing} & 62.1\\
TIES-Merging~\cite{yadav2023ties}    & 61.4\\
Consensus TA~\cite{wang2024localizing} & 60.1 \\
AdaMerging~\cite{yang2024adamerging} & 62.5 \\

\midrule
\rowcolor[HTML]{EFEFEF}
\multicolumn{2}{l}{\textbf{\textit{(Dynamic model merging)}}} \\
Twin-Merging~\cite{lu2024twin} & 64.6\\
DaWin~\cite{oh2025dawin}  & 61.0\\
WEMoE~\cite{tang2024merging} & 57.0 \\
MoW-Merging~\cite{ye2025dynamic}   & 58.9 \\
\midrule
\rowcolor[HTML]{FFF2CC}
\textbf{\method{} (Ours) }    &  $\mathbf{ 71.3 }$ \\

\bottomrule
\end{tabular}
}
\end{table*}

\cref{tab:bird} shows that \method{} outperforms all compared static and dynamic model merging baselines even in this fine-grained setting.
These results suggest that the proposed task identification and task expert recovery pipeline remains effective even when task similarity is high and feature overlap is substantial.

\begingroup
\color{black}
\subsection{Robustness under corrupted inputs}
\label{app:robustness_under_corrupted_input}

\noindent\textbf{Setting.}
We further evaluate the robustness of \method{} under input corruptions.
This experiment tests whether the task identification and recovery pipeline remains reliable when the input distribution is perturbed at test time.
We consider the clean test set and seven corruption types: motion blur, impulse noise, Gaussian noise, pixelation, spatter, contrast change, and JPEG compression.
For each corrupted test set, we report the task identification accuracy, the performance of \method{} with oracle task IDs, the performance of \method{} with predicted task IDs, and the performance of representative dynamic merging baselines.

\begin{table}[t]
\centering
\caption{\textbf{Robustness under corrupted inputs.}
We report task identification accuracy and downstream accuracy on clean and corrupted test sets.
\method{} (oracle) uses the ground-truth task ID and serves as an upper bound without task-identification errors.}
\label{tab:corruption_robustness}
\resizebox{\linewidth}{!}{%
\begin{tabular}{lccccc}
\toprule
Test set & Task-ID Acc. & \method{} (oracle) & \method{} & AdaMerging & WEMoE \\
\midrule
Clean    & 98.90 & 92.60 & \textbf{92.20} & 75.90 & 90.40 \\
Motion   & 96.53 & 85.65 & \textbf{83.73} & 61.16 & 80.64 \\
Impulse  & 81.75 & 80.54 & \textbf{70.61} & 51.51 & 69.90 \\
Gaussian & 85.10 & 77.29 & \textbf{71.57} & 53.85 & 70.87 \\
Pixelate & 98.48 & 91.16 & \textbf{90.26} & 62.78 & 85.24 \\
Spatter  & 98.49 & 91.62 & \textbf{90.96} & 69.73 & 87.74 \\
Contrast & 97.21 & 88.59 & \textbf{87.18} & 65.48 & 84.92 \\
JPEG     & 96.48 & 84.37 & \textbf{82.49} & 61.76 & 79.73 \\
\bottomrule
\end{tabular}%
}
\end{table}

\noindent\textbf{Results.}
As shown in \cref{tab:corruption_robustness}, \method{} remains consistently stronger than the compared baselines across all corrupted test sets.
Although corruptions can reduce task identification accuracy, especially under impulse and Gaussian noise, the recovered model still maintains higher accuracy than AdaMerging and WEMoE in all cases.
The gap between \method{} and \method{} (oracle) further indicates that part of the remaining degradation comes from task identification errors rather than from the recovery module itself.
These results suggest that the proposed task identification and expert recovery pipeline is robust to input-level distribution shifts.
\endgroup
\FloatBarrier

\section{More ablation study}
\label{app:more_ablation_study}

\subsection{Representation source and feature layer}
\label{app:representation_source_and_layer}

\noindent\textbf{Setting.}
This subsection studies how task identification depends on
(i) the representation source used to build task signatures and extract test features, and
(ii) the layer index from which the feature is taken.
We evaluate the 8-task vision setting with a CLIP ViT-B/32 backbone.
We consider five representation sources: representations extracted from the pre-trained model (PRE), and representations extracted from merged models obtained by Weight Averaging (WA), Task Arithmetic (TA), TIES-Merging (TIES), and AdaMerging (ADA).
For each source, we vary the feature layer index used both for task signatures and for test-time feature extraction, and report task identification accuracy under the SVD projection-residual rule in \cref{sec:task_identification}.

\begin{figure}[t]
    \centering
    \includegraphics[width=\linewidth]{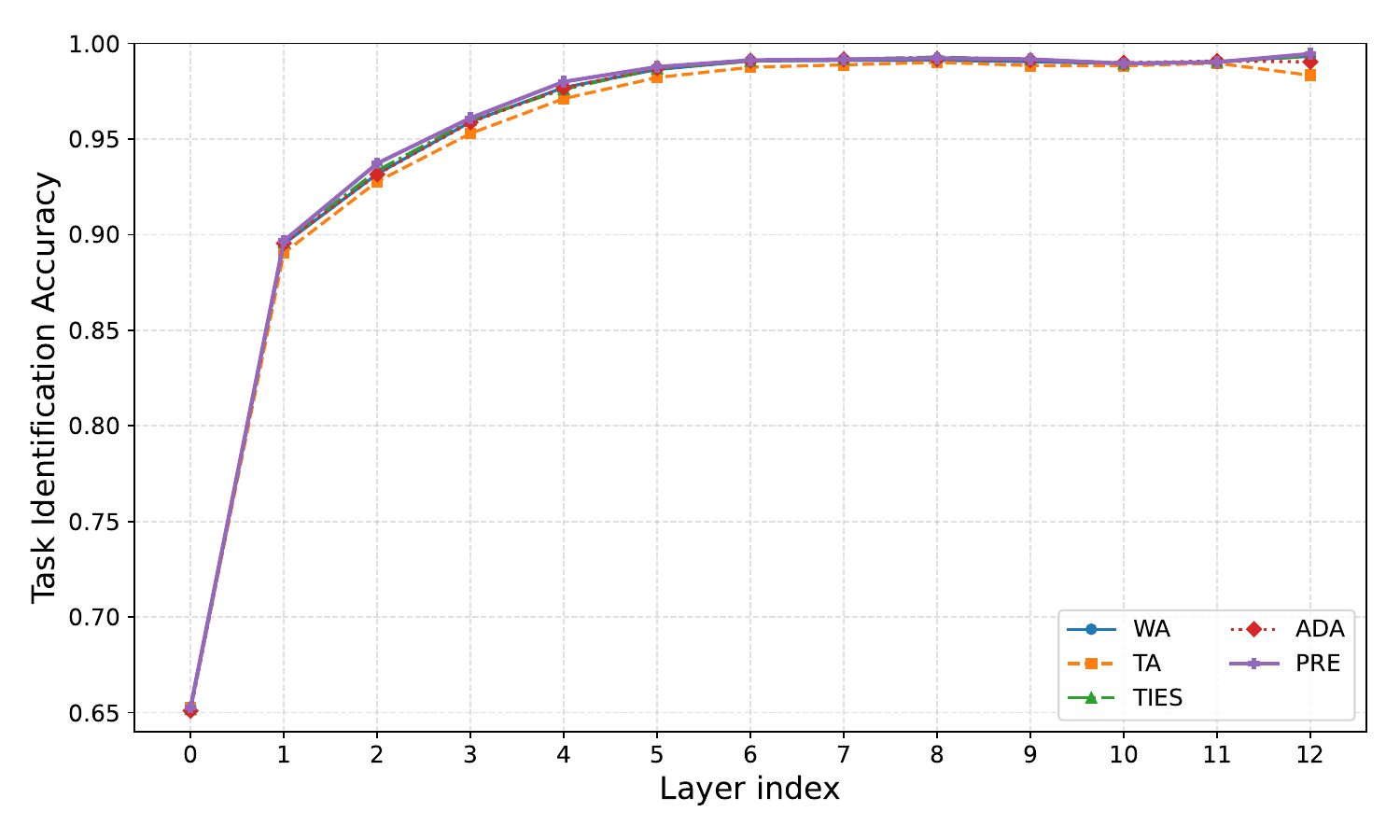}
    \caption{\textbf{Task identification accuracy across representation sources and feature layers.}
    We compare five representation sources: the pre-trained model (PRE) and merged models obtained by WA, TA, TIES, and ADA, in the 8-task vision setting with ViT-B/32.
    Task identification becomes highly accurate from middle layers onward, and all representation sources remain competitive across deeper layers.}
    \label{fig:layer_index_task_identification}
\end{figure}

\noindent\textbf{Results.}
\cref{fig:layer_index_task_identification} shows two consistent trends.
First, task identification remains robust across all representation sources once the feature layer is sufficiently deep.
In particular, strong task separation already emerges at middle layers, and performance becomes highly accurate from the middle-to-late layers for all five sources.
This indicates that, although the main experiments use the final-layer representation, task identification does not require forwarding the merged model all the way to the last layer; intermediate features are already sufficient for accurate task prediction.

Second, representations extracted from the pre-trained model also provide competitive task signatures and test features.
The importance of PRE here is primarily practical rather than absolute performance.
Unlike merged-model features, pre-trained features can be extracted and stored at the beginning of task-specific fine-tuning.
This helps alleviate the practical concern that task signatures must otherwise be recomputed only after merged-model construction.

\subsection{Number of reference inputs and subspace dimension}
\label{app:bank_size}

\noindent\textbf{Setting.}
This subsection studies two design choices in task identification:
the number of reference inputs used to construct task signatures, and the retained singular-vector ratio $k$ used to define each task subspace.
All experiments use the 8-task vision setting with a CLIP ViT-B/32 backbone.
We report both task identification accuracy and normalized accuracy after recovery.

\begin{figure}[t]
\centering
\begin{minipage}{0.49\linewidth}
    \centering
    \includegraphics[width=\linewidth]{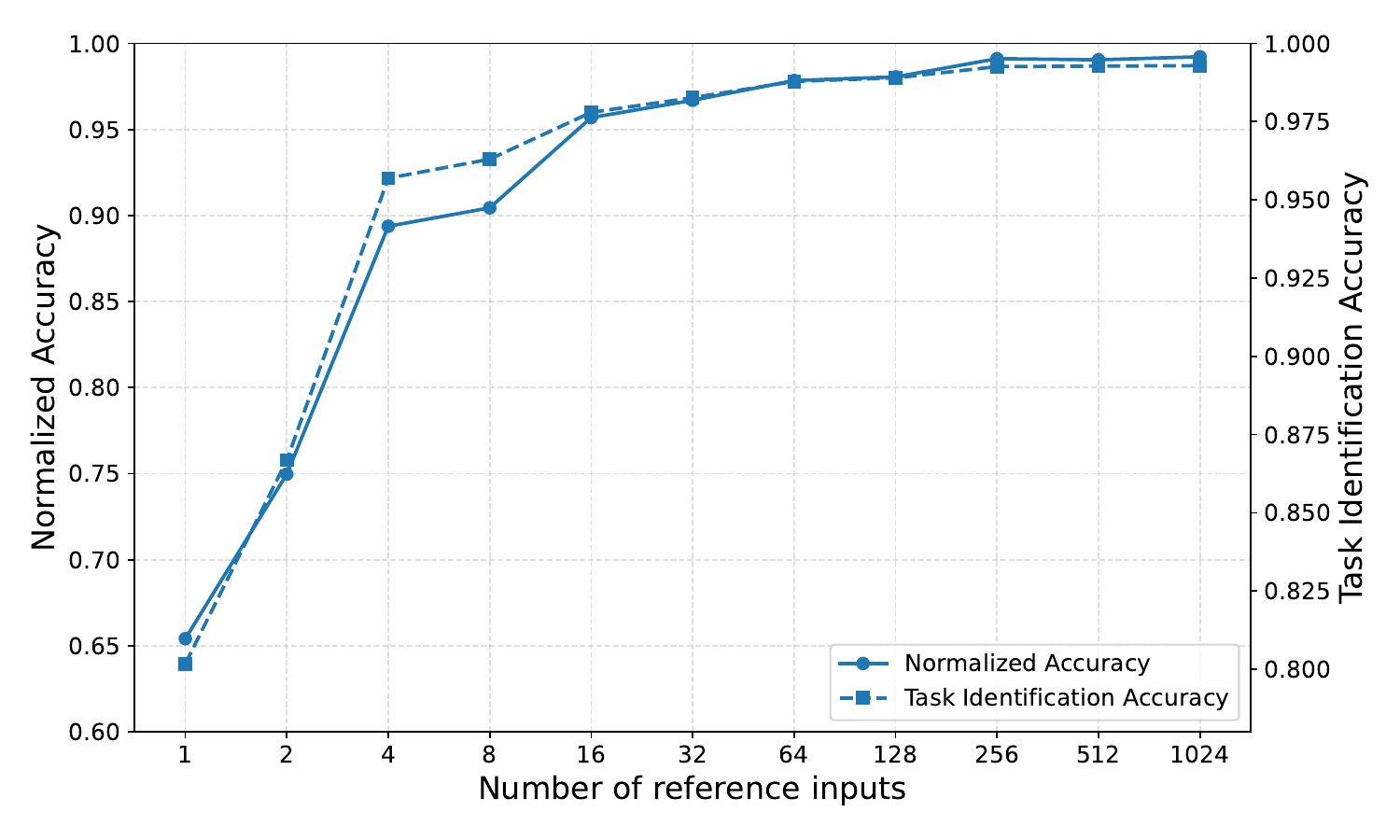}
\end{minipage}
\hfill
\begin{minipage}{0.49\linewidth}
    \centering
    \includegraphics[width=\linewidth]{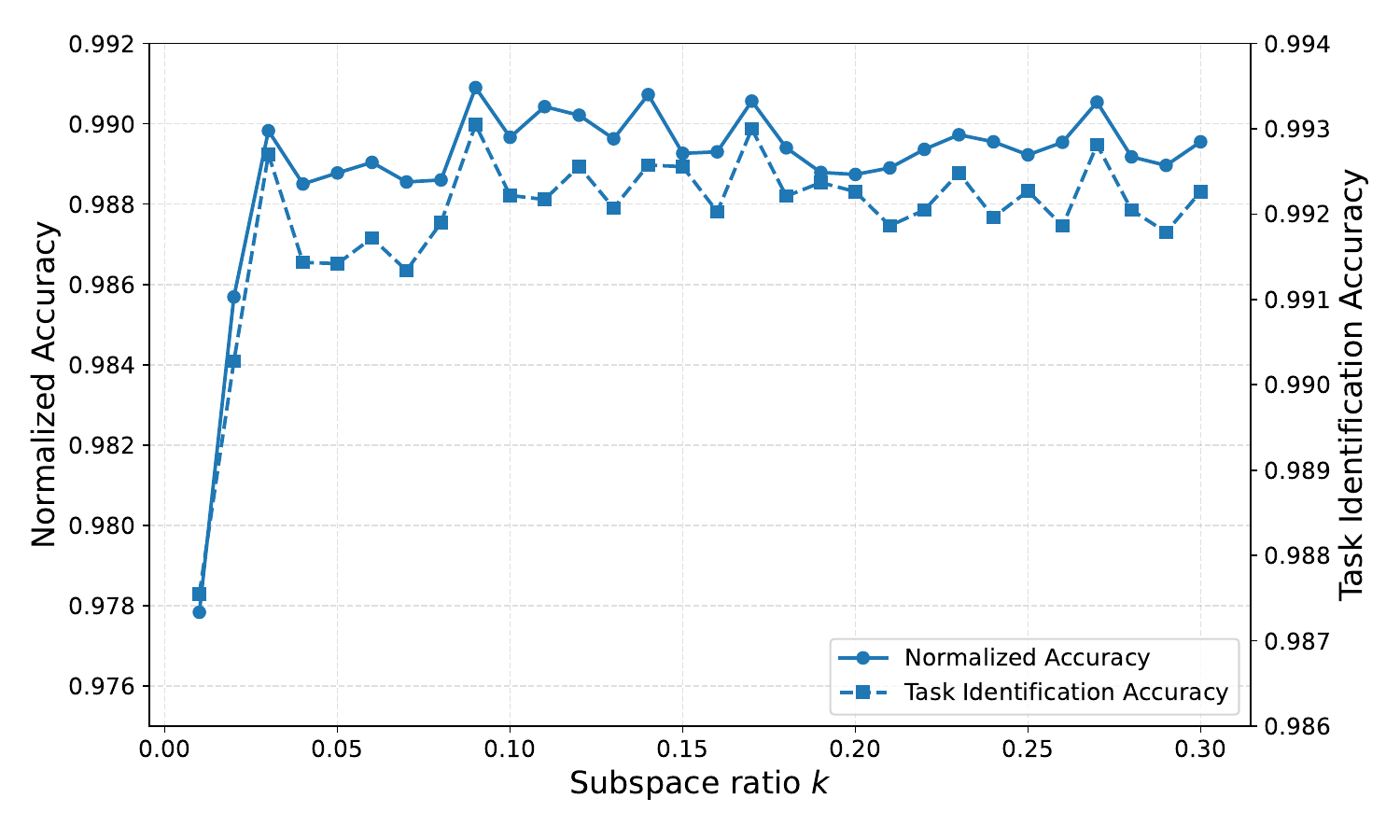}
\end{minipage}
\caption{\textbf{Task identification design choices.}
Left: effect of the number of reference inputs per task.
Right: effect of the singular-vector ratio $k$ used to define each task subspace.
Both plots report task identification accuracy and normalized accuracy in the 8-task vision setting with ViT-B/32.}
\label{fig:bank_size_and_subspace_ratio}
\end{figure}

\noindent\textbf{Results.}
\cref{fig:bank_size_and_subspace_ratio} shows that task identification improves as the number of reference inputs increases, but quickly reaches a regime of diminishing returns.
These results indicate that task signatures do not require a large reference set, and that even substantially fewer reference inputs than the default setting can already provide strong task identification and recovery.
In the remaining experiments, we use 64 reference inputs per task as a stable default choice.

The right plot shows that performance remains stable across a broad range of subspace ratios $k$.
Very small values degrade performance, but once $k$ is not too small, both task identification accuracy and normalized accuracy remain nearly constant.
Based on this trade-off, we use $k=0.1$ in the main experiments, since it provides strong and stable performance while keeping task signatures compact.

\subsection{Rank}
\label{app:rank_ablation}

We study how the low-rank dimension $r$ affects recovery quality and memory.
\cref{fig:vitb32_rank_tradeoff} reports normalized accuracy (mean accuracy divided by the corresponding individual-expert accuracy) and required memory ratio (memory relative to the base model) on ViT-B/32.
Across a sweep of $r$, \method{} exceeds $99\%$ normalized accuracy once $r\!\ge\!128$, and the gains saturate beyond $r\!=\!256$.
Since the parameter and activation costs grow roughly linearly with $r$ through the factors $a\times r$ and $r\times b$, we set $r\!=\!256$ in the main experiments to balance accuracy and memory.

\begin{figure}[t!]
    \centering
    \includegraphics[width=0.7\linewidth]{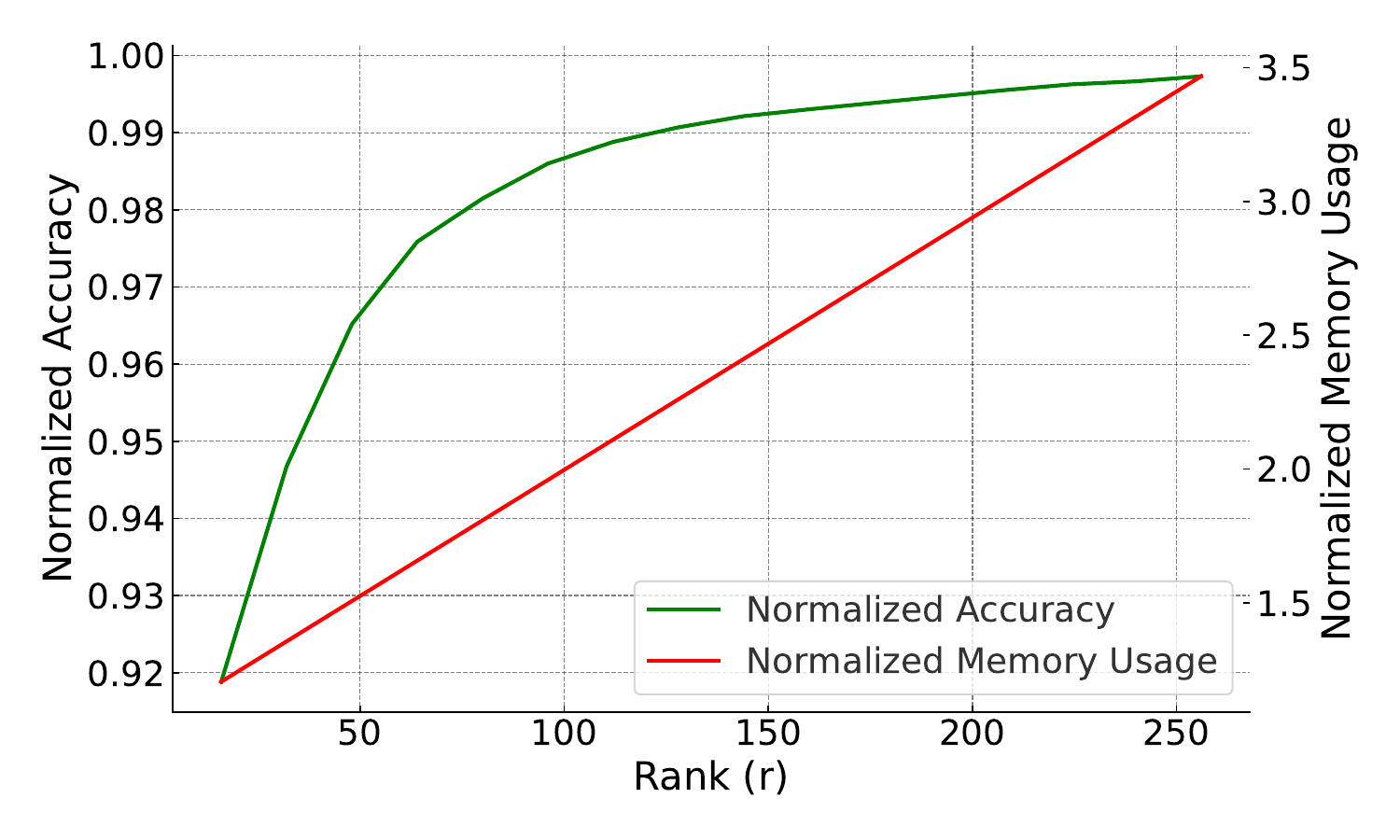}
    \caption{
    Normalized accuracy and required memory ratio as a function of rank $r$ on ViT-B/32.
    }
    \label{fig:vitb32_rank_tradeoff}
\end{figure}

\subsection{Random seed}
\label{app:random_seed}

\begin{table}[h]
\centering
\caption{Normalized accuracy for each random seed and number of tasks across all tasks in the computer vision task with ViT-B/32 CLIP backbone.
The bottom row reports the sample mean and its standard deviation (Mean $\pm$ std) over the five seeds.}

\resizebox{0.8\textwidth}{!}{
\begin{tabular}{cccc}
\toprule
\textbf{Seed} & \textbf{8 tasks} & \textbf{14 tasks} & \textbf{20 tasks} \\
\midrule
0 & 99.2251 & 98.7085 & 98.0569 \\
1 & 99.2021 & 98.6676 & 98.1559 \\
2 & 99.2138 & 98.6494 & 98.0502 \\
3 & 99.1702 & 98.7632 & 97.9032 \\
4 & 99.2792 & 98.5788 & 98.0546 \\
\midrule
\textbf{Mean $\pm$ std} & $99.2181 \pm 0.0399$ & $98.6735 \pm 0.0687$ & $98.0442 \pm 0.0904$ \\
\bottomrule
\end{tabular}
}

\label{tab:seed_results}
\end{table}

To assess the stability and robustness of \method{} with respect to initialization and other sources of randomness in the training process, we conducted experiments across multiple random seeds.
\cref{tab:seed_results} presents the normalized accuracy of \method{} on the ViT-B/32 CLIP backbone for computer vision task suites of 8, 14, and 20 tasks, evaluated over five different random seeds (0 through 4).
The results demonstrate a high degree of consistency across seeds.
This low variance across different seeds indicates that the performance of \method{} is not highly sensitive to the specific random initialization used, suggesting reliable and reproducible outcomes.

\subsection{Cosine similarity as an objective}
\label{app:cosine_objective}

Prior work in model merging sometimes uses cosine similarity to measure alignment between task-dependent directions in parameter space~\cite{yang2024adamerging, huang2024emr, xiong2024multi, davari2024model}.
Motivated by this, we study an alternative cosine-based objective for \method{} using the same notation as the main paper.
Since the reconstruction objective in \cref{eq:retex_training_loss_offset} compares the recovered parameters \(\vtheta_{\text{merge}}+\hat{\bm{\beta}}_t\) with the target expert \(\vtheta_t\), the corresponding target direction is given by \(\vtheta_{\text{merge}}-\vtheta_t\).
To keep the cosine objective aligned with the recovery direction of \(\hat{\bm{\beta}}_t\), we compare \(-\hat{\bm{\beta}}_t\) with \(\vtheta_{\text{merge}}-\vtheta_t\):
\begin{equation}
\mathcal{L}_{\mathrm{cos}}
=
\mathbb{E}_{t\sim\mathcal{U}(1,T)}
\left[
1-
\frac{
(-\hat{\bm{\beta}}_t)\cdot(\vtheta_{\text{merge}}-\vtheta_t)
}{
\|\hat{\bm{\beta}}_t\|_2\,\|\vtheta_{\text{merge}}-\vtheta_t\|_2
}
\right].
\label{eq:cosine_similarity_theta_loss}
\end{equation}
We also study the combined objective
\(
\mathcal{L}_{\mathrm{cos}}+\lambda \mathcal{L},
\)
where \(\mathcal{L}\) denotes the reconstruction loss in \cref{eq:retex_training_loss_offset}.

\cref{fig:cosine_vs_l2_loss_ablation} shows that cosine similarity alone is not sufficient for high-fidelity task expert recovery.
Performance improves substantially once the reconstruction loss is included, and quickly approaches the behavior of the standard objective.
This indicates that the reconstruction loss in \cref{eq:retex_training_loss_offset} is the main driver of effective recovery, while cosine similarity provides limited additional benefit.

\begin{figure}[h!]
    \centering
    \includegraphics[width=0.7\columnwidth]{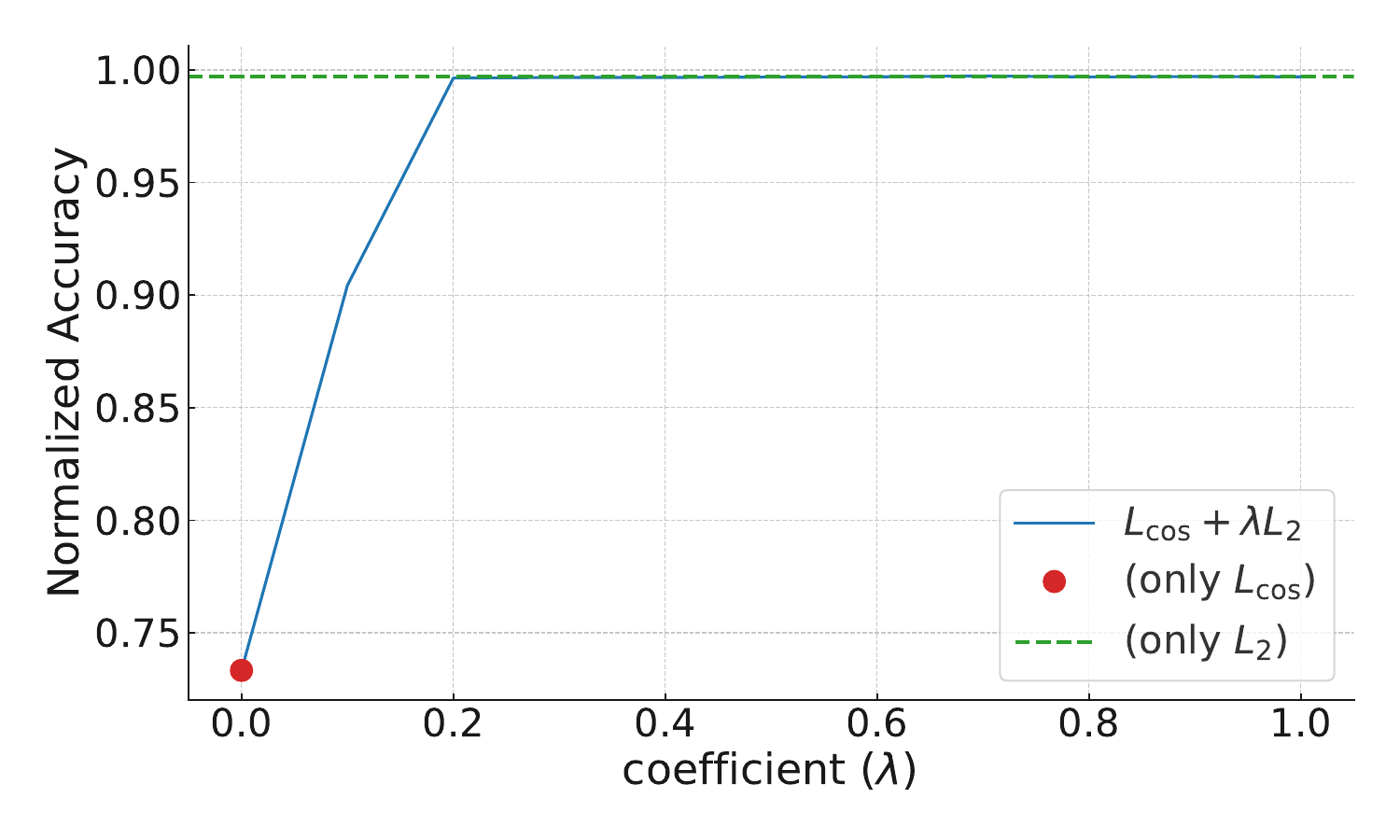}
    \caption{
    Normalized accuracy of \method{} when trained with cosine similarity, the standard reconstruction objective, and their combination.
    }
    \label{fig:cosine_vs_l2_loss_ablation}
\end{figure}

\subsection{Impact of factorization order in offset generation}
\label{app:factorization_order}

In the main paper, the recovered layer-wise offset is factorized as
\(
\hat{\bm{\beta}}_t^{(l)}=\bm{\beta}_{t,g}^{(l)}\bm{\beta}_{s}^{(l)},
\)
where the task-conditioned factor \(\bm{\beta}_{t,g}^{(l)}\) is generated by \(h^{(l)}\) and \(\bm{\beta}_{s}^{(l)}\) is shared across tasks.
Here we study the reverse order,
\(
\hat{\bm{\beta}}_t^{(l)}=\bm{\beta}_{s,\mathrm{alt}}^{(l)}\bm{\beta}_{t,g,\mathrm{alt}}^{(l)},
\)
where the shared factor is placed first and the task-conditioned factor is placed second.
This ablation tests whether the ordering of the two low-rank factors affects the trade-off between recovery quality and memory.

\cref{fig:factorization_order_comparison} shows that reversing the factorization order can reduce memory, but generally leads to lower normalized accuracy.
The standard formulation used in \method{} provides a better accuracy--memory trade-off across ranks, indicating that the ordering of the task-conditioned and shared factors is important for effective recovery.

\begin{figure}[h!]
    \centering
    \includegraphics[width=0.7\columnwidth]{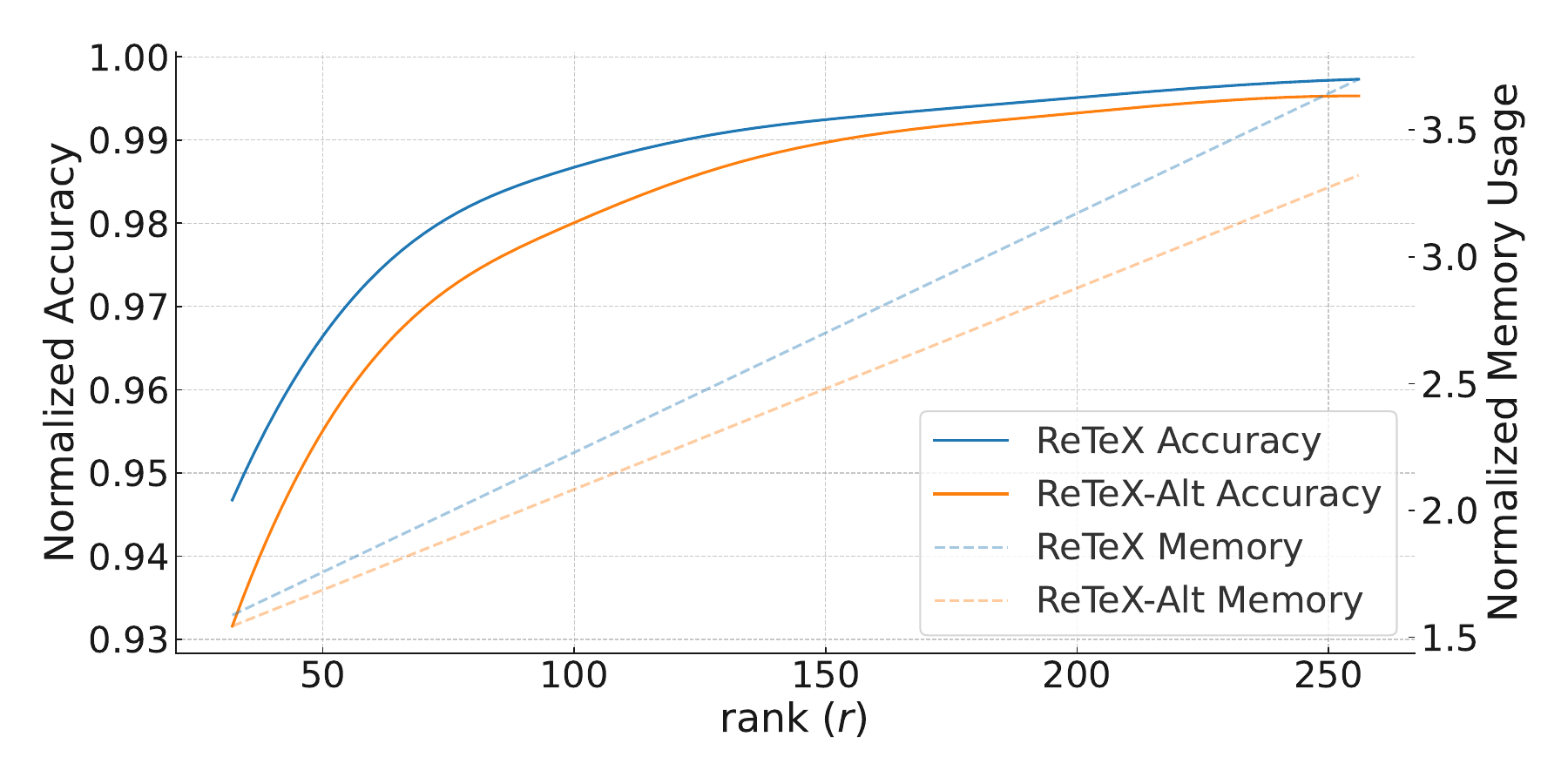}
    \caption{
    Comparison of normalized accuracy and normalized memory usage for the standard \method{} factorization and the reversed-order alternative.
    }
    \label{fig:factorization_order_comparison}
\end{figure}

\subsection{\method{} application with shared layers}
\label{app:shared_layers}

The default \method{} formulation predicts offsets independently for each layer.
To study whether this layer-wise design is necessary, we construct a shared-layer variant in which layers with the same parameter shape reuse the same offset generator and shared offset factor.
More concretely, for layers that belong to the same shape group \(g\), the recovered offset is written as
\(
\hat{\bm{\beta}}_t^{(l)}=h^{(g)}(\ve_t)\bm{\beta}_{s}^{(g)},
\)
so that both the generator and the shared factor are tied across all layers in the group.
This variant tests whether recovery can be simplified by sharing parameters across structurally similar layers.

\cref{fig:retex_compare_layer_share} shows that sharing across layers consistently weakens recovery quality.
Even though parameter sharing can reduce model complexity, it prevents the recovery module from modeling layer-specific interference patterns.
These results support the design choice of estimating offsets independently for each layer in the main method.

\begin{figure}[h!]
    \centering
    \includegraphics[width=0.7\linewidth]{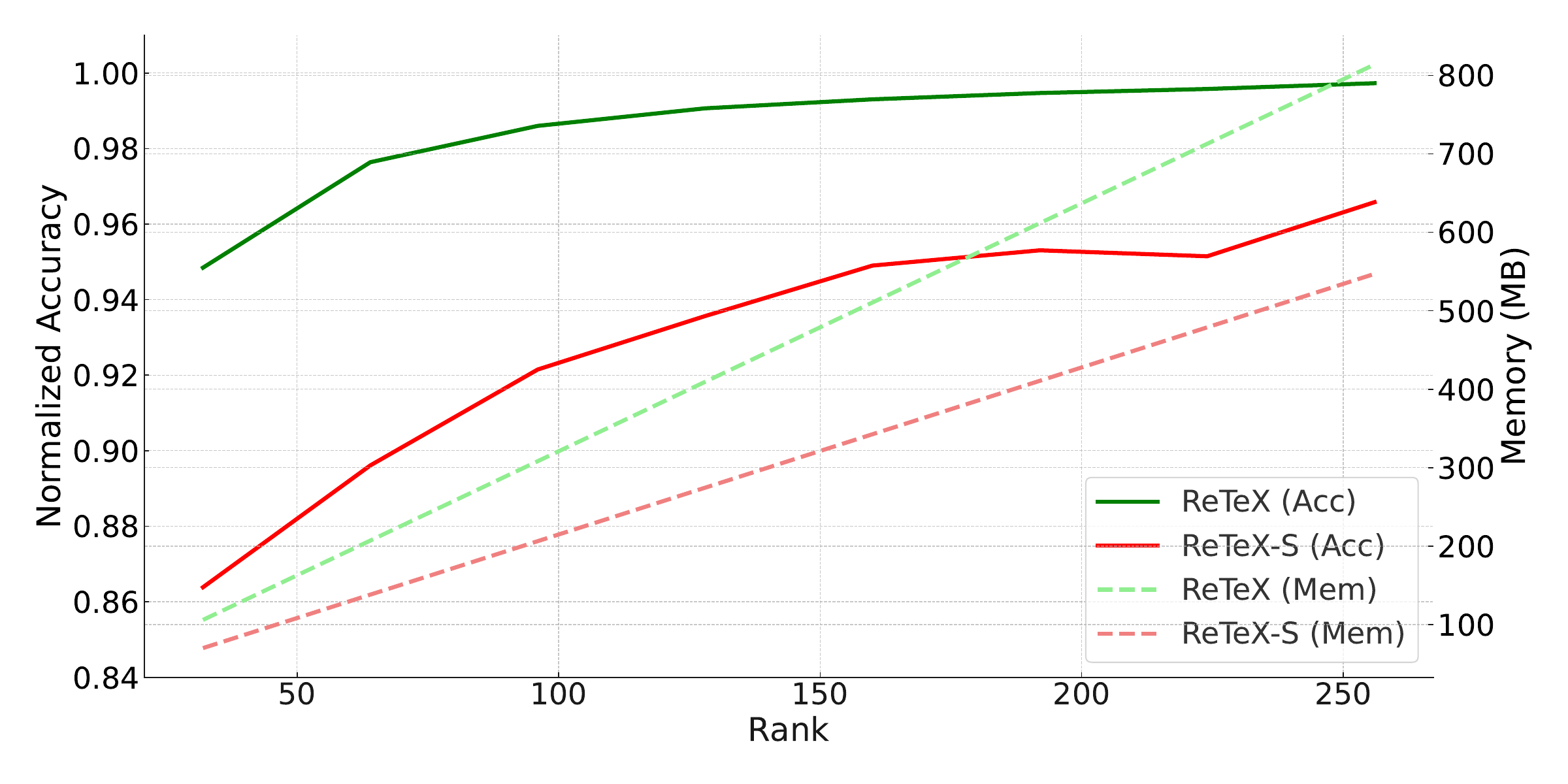}
    \caption{
    Comparison between the default layer-wise \method{} and the shared-layer variant.
    }
    \label{fig:retex_compare_layer_share}
\end{figure}
\FloatBarrier

\section{Additional analyses}
\label{app:additional_analyses}

This section provides additional analyses that complement the main results and clarify several empirical aspects of \method{}.

\subsection{Soft-recovery interpolation mechanism}
\label{app:soft_recovery_interpolation}

The unseen-task setting uses soft recovery over seen-task experts rather than committing to a single hard task ID.
This design allows \method{} to interpolate knowledge from multiple seen experts when the input does not clearly belong to one of the seen tasks.
To check whether the improvement depends on a particular interpolation strategy, we compare three interpolation mechanisms under the same training budget.

\begin{table}[h!]
\caption{\textbf{Comparison of Stage 2 interpolation mechanisms under the same training budget.}
\label{tab:soft_recovery_interpolation}
}
\centering
\setlength{\tabcolsep}{4pt}
\begin{tabular}{llcc}
\toprule
\textbf{Sampler} & \textbf{SEEN} & \textbf{UNSEEN} \\ 
\midrule
Uniform + softmax & 90.4 & 62.5 \\
Gaussian + softmax & 90.4 & 62.5 \\
Dirichlet ($\alpha = 1$) & $\mathbf{90.4}$ & $\mathbf{63.4}$ \\
\bottomrule
\end{tabular}
\end{table}

As shown in \cref{tab:soft_recovery_interpolation}, the Dirichlet sampler achieves the best unseen-task accuracy while preserving seen-task accuracy.
This result supports using Dirichlet-sampled soft targets to train recovery over the simplex of seen experts.
It also suggests that the benefit of soft recovery comes from learning smooth expert interpolation, rather than from a specific hard assignment to a single seen task.

\subsection{Task identification under small reference sets}
\label{app:small_reference_sets}

We next study how the number of reference inputs affects task-agnostic inference.
The task identification module builds a task memory bank from a small number of reference inputs per task.
Therefore, it is important to verify whether \method{} remains reliable when only a limited reference set is available.

We compare \method{} with Twin-Merging, which also handles task-ID-unknown inputs but relies on an input-dependent routing mechanism.
This comparison highlights whether the proposed task identification and recovery pipeline can remain competitive even with a small reference set.

\begin{figure}[t]
\centering
\includegraphics[width=0.78\linewidth]{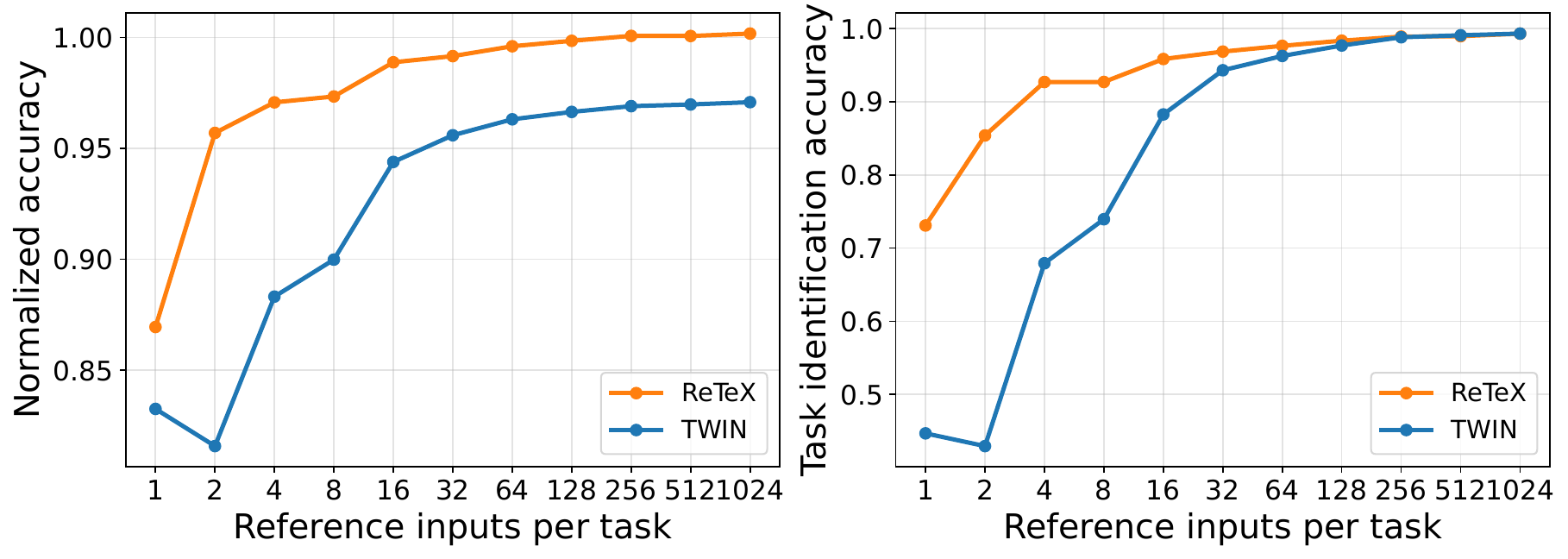}
\caption{Reference-input sweep for task-agnostic inference. \method{} remains competitive with few reference inputs and improves as the task memory bank becomes more reliable.}
\label{fig:reference_input_sweep}
\end{figure}

As shown in \cref{fig:reference_input_sweep}, \method{} remains competitive with Twin-Merging even when the task memory bank is constructed with only a small number of reference inputs.
As the number of reference inputs increases, task signatures become more reliable, leading to improved task identification and final accuracy.
These results indicate that \method{} does not require a large reference set to operate effectively under task-ID-unknown inference.

\subsection{Prediction-flip analysis for WSC}
\label{app:wsc_flip_analysis}

To better understand why WSC remains challenging in the NLP suite, we analyze prediction stability around the task expert.
Prediction-flip rate measures the fraction of validation samples whose predictions differ from the task expert when parameters are perturbed around the expert solution.
A higher flip rate indicates that small parameter deviations from the expert can cause larger prediction-level changes.

\begin{figure}[t]
\centering
\includegraphics[width=0.88\linewidth]{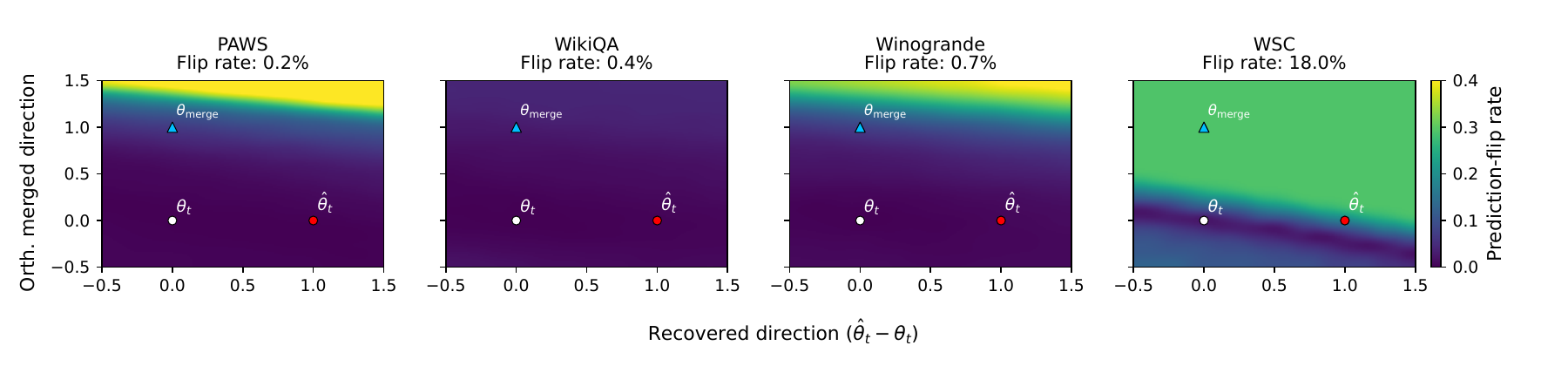}
\caption{Prediction-flip landscapes around task experts. Compared with high-recovery tasks such as PAWS, WikiQA, and Winogrande, WSC exhibits a much larger flip rate, indicating higher sensitivity to small parameter recovery errors.}
\label{fig:wsc_flip_landscape}
\end{figure}

As shown in \cref{fig:wsc_flip_landscape}, WSC exhibits a substantially higher prediction-flip rate than the other analyzed NLP tasks.
This suggests that WSC is more sensitive to small parameter recovery errors.
Therefore, even when the recovered parameters are close to the task expert in parameter space, small deviations can lead to larger changes in final predictions, explaining the remaining performance gap on WSC.
\FloatBarrier

\section{Limitations}
\label{app:limitations}

Despite its effectiveness, \method{} still has several limitations.
First, unlike purely static merging methods that directly deploy a single merged checkpoint, \method{} requires an additional recovery-training stage after the merged checkpoint is constructed.
Although this training is substantially lighter than full multi-task retraining and uses parameter supervision only, it still makes the overall pipeline more involved than directly deploying a merged checkpoint.
At the same time, this additional cost can still be more practical than many dynamic merging approaches that train routers or gating modules with task data and maintain multiple task-specific components at inference~\cite{lu2024twin,tang2024merging,ye2025dynamic}.
In particular, \method{} does not require storing or revisiting full task datasets during recovery training, which keeps the post-merging training stage relatively lightweight.

Second, task-agnostic deployment in \method{} still requires a small set of reference inputs to construct the task memory bank.
This means that \method{} is not fully data-free when task identification is required.
However, this requirement is less restrictive than in many router-based dynamic merging methods, which typically require task data during router training or model-merging stages~\cite{lu2024twin,tang2024merging,ye2025dynamic}.
In contrast, the reference inputs in \method{} are only used offline to compute compact task signatures, and these signatures can be prepared together with task experts before model merging.
As a result, \method{} does not need to retain raw data after expert release or during the merging process itself.
Nevertheless, dependence on reference inputs remains a practical limitation, and removing the need for such data while preserving reliable task identification is an important direction for future work.
\FloatBarrier

\end{document}